\theoremstyle{plain}
\newtheorem{theorem}{Theorem}[section]
\newtheorem{lemma}[theorem]{Lemma}
\newtheorem{definition}[theorem]{Definition}
\newtheorem{assumption}[theorem]{Assumption}
\newtheorem*{remark}{Remark}
\DeclareMathOperator*{\argmax}{arg\,max}
\DeclareMathOperator*{\supp}{supp}
\title{Adaptive Experimental Design for Policy Learning}
\author[1]{Masahiro Kato}
\author[2]{Kyohei Okumura}
\author[3]{Takuya Ishihara}
\author[4]{Toru Kitagawa}
\affil[1]{Data Analytics Team, Mizuho–DL Financial Technology, Co., Ltd.}
\affil[2]{Department of Economics, Northwestern University}
\affil[3]{Graduate School of Economics and Management, Tohoku University}
\affil[4]{Department of Economics, Brown University}
\begin{document}

\maketitle

\begin{abstract}
This study investigates the contextual best arm identification (BAI) problem, aiming to design an adaptive experiment to identify the best treatment arm conditioned on contextual information (covariates). We consider a decision-maker who assigns treatment arms to experimental units during an experiment and recommends the estimated best treatment arm based on the contexts at the end of the experiment. The decision-maker uses a policy for recommendations, which is a function that provides the estimated best treatment arm given the contexts. In our evaluation, we focus on the worst-case \emph{expected regret}, a relative measure between the expected outcomes of an optimal policy and our proposed policy. We derive a lower bound for the expected simple regret and then propose a strategy called \emph{Adaptive Sampling-Policy Learning} (PLAS). We prove that this strategy is minimax rate-optimal in the sense that its leading factor in the regret upper bound matches the lower bound as the number of experimental units increases.
\end{abstract}


\section{Introduction}

In this study, we design an \emph{adaptive experiment for policy learning}. We consider the problem of decision-making given multiple \emph{treatment arms}, such as arms in slot machines, diverse therapies, and distinct unemployment assistance programs. The primary objective is to identify the \emph{best treatment arm} for individuals given covariates, often referred to as context, at the end of an experiment. For this purpose, we aim to learn a policy that recommends the conditional best treatment arm by using data adaptively collected via an experiment.

In our setting, at each round of an adaptive experiment, a decision-maker sequentially observes a context (covariate) and assigns one of the treatment arms to a research subject based on past observations and the observed contexts. At the end of the experiment, the decision-maker recommends an estimated best treatment arm conditional on a context.

We design the adaptive experiment by developing a strategy that the decision-maker follows. A strategy is defined as a pair of a sampling rule and a recommendation rule. In the adaptive experiment, the decision-maker assigns treatment arms following the sampling rule during the experiment and recommends a treatment arm following the recommendation rule at the end of the experiment.

We measure the performance of a strategy using the expected simple regret, which is the difference between the maximum expected outcome that could be achieved with full knowledge of the distributions of the treatment arms and the expected outcome of the treatment arm recommended by the decision-maker's strategy. Our goal is to develop a strategy that minimizes the expected simple regret.

The challenge in our problem arises from the need for context-specific recommendations. Unlike prior studies that do not consider contextual information, developing a model that captures the relationship between context and outcomes becomes imperative.

To address this issue, we define a function suggesting a treatment arm given a context as a \emph{policy}. By using a policy, we restrict strategies to ones with recommendation rules using a policy learned from observations obtained from an adaptive experiment. Policy learning has been extensively studied in causal inference and reinforcement learning \citep{dudik2011doubly,Swaminathan2015,Kitagawa2018,AtheySusan2017EPL,ZhouZhengyuan2018OMPL}, but to the best of our knowledge, adaptive experimental design for policy learning has not been fully explored. Note that there are existing studies that address contextual BAI aiming to find the best treatment arm marginalized over the contextual distribution \citep{Russac2021,Kato2021Role,simchi2024experimentation}, motivated by the studies of efficient average treatment effect estimation \citep{Laan2008TheCA,Hahn2011,Kato2020adaptive,cook2023semiparametric}.

Our problem corresponds to a generalization of \emph{best arm identification} \citep[BAI,][]{Bubeck2009,Bubeck2011,Audibert2010}, an instance of the stochastic multi-armed bandit (MAB) problem \citep{Thompson1933,Lai1985}. Therefore, we refer to the problem as contextual fixed-budget BAI, as well as an adaptive experimental design for policy learning.

\paragraph{Contribution.}
We propose a strategy that assigns treatment arms following \emph{Adaptive Sampling} (AS) and recommends a treatment arm using \emph{Policy Learning} (PL). In the AS rule, a decision-maker assigns a treatment arm to an experimental unit based on a probability depending on the variances of the experimental units' outcomes. Because the variances are unknown, the decision-maker estimates them during the experiment and continues updating the assignment probability. At the end of the experiment, the decision-maker trains a policy using observations obtained in the experiment and recommends a treatment arm using the trained policy. We refer to our strategy as the \emph{PLAS strategy}.

To design an optimal strategy, we first develop a lower bound (theoretical limit) for the expected simple regret. Subsequently, we design a strategy and evaluate its upper bound (performance) by comparing it to the lower bound.

In the evaluation, given the inherent uncertainties, we use the minimax criterion for performance assessment, which evaluates the worst-case scenario among a set of distributions. The minimax approach has garnered attention in studies about experimental design, including BAI \citep{Bubeck2009,Bubeck2011,Carpentier2016,Ariu2021,yang2022minimax,Komiyama2022}. A critical quantity in the minimax evaluation is the gap between the expected outcomes of the best treatment and the other suboptimal treatment arms, referred to as the average treatment effects in the literature of causal inference. The worst-case distributions are characterized by gaps approaching zero at a rate of order $\sqrt{T}$ \citep{Bubeck2009,Bubeck2011}, where $T$ denotes the sample size (total rounds of an adaptive experiment).

Our research identifies the leading factor in the lower bound as the variances of potential outcomes, also providing a variance-dependent sampling rule. We subsequently show that the PLAS strategy is asymptotically minimax optimal, as its foremost factor of the worst-case expected simple regret aligns with the lower bound.

In summary, our contributions include: (i) a lower bound for the worst-case expected simple regret; (ii) the PLAS strategy with a closed-form target assignment ratio, characterized by the variances of outcomes; and (iii) the asymptotic minimax optimality of the PLAS strategy. These findings contribute to a variety of subjects, including decision theory and causal inference, in addition to BAI.

\paragraph{Organization.}
The structure of this paper is as follows: Section~\ref{sec:problem_setting} defines our problem. Section~\ref{sec:minimax_lower} develops lower bounds for the worst-case expected simple regret. Section~\ref{sec:track_aipw} introduces the PLAS strategy, and Section~\ref{sec:asymp_opt} presents upper bounds for the proposed strategy and its asymptotic minimax optimality. Further related work is introduced in Appendix~\ref{sec:related_work}.

\section{Problem Setting}
\label{sec:problem_setting}
This study considers an adaptive experiment with a fixed budget (sample size) $T\in\mathbb{N}$, a set of treatment arms $[K] \coloneqq \{1, 2, \dots, K\}$, and a decision-maker who aims to identify the context-conditional best treatment arm. In each round $t \in [T]$, experimental units sequentially visit, and the decision-maker can assign treatment arms to them. At the end of the experiment, the decision-maker recommends an estimated context-conditional best treatment arm. 

\subsection{Potential Outcomes} 
Following the Neyman-Rubin causal model \citep{Neyman1923,Rubin1974}, let $Y^a\in\mathcal{Y}$ be a potential random outcome of treatment arm $a\in [K]$, where $\mathcal{Y}\subset \mathbb{R}$ denotes a set of possible outcomes. We also define $X \in \mathcal{X}$ as a context, also called covariates, that characterizes an experimental unit, where $\mathcal{X}\subset \mathbb{R}^d$ denotes a set of possible $d$-dimensional covariates. We define a tuple $(Y^1, \dots, Y^K, X)$ as a $(\mathcal{Y}^K \times \mathcal{X})$-valued random variable following a probability distribution $P \in \mathcal{M}(\mathcal{Y}^K \times \mathcal{X})$, where $\mathcal{M}(\mathcal{T})$ denotes the set of all Borel probability measures on a topological space $\mathcal{T}$.
Let $\mathbb{E}_P$ and $\mathrm{Var}_{P}$ be the expectation and variance operators under $P$, respectively.
For each $P \in\mathcal{P}$, let us denote the first and second moments of the potential outcome as $\mu^a(P)(x) \coloneqq \mathbb{E}_P[Y^a \mid X=x]$ and $\nu^a(P)(x) \coloneqq \mathbb{E}_{P}[(Y^a)^2\mid X=x]$, respectively. 

In our setting, a distribution $P$ of $(Y^1,\dots, Y^K, X)$ belongs to a bandit model $\mathcal{P}$ defined below.
We refer to the bandit model as a location-shift bandit model.
\begin{definition}[Location-shift bandit model]
\label{def:ls_bc}
Let $(\sigma^a)^2 \colon \mathcal{X} \to (0, +\infty)$ be a function that is exogenously given but \emph{unknown} to the decision-maker. Let $\zeta$ be a distribution of $X$ that is exogenously given but \emph{unknown} to the decision-maker, which is supported on $\mathcal{X}$. 
Then, a \emph{location-shift bandit model} $\mathcal{P}$ is defined as follows:
\begin{align}
    &\mathcal{P} \coloneqq \mathcal{P}_\zeta \coloneqq \Big\{P \in \mathcal{M}(\mathcal{Y}^K \times \mathcal{X})
    \colon
    \forall a \in [K]\ \ \ \forall x \in \mathcal{X},\ \ \ \mathrm{Var}_P(Y^a \mid X=x) = \sigma^a(x), \\
    &\ Y^a - \mu^a(P)(x)\ \mathrm{is}\ \mathrm{zeo}\mathchar`-\mathrm{mean}\ \mathrm{subgaussian}\ \mathrm{given}\ x,\ \ \ \mu^a(P)(x) \in (-\infty, +\infty),\ \ \ \mathrm{marg}_{\mathcal{X}}(P) = \zeta
    \Big\},
\end{align}
where $\mathrm{marg}_{\mathcal{X}}(P)$ denotes the marginal distribution of $P$ on $\mathcal{X}$.
\end{definition}
Our lower and upper bounds are characterized by $\sigma^a(x)$, given independently of the experiment. Location-shift models are a commonly employed assumption in statistical analysis \citep{LehmCase98}. A key example is a Gaussian distribution, where for all $P$, the variances are fixed and only mean parameters shift. Note that we can omit the condition $|\nu^a(P)(x)| < \overline{C}$ from the boundedness of the variance. However, we introduce it to simplify the definition of our strategies.

\subsection{Adaptive Experiment}
We consider a decision-maker who aims to identify the best treatment arm 
\[a^*(P)(x) \in \argmax_{a\in [K]}\mu^a(P)(x)\] 
for each context $x \in \mathcal{X}$ through an adaptive experiment.
A fixed number of rounds $T$, called the budget or sample size, is exogenously given.
At each round $t \in [T],$ the decision-maker uses the following procedure:
\begin{enumerate}[topsep=0pt, itemsep=0pt, partopsep=0pt, leftmargin=*]
    \item A potential outcome $(Y^1_t, Y^2_t, \dots, Y^K_t, X_t)$ is generated from $P$.
    \item The decision-maker observes a context $X_t$.
    \item The decision-maker assigns treatment arm $A_t$ to an experimental unit based on past observations $(X_s, A_s, Y_s)^{t-1}_{s=1}$ and the context $X_t$. 
    \item The decision-maker observes an outcome of the assigned treatment arm, $Y_t = \sum_{a\in[K]}\mathbbm{1}[A_t = a]Y^a_t$.
\end{enumerate}
This process is referred to as the \emph{exploration phase}.
Note that outcomes from unassigned treatment arms remain unobserved.
This setting is called the bandit feedback or Rubin causal model \citep{Neyman1923,Rubin1974}. 
Through the exploration phase, the decision-maker obtains observations $(X_t, A_t, Y_t)^T_{t=1}$. After round $T$, the decision-maker recommends an estimated best treatment arm $\widehat{a}_T(x)$ for each context $x$ given the observations $(X_t, A_t, Y_t)^T_{t=1}$.

\subsection{Strategy with Policy Learning} 
A strategy of the decision-maker defines which treatment arm to assign in each period during the exploration phase and which treatment arm to recommend as an estimated best arm for each context. A strategy is defined as a pair $((A_t)_{t\in[T]}, \widehat{a}_T)$ in which:
\begin{itemize}[topsep=0pt, itemsep=0pt, partopsep=0pt, leftmargin=*]
    \item the \emph{sampling rule} $(A_t)_{t\in[T]}$ determines which treatment arm $A_t$ to assign in round $t$ based on the past observations. In other words, $A_t$ is $\mathcal{G}_t$-measurable, where $\mathcal{G}_t \coloneqq \sigma(X_1,A_1,Y_1, \dots, X_{t-1}, A_{t-1}, Y_{t-1}, X_t)$ for each $t \in [T]$.
    \item the \emph{recommendation rule} $\widehat{a}_T:\mathcal{X}\to [K]$ returns an estimated best treatment arm for each context $\widehat a_T$ based on the observations collected during the exploration phase. In other words, for each $x \in \mathcal{X}$, $\widehat a_T(x)$ is $\mathcal{F}_T$-measurable, where $\mathcal{F}_T \coloneqq \sigma(X_1, A_1, Y_1, \dots, X_T, A_T, Y_T)$.
\end{itemize}

\paragraph{Policy.}
In this study, we impose a restriction on a class of recommendation rules that the decision-maker can use.
We assume that there is an \emph{exogenously} given class of policies $\Pi$, whose typical element is a policy $\pi \colon [K] \times \mathcal{X} \to [0,1]$ that is measurable and satisfies $\sum_a \pi(a, x) = 1$ for each $x$.
Here,
$\pi(a,x)$ denotes a probability that the decision-maker recommends treatment arm $a \in [K]$ as an estimated best treatment arm for context $x$. With a slight abuse of notation, we write $\pi(a \mid x)$ instead of $\pi(a,x)$.
We require the decision-maker to obtain an estimator $\widehat{a}_T(x)$ of $a^*(P)(x)$ as follows:
first, the decision-maker constructs a policy $\pi \in \Pi$ based on the observations collected during the exploration phase; then, $\widehat{a}_T(x)$ is drawn from $\pi(\cdot \mid x)$ for each $x$.

\paragraph{Optimal policy.}
We evaluate the performance of policies via a simple regret.
The \emph{value} of policy $\pi\in\Pi$ under $P$ is the expected outcome when the decision-maker uses a policy $\pi$, which is defined as
\[
Q(P)(\pi) \coloneqq \mathbb{E}_P\left[\sum_{a\in[K]}\pi(a\mid X)\mu^{a}(P)(X); \pi\right],
\] 
and the optimal policy within class $\Pi$ is defined as $
\pi^*(P) \coloneqq \argmax_{\pi\in\Pi} Q(P)(\pi)$. 

Given a strategy with a policy $\widehat{\pi}\in\Pi$ of the decision-maker, we define a simple regret for each context $x \in \mathcal{X}$ under $P \in \mathcal{P}$ as
\begin{align*}
    r_T(P)\left(\widehat{\pi}\right)(x) \coloneqq & \sum_{a\in[K]}\pi^*(P)(a\mid x)\mu^{a}(P)(x) - \mu^{\widehat{a}_T(x)}(P)(x),
\end{align*}
and the marginalized simple regret is defined as 
\begin{align*}
R_T(P)\left(\widehat{\pi}\right) \coloneqq \mathop{\mathbb{E}}_{X \sim \zeta}\left[ r_T(P)\left(\widehat{\pi}\right)(X)\right] 
&= Q(P)(\pi^*(P)) - Q(P)\left(\widehat{\pi}\right). 
\end{align*}
Then, we define the \emph{expected simple regret} as  $\mathbb{E}_P\left[R_T(P)\left(\widehat{\pi}\right)\right]$, 
where the expectation is taken over $\widehat{\pi}$. This expected simple regret is our performance measure of interest. We also refer to the expected simple regret $\mathbb{E}_P\left[R_T(P)\left(\widehat{\pi}\right)\right]$ as the \emph{policy regret}.
The decision-maker aims to identify the best treatment arm with a smaller expected simple regret.

\paragraph{Notation.} Let $o(g(x))$ be Landau's notation, and $f(x) = o(g(x))$ implies that $\forall \varepsilon > 0\ \exists x_0\ \forall x > x_0\colon |f(x)| < \varepsilon g(x)$ holds. Let let $\mathrm{thre}(A, a, b) \coloneqq \max\{ \min\{A,a\},b\}$ be a truncation function.

\section{Regret Lower Bound}
\label{sec:minimax_lower}
This section presents a lower bound on the expected simple regret $\mathbb{E}_P\left[R_T(P)\left(\widehat{\pi}\right)\right]$. The lower bound is provided under weak conditions on the policy. Not only does the lower bound offer insights into the difficulty of the problem, but it also helps argue which sample allocations are optimal.

\subsection{Restriction and Complexity of a Policy Class}
To establish lower bounds, we introduce a moderate precondition related to the strategy space of the decision-maker. Specifically, we require that, in the limit, strategies choose all the arms with an equal probability when, for a given covariate $x$, the expected outcomes associated with all arms are identical. Strategies adhering to this criterion are termed \emph{null consistent strategies}.

\begin{definition}[Null consistent strategy]
We say a strategy is \emph{null consistent} if the following condition is satisfied:
If $\mu^1(P)(x) = \mu^2(P)(x) = \cdots = \mu^K(P)(x)$, then for any $a, b \in [K]$, we have
\[
\Big|\mathbb{P}_{P}\left(\widehat{a}_T(X) = a \mid  X = x\right) - \mathbb{P}_{P}\left(\widehat{a}_T(X) = b \mid  X = x\right)\Big| \to 0 \quad (T\to\infty). 
\]
\end{definition}
Under any null consistent strategies, $\big|\mathbb{P}_{P}\left(\widehat{a}_T(X) = a \mid  X = x\right) - 1/K\big| = o(1)$ holds for each $a\in[K]$ as $T\to\infty$ if $\mu^1(P)(x) = \mu^2(P)(x) = \cdots = \mu^K(P)(x)$. 

Next, we introduce the Natarajan dimension, a metric that measures the complexity of a policy class $\Pi$ \citep{Natarajan1989}. Our lower bounds are characterized by the Natarajan dimension.
\begin{definition}[Natarajan dimension]
We say that $\Pi$ shatters $M$ points $\{s_1, s_2, \dots, s_M\} \subseteq \mathcal{X}$
if there exist $f_1, f_{-1}: \{s_1, s_2, \dots, s_M\}\to [K]$ such that 
\begin{enumerate}[topsep=0pt, itemsep=0pt, partopsep=0pt, leftmargin=*]
    \item for any $j \in [M]$, $f_{-1}(s_j) \neq f_1(s_j)$ holds;
    \item for any $\bm{\sigma} \coloneqq  \{\sigma_1, \sigma_2, \dots, \sigma_M\}\in \{\pm 1\}^M$, there exists a policy $\pi\in\Pi$ such that for any $j\in[M]$, it holds that $\pi(s_j) = \begin{cases}
            f_1(s_j) & \mathrm{if}\ \ \ \sigma_j = 1\\
            f_{-1}(s_j) & \mathrm{if}\ \ \ \sigma_j = -1
        \end{cases}$. 
\end{enumerate}
The Natarajan dimension of $\Pi$, denoted by $d_{\mathrm{N}}(\Pi)$, is the maximum cardinality of a set shattered by $\Pi$. 
\end{definition}
Let $d_{\mathrm{VC}}(\Pi)$ be the \emph{Vapnik-Chervonenkis (VC) dimension} of $\Pi$. 
Note that when $K = 2$, the Natarajan dimension is equivalent to the VC dimension; that is, when $K = 2$, $d_{\mathrm{VC}}(\Pi) = d_{\mathrm{N}}(\Pi)$ holds.  

\subsection{Regret Lower Bounds}
We derive the following lower bounds of the expected simple regret, which hold for any null consistent strategies and depend on the complexity of a policy class $\Pi$, measured by the Natarajan dimension. The proof is shown in Appendix~\ref{sec:proof_thms}. 
\begin{theorem}
\label{thm:null_lower_bound}
There exists a distribution $\zeta$ on $\mathcal{X}$ such that for any $K \geq 2$, any null consistent strategy $\pi$ with a policy class $\Pi$ such that $d_{\mathrm{N}}(\Pi) = M$ satisfies
\begin{align*}
&\sup_{P\in\mathcal{P}_\zeta} \sqrt{T}\mathbb{E}_P\left[R(P)\left(\pi\right)\right] \geq \frac{1}{8}\mathop{\mathbb{E}}_{X \sim \zeta}
\left[\sqrt{M\sum_{a\in[K]}\left(\sigma^a(X)\right)^2}\right] + o(1)\quad \mathrm{as}\ \ T\to \infty,
\end{align*}
where $\mathop{\mathbb{E}}_{X \sim \zeta}$ denotes the expectation of a random variable $X$ under the probability distribution $\zeta$. 
\end{theorem}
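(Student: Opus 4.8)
The plan is to follow the standard recipe for minimax lower bounds in fixed‑budget best‑arm identification: fix a hard context distribution supported on a set shattered by $\Pi$, reduce the marginalized regret to a sum of essentially independent per‑context sub‑problems, and in each sub‑problem run a two‑point / change‑of‑measure argument against a ``null'' instance in which all arm means coincide.

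\emph{Choice of instances.} Since $d_{\mathrm N}(\Pi)=M$, fix a shattered set $\{s_1,\dots,s_M\}\subseteq\mathcal X$ with witnesses $f_1,f_{-1}$, and take $\zeta$ uniform on $\{s_1,\dots,s_M\}$. Because every $P\in\mathcal P_\zeta$ has $X$‑marginal $\zeta$, the count $N(s_j):=\#\{t\le T:X_t=s_j\}$ is $\mathrm{Bin}(T,1/M)$ under \emph{every} strategy and \emph{every} $P$; hence $\mathbb E[N(s_j)]=T/M$ and $\sum_{a\in[K]}\mathbb E_P[N_a(s_j)]=T/M$, where $N_a(s_j)$ is the number of rounds with context $s_j$ on which arm $a$ is pulled. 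Let $P_0\in\mathcal P_\zeta$ be a Gaussian location‑shift instance with $\mu^a(P_0)(s_j)=0$ for all $a,j$; by null consistency, $\mathbb P_{P_0}(\widehat a_T(X)=a\mid X=s_j)=1/K+o(1)$ for all $a,j$. Around each $s_j$ I then build a small bundle of alternatives obtained from $P_0$ by shifting the conditional means at $s_j$ by amounts of order $\Delta_j$, keeping the same variances $(\sigma^a)^2$ (so they stay in $\mathcal P_\zeta$), designed so that (i) the within‑class optimum $\pi^*$ at $s_j$ is determined, and (ii) the alternatives that prescribe different recommendations at $s_j$ are separated in such a way that telling them apart forces the learner to devote $\gtrsim(\sigma^a(s_j))^2/\Delta_j^2$ pulls to arm $a$; the witness pair $f_1(s_j),f_{-1}(s_j)$ enters precisely to guarantee that these competing recommendations are realizable inside $\Pi$, so that regret is measured against the correct $\pi^*$. (The prototypical bundle at $s_j$ puts $\bigl(\mu^{f_1(s_j)},\mu^{f_{-1}(s_j)}\bigr)\in\{(\pm\Delta_j,0),(0,\pm\Delta_j)\}$, the configuration that makes the learner pay for two variances rather than one.)

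\emph{Core estimate.} Lower‑bound $\sup_P\mathbb E_P[R_T(P)(\pi)]$ by the average over the constructed family; since $R_T(P)(\pi)=\sum_j\zeta(s_j)\,r_T(P)(\pi)(s_j)$, the average decouples into $M$ per‑point contributions. For each $j$, pair the hypotheses that disagree on the recommendation at $s_j$ and apply the Bretagnolle--Huber inequality (or Pinsker): the sum of the two recommendation‑error probabilities is $\gtrsim\exp(-\mathrm{KL})$, where by the chain rule for KL divergence of bandit trajectories $\mathrm{KL}=\sum_{a}\mathbb E_{P_0}[N_a(s_j)]\,\mathrm{KL}\!\bigl(\mathcal N(0,(\sigma^a(s_j))^2),\mathcal N(\pm\Delta_j,(\sigma^a(s_j))^2)\bigr)=\tfrac{\Delta_j^2}{2}\sum_{a}\mathbb E_{P_0}[N_a(s_j)]/(\sigma^a(s_j))^2$ up to the bundle bookkeeping. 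The budget identity $\sum_a\mathbb E_{P_0}[N_a(s_j)]=T/M$ then lets me take $\Delta_j$ of order $\sqrt{M\sum_{a\in[K]}(\sigma^a(s_j))^2/T}$ so that $\mathrm{KL}=O(1)$ no matter how the learner allocates the $T/M$ pulls at $s_j$ across arms; each per‑point error probability is then bounded below by a constant, and the corresponding per‑point regret is $\gtrsim\Delta_j$. Summing over $j$ and multiplying by $\sqrt T$ gives $\sup_P\sqrt T\,\mathbb E_P[R_T]\gtrsim\frac1M\sum_j\sqrt{M\sum_{a}(\sigma^a(s_j))^2}=\mathbb E_{X\sim\zeta}\bigl[\sqrt{M\sum_{a}(\sigma^a(X))^2}\bigr]$, and tracking constants through the Pinsker/Bretagnolle--Huber step produces the factor $\tfrac18$; the $o(1)$ absorbs the null‑consistency remainder and the lower‑order terms in the information‑theoretic bounds.

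\emph{Main obstacle.} The delicate part is the per‑point construction together with the information‑theoretic accounting: the change‑of‑measure step must be made to output the full variance‑weighted sum $\sum_{a\in[K]}(\sigma^a(s_j))^2$ — i.e.\ to force the learner to pay for \emph{every} arm's variance, not merely for one arm or for the minimal one — while simultaneously (a) keeping each alternative inside the subgaussian location‑shift model $\mathcal P_\zeta$ with the prescribed variances, (b) pinning down $\pi^*$ so that regret is compared against the right within‑class optimum (this is exactly where the shattering and the witness pair are used), and (c) controlling the random, strategy‑ and $P$‑dependent allocations $N_a(s_j)$ — mitigated by always expanding the KL around the common null $P_0$, but still requiring care because the alternatives are bundles rather than single perturbations and because the $o(1)$ from null consistency must be made uniform over the finite family. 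The remaining ingredients — averaging over sign patterns, the decomposition of the marginalized regret, and the optimization of the $\Delta_j$'s against the budget — are then routine.
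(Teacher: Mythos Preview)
There is a genuine gap in the core estimate. The KL identity you write,
\[
\mathrm{KL}=\tfrac{\Delta_j^2}{2}\sum_{a\in[K]}\frac{\mathbb E_{P_0}[N_a(s_j)]}{(\sigma^a(s_j))^2},
\]
can only arise if the alternative shifts \emph{every} arm's conditional mean at $s_j$ by $\pm\Delta_j$; but then either all arms remain tied (zero regret) or the best arm is not determined, so this is not consistent with any alternative that produces the regret $\Delta_j$ you later use. More fatally, even granting the formula, your claim that the budget identity $\sum_a\mathbb E_{P_0}[N_a(s_j)]=T/M$ makes $\mathrm{KL}=O(1)$ ``no matter how the learner allocates'' is false: with $\Delta_j^2=M\sum_b(\sigma^b(s_j))^2/T$ one gets
\[
\mathrm{KL}=\tfrac{1}{2}\Bigl(\sum_b(\sigma^b(s_j))^2\Bigr)\cdot\frac{M}{T}\sum_a\frac{n_a}{(\sigma^a(s_j))^2},
\]
and if the learner puts all $T/M$ pulls on the arm of smallest variance this equals $\tfrac{1}{2}\sum_b(\sigma^b)^2/\min_a(\sigma^a)^2$, which is not a universal $O(1)$. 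Hence Bretagnolle--Huber gives an exponentially small error probability for that allocation, and the argument collapses. Your ``prototypical bundle'' only perturbs the two witness arms $f_1(s_j),f_{-1}(s_j)$, which at best yields $(\sigma^{f_1(s_j)})^2+(\sigma^{f_{-1}(s_j)})^2$, not the full sum over $K$ arms.

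What is missing is the mechanism by which \emph{all} arms' variances are forced into the bound. In the paper's proof this is achieved not by a single alternative that moves many arms, but by letting nature randomize over \emph{which single arm} $d(s_j)$ is lifted (with prior weights $e^{d(s_j)}(s_j)$). For each fixed choice, only that one arm contributes to the change-of-measure, so $\mathrm{KL}\asymp\tfrac{T}{M}\,(\Delta^{d(s_j)})^2\,w(d(s_j)\mid s_j)/(\sigma^{d(s_j)})^2$ depends on the learner's allocation to that one arm. One then applies Pinsker per choice of $d(s_j)$, averages over the prior $e$, and plays a $\sup_e\inf_w$ game; the minimax theorem swaps the order, and the inner $\min_w\max_d\sqrt{(\sigma^d)^2/w(d\mid s_j)}$ is what produces $\sqrt{\sum_a(\sigma^a(s_j))^2}$ with optimizer $w^*(a\mid s_j)\propto(\sigma^a(s_j))^2$. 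Your proposal skips this randomization/minimax step entirely, and without it the full variance sum cannot appear.
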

Our lower bounds also depend on the variances of outcomes $Y^a$. 

When $K = 2$, we can obtain a tighter lower bound than the one in Theorem~\ref{thm:null_lower_bound}.
The proof is provided in Section~\ref{sec:proof_ref}.
\begin{theorem}
\label{thm:null_lower_bound_ref}
There exists a distribution $\zeta$ on $\mathcal{X}$ such that for $K = 2$, any null consistent strategy $\pi$ with policy class $\Pi$ such that $d_{\mathrm{VC}}(\Pi) = M$ satisfies
\begin{align*}
    &\sup_{P\in\mathcal{P}_\zeta} \sqrt{T}\mathbb{E}_P\left[R(P)\left(\pi\right)\right] \geq 
    \frac{1}{8}\mathop{\mathbb{E}}_{X \sim \zeta}\left[\sqrt{M\left(\sigma^{1}(X) + \sigma^{2}(X)\right)^2}\right] + o(1)\quad \mathrm{as}\ \ T\to \infty.
\end{align*}
\end{theorem}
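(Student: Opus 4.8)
The plan is to follow the same overall route as the proof of Theorem~\ref{thm:null_lower_bound} --- an Assouad-type reduction to a product of $M=d_{\mathrm{VC}}(\Pi)$ two-armed testing problems anchored at a set of points shattered by $\Pi$, followed by a change-of-measure lower bound on the Bayes error --- but to replace the per-point hard instance by a \emph{variance-proportional, symmetric} perturbation available only when $K=2$. The reason this yields $(\sigma^1+\sigma^2)^2$ rather than $\sum_a(\sigma^a)^2$ is the Cauchy--Schwarz (Neyman-allocation) identity $\inf_{w\in(0,1)}\big((\sigma^1)^2/w+(\sigma^2)^2/(1-w)\big)=(\sigma^1+\sigma^2)^2$; the construction is engineered so that the sampling rule cannot dodge paying this quantity.

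\textbf{Construction.} I would fix a set $\{s_1,\dots,s_M\}\subseteq\mathcal X$ shattered by $\Pi$ with witnesses $f_1\neq f_{-1}$, relabel arms pointwise so that $f_1(s_j)=1$ and $f_{-1}(s_j)=2$, and take $\zeta$ uniform on $\{s_1,\dots,s_M\}$; then $\mathbb E[N(s_j)]=T/M$ under \emph{every} $P\in\mathcal P_\zeta$, where $N(s_j)$ is the number of rounds with context $s_j$, because the contexts are i.i.d.\ $\zeta$ and are not influenced by the sampling rule. For each $\bm\omega\in\{\pm1\}^M$ let $P_{\bm\omega}\in\mathcal P_\zeta$ be Gaussian with variances fixed at $(\sigma^a(s_j))^2$ and conditional means at $s_j$ given by $\frac{\omega_j\rho_j}{\sigma^1(s_j)+\sigma^2(s_j)}\big(\sigma^1(s_j),\,-\sigma^2(s_j)\big)$, with gap parameters $\rho_j>0$ to be chosen. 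Then $P_{\bm\omega}\in\mathcal P_\zeta$, the arm-$1$-vs-arm-$2$ gap at $s_j$ is exactly $\rho_j$, and $a^*(P_{\bm\omega})(s_j)$ is $1$ when $\omega_j=1$ and $2$ when $\omega_j=-1$; by shattering the policy realizing $\bm\omega$ lies in $\Pi$ and attains the pointwise maximum, hence is optimal, so $r_T(P_{\bm\omega})(\pi)(s_j)=\rho_j\mathbbm{1}[\widehat a_T(s_j)\neq a^*(P_{\bm\omega})(s_j)]$. Putting a uniform prior on $\bm\omega$ and using $\sup_P\ge\mathbb E_{\bm\omega}$ reduces the claim to lower bounding $\tfrac1M\sum_{j=1}^M\rho_j\,\mathbb E_{\bm\omega}\big[\mathbb P_{P_{\bm\omega}}(\widehat a_T(s_j)\neq a^*(P_{\bm\omega})(s_j))\big]$.

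\textbf{Key step and conclusion.} Fix $j$, condition on the other coordinates $\bm\omega_{-j}$, and write $P^{+},P^{-}$ for the two resulting models ($\omega_j=\pm1$). They agree except for the reward kernel at $s_j$, where arm $a$'s mean shifts by $2\rho_j\sigma^a(s_j)/(\sigma^1(s_j)+\sigma^2(s_j))$, so the Gaussian KL \emph{per pull} of arm $a$ at $s_j$ is $2\rho_j^2/(\sigma^1(s_j)+\sigma^2(s_j))^2$ --- \emph{the same for $a=1$ and $a=2$}. By the divergence chain rule,
\[
\mathrm{KL}(P^{+}\,\|\,P^{-})=\mathbb E[N(s_j)]\cdot\frac{2\rho_j^2}{(\sigma^1(s_j)+\sigma^2(s_j))^2}=\frac{2T\rho_j^2}{M(\sigma^1(s_j)+\sigma^2(s_j))^2},
\]
\emph{no matter how the sampling rule splits its pulls between the two arms at $s_j$} --- this is exactly where the Neyman factor $(\sigma^1+\sigma^2)^2$ (rather than $\sum_a(\sigma^a)^2$) enters, and why no sampling rule can do better than Neyman allocation here. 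Applying Bretagnolle--Huber to $\{\widehat a_T(s_j)=2\}$ (whose complement is $\{\widehat a_T(s_j)=1\}$ since $K=2$) and averaging over $\bm\omega_{-j}$ shows the $j$-th summand is at least $\tfrac14\exp\!\left(-\frac{2T\rho_j^2}{M(\sigma^1(s_j)+\sigma^2(s_j))^2}\right)$. Taking $\rho_j$ proportional to $(\sigma^1(s_j)+\sigma^2(s_j))\sqrt{M/T}$ (so each exponent is a fixed constant and $\rho_j\to0$ at rate $1/\sqrt T$, keeping the instances in the worst-case regime) and optimizing the proportionality constant gives $\sup_{P\in\mathcal P_\zeta}\sqrt T\,\mathbb E_P[R_T(P)(\pi)]\gtrsim\frac1{\sqrt M}\sum_{j}(\sigma^1(s_j)+\sigma^2(s_j))=\mathbb E_{X\sim\zeta}\big[\sqrt{M(\sigma^1(X)+\sigma^2(X))^2}\big]$ up to an absolute constant; a crude use of Bretagnolle--Huber yields constant $e^{-1/2}/8$, while a sharper two-point testing inequality together with careful accounting of the lower-order terms is what recovers $1/8$ modulo $o(1)$.

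\textbf{Main obstacle.} The only non-mechanical ingredient is seeing \emph{which} perturbation to use: the mean shift at each $s_j$ must be distributed between the two arms in proportion to their standard deviations, so that a single observation from either arm carries exactly the same discriminating information. This ``informational symmetrization'' removes the sampling rule's sole lever --- it can no longer profit by concentrating pulls on the lower-variance arm, which is precisely the mechanism behind the weaker $\sum_a(\sigma^a)^2$ factor of Theorem~\ref{thm:null_lower_bound} --- and it has no counterpart for $K\ge3$, where the $\binom{K}{2}$ pairs of arms cannot in general be balanced simultaneously (hence the restriction to $K=2$). The remaining steps --- the shattered-product reduction, the Bretagnolle--Huber/Assouad bookkeeping, and the scalar optimization over $\rho_j$ --- go through exactly as in the proof of Theorem~\ref{thm:null_lower_bound}, with null-consistency playing only the same auxiliary role it does there.
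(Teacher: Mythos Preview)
Your argument is correct and reaches the same conclusion, but by a genuinely different route from the paper. The paper keeps the construction of Theorem~\ref{thm:null_lower_bound} --- a null $P^{\sharp,s}$ with both arms at a common level $m^s$ versus an alternative $Q$ in which only the best arm is shifted --- and then \emph{optimizes the free null level $m^s$} to minimize the KL, obtaining $\mathbb{E}_{P^{\sharp,s}}[L_T]=\frac{T}{2M}\,\Delta^2\big/\big(\tfrac{(\sigma^1)^2}{w_1}+\tfrac{(\sigma^2)^2}{w_2}\big)$; only after this step does $\inf_w$ produce $(\sigma^1+\sigma^2)^2$. Your construction bypasses both the $m^s$-optimization and the $\inf_w$ step: by choosing the mean displacement at $s_j$ in the direction $(\sigma^1(s_j),-\sigma^2(s_j))$, you make the per-pull KL equal across arms, so the divergence decomposition immediately yields a bound that is \emph{sampling-rule-free}. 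Conceptually this is the cleaner explanation of why the Neyman factor $(\sigma^1+\sigma^2)^2$ is unavoidable --- the decision-maker's only lever (arm allocation at $s_j$) is neutralized by design --- whereas the paper recovers it through a two-stage minimax. On the other hand, the paper's framework extends verbatim from the $K\ge3$ proof, with the $m^s$-optimization as the single extra ingredient, so it integrates more smoothly with the surrounding material.

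One point to tighten: Bretagnolle--Huber alone gives only the constant $e^{-1/2}/8$, and the gap to $1/8$ is not closed by a ``sharper two-point inequality'' in the usual sense but specifically by invoking null consistency. Concretely, introduce the null $P^0$ at $s_j$ with both means zero; by your own construction the per-pull KL between $P^0$ and $P^{\pm}$ is also arm-independent, equal to $\rho_j^2/\big(2(\sigma^1(s_j)+\sigma^2(s_j))^2\big)$, hence $\mathbb{E}_{P^0}[L_T(P^0,P^{+})]=\tfrac{T\rho_j^2}{2M(\sigma^1+\sigma^2)^2}$. Null consistency gives $\mathbb{P}_{P^0}(\widehat a_T(s_j)=1)=1/2+o(1)$, and Lemma~\ref{prp:helinger} (Pinsker) then yields $\mathbb{P}_{P^{+}}(\widehat a_T(s_j)\neq 1)\ge \tfrac12-\sqrt{\tfrac{T\rho_j^2}{4M(\sigma^1+\sigma^2)^2}}-o(1)$; optimizing $\rho_j$ now gives exactly the $1/8$ coefficient. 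So null consistency is not ``auxiliary'' here --- it is precisely what upgrades your constant from $e^{-1/2}/8$ to $1/8$, just as it does in the paper's argument.
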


Here, note that $\sum_{a\in\{1,2\}}\left(\sigma^a(x)\right)^2 \leq \left(\sigma^{1}(x) + \sigma^{2}(x)\right)^2$
holds for each $x\in\mathcal{X}$. 
Therefore, when $K = 2$, we use the lower bound in Theorem~\ref{thm:null_lower_bound_ref} and when $K\geq 3$, we use the one in Theorem~\ref{thm:null_lower_bound}.

\section{The PLAS Strategy}
\label{sec:track_aipw}
Our strategy consists of the following sampling and recommendation rules. First, we define a \emph{target assignment ratio}, which is an ideal treatment assignment probability. At each round, $t = 1,2,\dots, T$, our sampling rule randomly assigns a unit to a treatment arm with a probability identical to an estimated target assignment ratio. After the final round $T$, our recommendation rule recommends a treatment arm with the highest value of a policy trained by maximizing empirical policy value. We refer to our strategy as the PLAS strategy. Our strategy depends on hyperparameters $\overline{C} \in (0, \infty)$, which are introduced for technical purpose in the proof and can be set as sufficiently large values. We show a pseudo-code in Algorithm~\ref{alg}. 

\subsection{Optimal Target Assignment Ratio}
\label{sec:target_assignment}
We first define a target assignment ratio. The target assignment ratio is the expected value of the sample average of $A_t$ of a strategy ($\frac{1}{T}\sum^T_{t=1}\mathbb{E}_P\left[\mathbbm{1}[A_t = a]\mid X_t = x\right]$) under which a leading factor of its expected simple regret aligns with that of our derived lower bound.

\begin{definition}[Target assignment ratio]
\label{def:est_target_assignment}
    When  $K=2$, for each $a\in[K] = \{1, 2\}$, we define the target assignment ratio $w^*$ as
\begin{align}
\label{eq:targetratio1}
        w^*(a\mid x) = \frac{\sigma^{a}(x)}{\sigma^{1}(x) + \sigma^{2}(x)}.
\end{align}
When $K \geq 3$, for each $a\in[K]$, we define the target assignment ratio as
\begin{align}
\label{eq:targetratio2}
    w^*(a\mid x) = \frac{ \left(\sigma^{a}(x)\right)^2}{\sum_{b\in[K]} \left(\sigma^{b}(x)\right)^2}. 
\end{align}
\end{definition}

This target assignment ratio is given in the course of proving Theorem~\ref{thm:null_lower_bound}, in which we solve $\min_{\bm{w}\in \mathcal{W}}
\left\{
    \sum_{s\in\mathcal{S}}
    \max_{a \in [K]}
    \left\{\sqrt{\frac{\left(\sigma^{a}(s)\right)^2}{w(a\mid s)}}\right\}
    \right\}$, 
where $\mathcal{S} \coloneqq  \{s_1, s_2, \dots, s_M\} \subseteq \mathcal{X}$, and $\mathcal{W}$ is the set of all measurable functions $w:\mathcal{X}\times [K] \to (0, 1)$ such that $\sum_{a\in[K]} w(a\mid x) = 1$ for each $x\in \mathcal{X}$. The solutions $w^*$, whose explicit forms appear in \eqref{eq:targetratio1} and \eqref{eq:targetratio2}, work as a conjecture for $\frac{1}{T}\sum^T_{t=1}\mathbb{E}_P\left[\mathbbm{1}[A_t = a]\mid  X_t = x\right]$.

These target assignment ratios are ex-ante unknown to the decision-maker since the variance $(\sigma^a(x))^2$ is unknown.
Therefore, the decision-maker needs to estimate it during an adaptive experiment and employ the estimator as a probability of assigning a treatment arm. 

\begin{remark}[Efficiency gain]
For each $a\in[K]$, let $\left(\sigma^a\right)^2$ be the unconditional variance of $Y^a_t$, and $w:[K]\to[0, 1]$ be an assignment ratio such that $\sum_{a\in[K]}w(a) = 1$ when we cannot utilize the contextual information. Then, the following inequality holds:
\begin{align*}
    &\sup_{P\in\mathcal{P}} \sqrt{T}\mathbb{E}_P\left[R(P)\left(\widehat{\pi}\right)\right] \geq \frac{1}{8}\inf_{w\in\mathcal{W}}\max_{a\in[K]}\sqrt{M{\left(\sigma^{a}\right)^2}/{w(a)}}\\
    &\geq \frac{1}{8}\inf_{w\in\mathcal{W}}\max_{a\in[K]}\sqrt{M\mathop{\mathbb{E}}_{X \sim \zeta}\left[{\left(\sigma^{a}(X)\right)^2}/{w(a\mid X)}\right]} \geq \frac{1}{8}\inf_{w\in\mathcal{W}}\max_{a\in[K]}\mathop{\mathbb{E}}_{X \sim \zeta}\left[\sqrt{M{\left(\sigma^{a}(X)\right)^2}/{w(a\mid X)}}\right].
\end{align*}
This result implies that we can minimize a lower bound by using contextual information; that is, strategies utilizing contextual information are more efficient than ones not utilizing contextual information. 
\end{remark}

\subsection{Sampling Rule with Adaptive Sampling (AS)}
In this section, we describe our sampling rule, referred to as athe AS rule. 

In round $t \leq K$, the strategy chooses $A_t = t$, i.e., each arm is pulled once as initialization.
In round $t > K$, given an estimated target assignment ratio $\widehat{w}_{t}(a\mid x)$, we assign treatment arm $a$ with probability $\widehat{w}_{t}(a\mid X_t)$. Below, we describe the construction of $\widehat{w}_{t}(a\mid x)$.

In each round $t$, we estimate $w^*$ using the past observations. We first construct estimators $\widehat{\mu}^a_{t}(x)$ and $\widehat{\nu}^a_{t}(x)$ of the first moment $\mu^a(P)(x)$ and the second moment $\nu^a(P)(x)$ of $Y^a$. The estimators constructed to converges to the true functions with probability one, as stated in Assumption~\ref{asm:consistent}, and their absolute values are bounded by $\overline{C}$. Then, given these estimators, we estimate the variances as $\left(\widehat{\sigma}^a_{t}(x)\right)^2 = \mathrm{thre}\Big((\widehat{\sigma}^{\dagger a}_t(x))^2, \overline{C}, 1/\overline{C}\Big)$, where
$(\widehat{\sigma}^{\dagger a}_t(x))^2 = \widehat{\nu}^a_{t}(x) - \left(\widehat{\mu}^a_{t}(x)\right)^2$. Lastly, we construct the estimator of the target assignment ratio $\widehat{w}_t(a\mid x)$ by replacing $\sigma^a(x)$ by the estimator $\widehat{\sigma}^a_{t}(x)$. 

For obtaining estimators $\widehat{\mu}^a_{t}(x)$, $\widehat{\nu}^a_{t}(x)$, and $\widehat{\sigma}^a_{t}(x)$, we can use nonparametric estimators, such as the nearest neighbor regression estimator and kernel regression estimator, which have been proven to converge to the true function almost surely under a bounded sampling probability $\widehat{w}_t$ by \citet{yang2002} and \citet{qian2016kernel}. It should be noted that we do not assume specific convergence rates for estimators for $\mu^a(P)(x)$ and $w^*$ as the asymptotic optimality of the AIPW estimator can be demonstrated without them \citep{Laan2008TheCA,Kato2020adaptive,Kato2021adr}.

\subsection{Recommendation Rule with Policy Learning}
The following part presents our recommendation rule. To recommend the conditionally best treatment arm $a^*(P)(x)$,
we train a policy $\pi:[K]\times \mathcal{X} \to [0,1]$ by maximizing the empirically approximated policy value function, which we will describe below. 

At the end of an experiment, we estimate the policy value $Q(\pi)$ by using the augmented doubly robust estimator, which is defined as follows:
\begin{align}
    \label{eq:aipw}
    &\widehat{Q}_T(\pi) \coloneqq  \frac{1}{T}\sum^T_{t=1}\sum_{a\in[K]}\pi(a\mid  X_t)\widehat{\Gamma}^a_t,
\end{align}
where
\begin{align}
    \widehat{\Gamma}^a_t \coloneqq  \frac{\mathbbm{1}[A_t = a]\big(c_T\left(Y_t\right) - \widehat{\mu}^a_t(X_t)\big)}{\widehat{w}_t(a\mid  X_t)} + \widehat{\mu}^a_t(X_t),\quad c_T\left(Y_t\right) = \mathrm{thre}\Big(Y_t, U_T, -U_T\Big),
\end{align}
and $U_T$ is a positive value that approaches infinity as $T \to \infty$.
Then, we train a policy as
\begin{align}
\widehat{\pi}^{\mathrm{PLAS}}_T \coloneqq  \argmax_{\pi\in\Pi} \widehat{Q}_T(\pi)
\end{align}
By using this trained policy, given $x\in\mathcal{X}$, we recommend $\widehat{a}_T(x) \in [K]$ as the best treatment arm 
with probability $\widehat{\pi}^{\mathrm{PLAS}}_T\left(\widehat{a}_T(x)\mid x\right)$.

The AIPW estimator debiases the sample selection bias resulting from treatment assignment based on contextual information. Additionally, the AIPW estimator possesses the following properties: (i) its components $\{\widehat{\Gamma}^a_t - \mu^a(P)(x)\}^T_{t=1}$ are a martingale difference sequence, allowing us to employ the martingale limit theorems in derivation of the upper bound; (ii) it has the minimal asymptotic variance among the possible estimators. For example, other estimators with a martingale property, such as the inverse probability weighting (IPW) estimator, may be employed, yet their asymptotic variance would be greater than that of the AIPW estimator. The $t$-th element of the sum in the AIPW estimator utilizes nuisance parameters ($\mu^a(P)(x)$ and $w^*$) estimated from past observations up to round $t-1$ for constructing a martingale difference sequence \citep{Laan2008TheCA,hadad2019,Kato2020adaptive,Kato2021adr}. For those reasons, this estimator is often used in the context of adaptive experimental design.

\begin{algorithm}[tb]
   \caption{PLAS strategy}
   \label{alg}
\begin{algorithmic}
   \STATE {\bfseries Parameter:} Positive constants $C_{\mu}$ and $C_{\sigma^2}$.
   \STATE {\bfseries Initialization:} 
   \FOR{$t=1$ to $K$}
   \STATE Assign $A_t=t$. For each $a\in[K]$, set $\widehat{w}_{t}(a\mid x) = 1/K$.
   \ENDFOR
   \FOR{$t=K+1$ to $T$}
   \STATE Observe covariate $X_t$. 
   \STATE Construct the estimated target assignment ratio $\widehat{w}_{t}$ defined in Definition~\ref{def:est_target_assignment}.
   \STATE Draw $\xi_t$ from the uniform distribution on $[0,1]$. 
   \STATE $A_t = 1$ if $\xi_t \leq \widehat{w}_{t}(1\mid X_t)$ and $A_t = a$ for $a \geq 2$ if $\xi_t \in \left(\sum^{a-1}_{b=1}\widehat{w}_{t}(b\mid X_t), \sum^a_{b=1}\widehat{w}_{t}(b\mid X_t)\right]$.  
   \ENDFOR
   \STATE Construct $\widehat{Q}(\pi)$ following \eqref{eq:aipw}.
   \STATE Train a policy $\widehat{\pi}^{\mathrm{PLAS}}_T$ as $\widehat{\pi}^{\mathrm{PLAS}}_T = \argmax_{\pi\in\Pi} \widehat{Q}(\pi)$. 
   \STATE Recommend $\widehat{a}_T$ following $\widehat{\pi}^{\mathrm{PLAS}}_T$.
\end{algorithmic}
\end{algorithm} 

\section{Regret Upper Bound}
\label{sec:asymp_opt}
This section provides upper bounds for the expected simple regret of the PLAS strategy. First, we assume the following convergence rate for estimators of $\mu^a(P)(x)$ and $w^*(a\mid x)$. 
\begin{assumption}
\label{asm:consistent}
For any $\zeta$, any $P\in\mathcal{P}_\zeta$, and  all $a \in [K]$, it holds that
    \begin{align}
        &\sup_{x\in\mathcal{X}}\Big|\widehat{w}_T(a\mid  x) - w^*(a\mid  x) \Big|\xrightarrow{\mathrm{a.s.}} 0,\quad \sup_{x\in\mathcal{X}}\Big|\widehat{\mu}^a_T(x) - \mu^a(P)(x) \Big|\xrightarrow{\mathrm{a.s.}} 0\qquad \mathrm{as}\ \ T\to \infty.
    \end{align}
\end{assumption}

Next, we define an entropy integral of a policy class.
\begin{definition}
	Given the feature domain $\mathcal{X}$, a policy class $\Pi$, a set of $n$ points $\{x_1, \dots, x_n\} \subset \mathcal{X}$, define:
	\begin{enumerate}
		\item
		Hamming distance between any two policies $\pi_1$ and $\pi_2$ in $\Pi$: $H(\pi_1, \pi_2) = \frac{1}{n} \sum_{j=1}^n \mathbbm{1}\left[\pi_1(x_j) \neq \pi_2(x_j)\right]$. 
		\item 
		$\epsilon$-Hamming covering number of the set $\{x_1, \dots, x_n\}$:

			$\mathbb{N}_H(\epsilon, \Pi, \{x_1, \dots, x_n\})$ is the smallest number $K$ of policies $\{\pi_1, \dots, \pi_K\}$ in $\Pi$, such that $\forall \pi \in \Pi, \exists \pi_i, H(\pi, \pi_i) \le \epsilon$.
		
		\item
		$\epsilon$-Hamming covering number of $\Pi$:
		$\mathbb{N}_H(\epsilon, \Pi) = \sup\{\mathbb{N}_H(\epsilon, \Pi, \{x_1, \dots, x_m\}) \mid  m \ge 1,  x_1, \dots, x_m \in \mathcal{X}\}.$
		
		\item
		Entropy integral:
		$\kappa(\Pi) = \int_0^1 \sqrt{\log \mathbb{N}_H(\epsilon^2, \Pi)}d\epsilon$.
	\end{enumerate}
\end{definition}
The entropy represents the complexity of a policy class, as well as the Natarajan dimension. Between the entropy integral $\kappa(\Pi)$ and the Natarajan dimension $d_{\mathrm{N}}(\Pi)$, $\kappa(\Pi) \leq  C\sqrt{\log(d)d_{\mathrm{N}}(\Pi)}$ holds for some universal constant $C > 0$ \citep{jin2023upper,zhan2022policy} when $K \geq 3$. When $K = 2$, $d_{\mathrm{N}}(\Pi)$ is equal to the VC dimension, and $\kappa(\Pi) \leq  2.5\sqrt{d_{\mathrm{N}}(\Pi)}$ holds \citep{Haussler1995}.

Furthermore, we make the following assumption for the $\epsilon$-Hamming covering number.
\begin{assumption}\label{assump:1} For all $\epsilon \in (0, 1)$, $\mathbb{N}_H(\epsilon,\Pi) \le C\exp(D({\frac{1}{\epsilon}})^{\omega})$ for some constants $C,D > 0, 0 < \omega < 0.5$.
\end{assumption}

Then, we obtain the following upper bound for the expected simple regret of the PLAS strategy.
\begin{theorem}[Upper bound]
\label{thm:regret_upper_bound}
Suppose that Assumption~\ref{asm:consistent} holds. Then, for any $\zeta$ and any $P\in\mathcal{P}_\zeta$,  the expected simple regret of the PLAS strategy satisfies
\begin{align}
    &\sqrt{T}\mathbb{E}_P\Big[R\left(P\right)\left(\widehat{\pi}^{\mathrm{PLAS}}_T\right)\Big] \\
    &\leq \begin{cases}
        \Big(108.8 \kappa(\Pi) + 870.4\Big)\mathop{\mathbb{E}}_{X \sim \zeta}\left[\sqrt{\sum^K_{a=1}\left(\sigma^a(X)\right)^2}\right] + o(1) & \mathrm{if}\ K \geq 3\\
        \Big(108.8 \kappa(\Pi) + 870.4\Big)\mathop{\mathbb{E}}_{X \sim \zeta}\left[\sqrt{\left(\sigma^1(X) + \sigma^2(X)\right)^2}\right] + o(1) & \mathrm{if}\ K = 2
    \end{cases}\quad \mathrm{as}\ \ T\to \infty,
\end{align}
where $\kappa(\Pi) = 
    \begin{cases}
         2.5\sqrt{d_{\mathrm{N}}(\Pi)} & \mathrm{if}\ K = 2\\
         C\sqrt{\log(d)d_{\mathrm{N}}(\Pi)} & \mathrm{if}\ K \geq 3
    \end{cases}$ holds for a universal constant $C > 0$.
\end{theorem}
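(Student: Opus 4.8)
The plan is to decompose the policy regret into a uniform-deviation term for the AIPW estimator plus approximation-error terms, and then control the uniform deviation by a chaining argument over the policy class using the entropy integral $\kappa(\Pi)$. First I would write, for the trained policy $\widehat{\pi}^{\mathrm{PLAS}}_T$ and the in-class optimum $\pi^*(P)$,
\begin{align}
Q(P)(\pi^*(P)) - Q(P)(\widehat{\pi}^{\mathrm{PLAS}}_T) &\leq \big(Q(P)(\pi^*(P)) - \widehat{Q}_T(\pi^*(P))\big) + \big(\widehat{Q}_T(\pi^*(P)) - \widehat{Q}_T(\widehat{\pi}^{\mathrm{PLAS}}_T)\big) \\
&\quad + \big(\widehat{Q}_T(\widehat{\pi}^{\mathrm{PLAS}}_T) - Q(P)(\widehat{\pi}^{\mathrm{PLAS}}_T)\big) \leq 2\sup_{\pi\in\Pi}\big|\widehat{Q}_T(\pi) - Q(P)(\pi)\big|,
\end{align}
since the middle term is nonpositive by the definition of $\widehat{\pi}^{\mathrm{PLAS}}_T$ as the maximizer of $\widehat{Q}_T$. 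So it suffices to bound $\sqrt{T}\,\mathbb{E}_P[\sup_{\pi\in\Pi}|\widehat{Q}_T(\pi) - Q(P)(\pi)|]$ by the claimed constants times $\mathbb{E}_{X\sim\zeta}[\sqrt{\sum_a(\sigma^a(X))^2}]$ (resp. the $K=2$ form) up to $o(1)$.

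Next I would center the AIPW estimator: write $\widehat{Q}_T(\pi) - Q(P)(\pi) = \frac{1}{T}\sum_{t=1}^T \sum_a \pi(a\mid X_t)\big(\widehat{\Gamma}^a_t - \mu^a(P)(X_t)\big) + \frac{1}{T}\sum_{t=1}^T\sum_a\pi(a\mid X_t)\mu^a(P)(X_t) - Q(P)(\pi)$. The second piece is an i.i.d.\ average minus its mean, uniformly over $\pi$, which is $o_P(1/\sqrt T)$ after rescaling since $\mu^a$ is bounded by $\overline C$ and $\Pi$ has finite entropy integral (a standard Glivenko--Cantelli/Donsker bound, contributing only to $o(1)$). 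The first piece is the key term: by construction $\{\sum_a \pi(a\mid X_t)(\widehat{\Gamma}^a_t - \mu^a(P)(X_t))\}_t$ is a martingale difference sequence with respect to $\mathcal{G}_{t+1}$, because the nuisances $\widehat{\mu}^a_t,\widehat{w}_t$ are $\mathcal{G}_t$-measurable and $\mathbb{E}_P[\widehat{\Gamma}^a_t\mid\mathcal{G}_t] = \mu^a(P)(X_t) + o(1)$ terms coming from the truncation $c_T(Y_t)$ (which vanish since $U_T\to\infty$ and $Y^a$ is subgaussian). I would compute the conditional variance of the $t$-th term: $\mathbb{E}_P\big[(\sum_a\pi(a\mid X_t)(\widehat{\Gamma}^a_t - \mu^a(P)(X_t)))^2\mid\mathcal{G}_t\big] = \sum_a \pi(a\mid X_t)^2 \frac{(\sigma^a(X_t))^2}{\widehat{w}_t(a\mid X_t)} + o(1)$, and under Assumption~\ref{asm:consistent} $\widehat{w}_t\to w^*$ a.s., so plugging in $w^*$ from Definition~\ref{def:est_target_assignment} collapses $\sum_a \pi(a\mid x)^2 (\sigma^a(x))^2/w^*(a\mid x)$ into (at most) $\sum_a(\sigma^a(x))^2$ for $K\ge 3$ and $(\sigma^1(x)+\sigma^2(x))^2$ for $K=2$, using $\pi(a\mid x)\le 1$ and $\sum_a\pi(a\mid x)=1$ — this is exactly where the target ratio was engineered to be optimal. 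Then a uniform (over $\Pi$) maximal inequality for martingales with a chaining bound in the Hamming metric gives $\sqrt T\,\mathbb{E}_P[\sup_\pi|\cdot|]\le (\text{const}\cdot\kappa(\Pi) + \text{const})\cdot \mathbb{E}_{X\sim\zeta}[\sqrt{\text{envelope variance}}]$; the numerical constants $108.8$ and $870.4$ (and the $2.5$, via Haussler) come from tracking the constants in this martingale chaining inequality (e.g.\ a Freedman-type tail bound integrated along the chain, as in the policy-learning literature of \citet{zhan2022policy,jin2023upper}), together with the factor $2$ from the regret decomposition.

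The main obstacle I expect is twofold. First, establishing a uniform-in-$\pi$ maximal inequality \emph{for a martingale triangular array} (the summands depend on $T$ through $U_T$ and on the adapted weights $\widehat w_t$), rather than an i.i.d.\ empirical process: I would handle this via a Bernstein/Freedman inequality for martingales with a conditional-variance bound, combined with a discretization of $\Pi$ at scales $\epsilon^2$ controlled by $\mathbb{N}_H(\epsilon^2,\Pi)$, and Assumption~\ref{assump:1} ensures the chaining integral $\kappa(\Pi)=\int_0^1\sqrt{\log\mathbb{N}_H(\epsilon^2,\Pi)}\,d\epsilon$ converges. Second, and more delicately, getting the \emph{leading constant} right: I must show that replacing $\widehat w_t$ by $w^*$, truncating $Y_t$ at $U_T$, and estimating $\mu^a$ all contribute only $o(1/\sqrt T)$ to the regret — this needs the a.s.\ convergence of Assumption~\ref{asm:consistent} together with a dominated-convergence / uniform-integrability argument (boundedness by $\overline C$ and subgaussianity do the work) so that the $o(1)$ in the statement is legitimate and the matching constant is the one attached to $\sqrt{\sum_a(\sigma^a(X))^2}$. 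Everything else — the regret decomposition, the martingale property, and the algebra collapsing $\sum_a\pi^2(\sigma^a)^2/w^*$ — is routine once these two points are in place.
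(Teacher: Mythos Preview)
Your route diverges from the paper's at the key technical step. The paper does \emph{not} use a martingale maximal inequality or sequential chaining over $\Pi$. Instead, its central lemma (Lemma~\ref{lem:equiv}) is an \emph{asymptotic equivalence}: because the nuisances $\widehat w_t,\widehat\mu^a_t$ are $\mathcal G_t$-measurable and converge a.s.\ under Assumption~\ref{asm:consistent}, the difference $\widehat\Gamma^a_t-\Gamma^{*a}_t$ (with $\Gamma^{*a}_t$ built from the \emph{true} $w^*,\mu^a$) is an MDS with conditional variance tending to zero, so $\frac{1}{\sqrt T}\sum_t\sum_a\pi(a\mid X_t)(\widehat\Gamma^a_t-\Gamma^{*a}_t)=o_P(1)$ uniformly in $\pi$. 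After this replacement the remaining process $\sum_a\pi(a\mid X_t)\Gamma^{*a}_t$ is a function of the i.i.d.\ triples $(Y^a_t,\xi_t,X_t)$, and the paper simply quotes the i.i.d.\ local-Rademacher bound of \citet{ZhouZhengyuan2018OMPL} (Lemma~\ref{thm:iid_regret}), which is where the numbers $54.4\kappa(\Pi)+435.2$ originate; the factor $2$ producing $108.8$ and $870.4$ comes from a Cauchy--Schwarz step bounding $\Upsilon_*$. The paper explicitly remarks that it takes this detour \emph{precisely to avoid} the martingale/sequential Rademacher route you propose, because it is unclear how to recover the variance-dependent leading factor that way.

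Notice also that what you list as your ``second obstacle'' --- showing that replacing $\widehat w_t,\widehat\mu^a_t$ by $w^*,\mu^a$ costs only $o(1/\sqrt T)$ --- is in fact the paper's main lemma, and once it is established the process is i.i.d., so your first obstacle (martingale chaining) becomes unnecessary. Your plan therefore does strictly more work than the paper's. Separately, your claim that the ``second piece'' $\frac{1}{T}\sum_t\sum_a\pi(a\mid X_t)\mu^a(P)(X_t)-Q(P)(\pi)$ is $o_P(1/\sqrt T)$ uniformly over $\Pi$ is not correct as stated: this is a centered i.i.d.\ empirical process over a VC-type class and is $O_P(1/\sqrt T)$, not $o_P(1/\sqrt T)$; it must be kept inside the Rademacher bound (as the paper does, by working with the full oracle AIPW score rather than splitting off the $\mu$-part), not discarded into the $o(1)$.
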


From Lemma~\ref{thm:regret_upper_bound} and the relationship between the entropy integral and the Natarajan dimension, the following theorem holds.

When $K = 2$, for $M = d_{\mathrm{VC}}(\Pi)$, the regret upper bound is given as $\Big(272 \sqrt{M} + 870.4\Big)\mathop{\mathbb{E}}_{X \sim \zeta}\left[\sqrt{\sum^K_{a=1}\left(\sigma^a(X)\right)^2}\right] + o(1)$ as $T \to \infty$. When $K \geq 3$, for $M = d_{\mathrm{N}}(\Pi)$, the regret upper bound is given as $\Big(108.8 C\sqrt{\log(d)M} + 870.4\Big)\mathop{\mathbb{E}}_{X \sim \zeta}\left[\sqrt{\sum^K_{a=1}\left(\sigma^a(X)\right)^2}\right] + o(1)$ as $T \to \infty$. Here, we note that the leading factor in these upper bounds are  $\mathop{\mathbb{E}}_{X \sim \zeta}\left[\sqrt{\log(d)M\sum^K_{a=1}\left(\sigma^a(X)\right)^2}\right]$ and $\mathop{\mathbb{E}}_{X \sim \zeta}\left[\sqrt{M\left(\sigma^a(X) + \sigma^2(X)\right)^2}\right]$, a product of the policy complexity ( Natarajan dimension $d_{\mathrm{N}}(\Pi)$) and outcome variances. This theorem implies that the leading factors align with the lower bounds with high probability. 


\begin{figure}[t!]
  \centering
  \includegraphics[width=140mm]{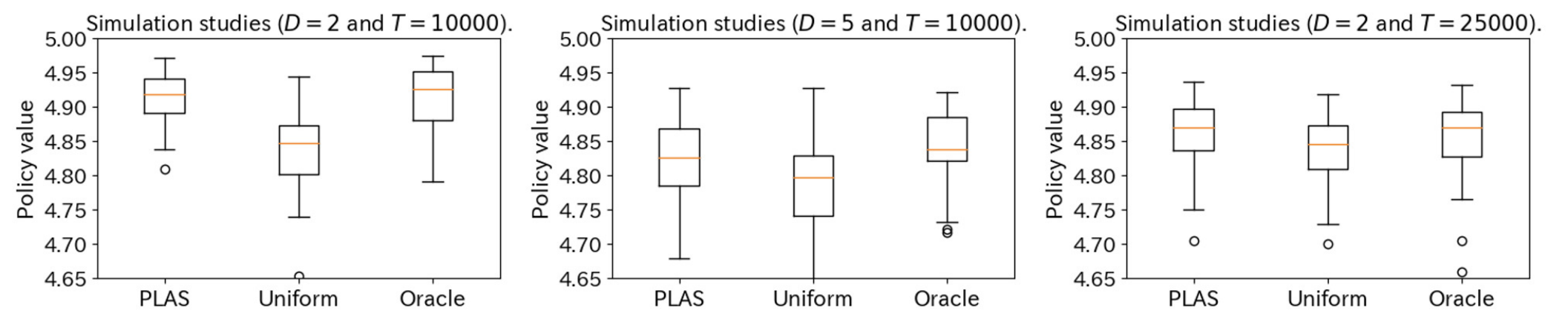}
  \caption{The results of simulation studies. The $y$-axis is the policy value of the learned policy.}
\label{fig:res1}
\vspace{-5mm}
\end{figure}

\section{Simulation Study}

We conduct simulation studies to investigate the empirical performance of our proposed PLAS strategy. We compare the PLAS strategy with a combination of uniform sampling and policy learning, denoted as Uniform. The Uniform strategy assigns treatment arms with an equal ratio of $1/K$ and then applies policy learning. As a baseline method, we use the PLAS strategy with known variances, referred to as Oracle.

We consider a simple scenario with $K = 4$. We examine three cases for $d$ and $T$: $(d, T) = (2, 10000)$, $(d, T) = (5, 10000)$, and $(d, T) = (5, 25000)$. Let $X_{i}$ be the $i$-th dimension of $X$, and let $m_{i}$ and $v_{i}$ be its mean and variance, respectively. The mean $m_{i}$ is drawn from a uniform distribution with support $[-1, 1]$, and the variance $v_{i}$ is fixed at $1$. If $X_{(1)} > 0.5$ and $X_{(2)} > 0.5$, then $\mu^1(P)(X) = 5.00$ and $\mu^2(P)(X) = \mu^3(P)(X) = \mu^4(P)(X) = 4.50$; if $X_{(1)} < 0.5$ and $X_{(2)} > 0.5$, then $\mu^2(P)(X) = 5.00$ and $\mu^1(P)(X) = \mu^3(P)(X) = \mu^4(P)(X) = 4.50$; if $X_{(1)} > 0.5$ and $X_{(2)} < 0.5$, then $\mu^2(P)(X) = 5.00$ and $\mu^1(P)(X) = \mu^3(P)(X) = \mu^4(P)(X) = 4.50$; if $X_{(1)} < 0.5$ and $X_{(2)} < 0.5$, then $\mu^4(P)(X) = 5.00$ and $\mu^1(P)(X) = \mu^2(P)(X) = \mu^3(P)(X) = 4.50$.

We conduct $50$ independent trials to evaluate the performance of the strategies. The results are presented in Figure~\ref{fig:res1} with three different settings for $d$ and $T$. The $y$-axis represents the policy value of the learned policy. From the experimental results, we confirm that our proposed strategy effectively improves the policy value.
\section{Conclusion}

In this study, we presented an adaptive experiment with policy learning. Our main contributions include the derivation of lower bounds for strategies, the development of the PLAS strategy, and the establishment of its regret upper bound. First, by utilizing the lower bounds developed by \citet{Kaufman2016complexity}, we derived lower bounds for the expected simple regret, which depend on the variances of outcomes. Then, based on these lower bounds, we developed the PLAS strategy, which trains a policy at the end of the experiment. Lastly, we provided upper bounds for the regret of the PLAS strategy.

From a technical perspective, we demonstrated how to use Rademacher complexity for i.i.d. samples in an adaptive experiment with non-i.i.d. samples. We did not employ complexity measures for non-i.i.d. samples as presented by \citet{Rakhlin2015} and \citet{Foster2023}. Instead, our technique relies on an approach used by \citet{Hahn2011}, which we extended by incorporating sample splitting, also known as double machine learning \citep{Laan2008TheCA,ZhengWenjing2011CTME,ChernozhukovVictor2018Dmlf,hadad2019,Kato2020adaptive,Kato2021adr}.

We also contributed to the literature on policy learning by providing a variance-dependent lower bound, which applies to observational studies with i.i.d. samples, and by discussing matching upper bounds. Our derived lower bound is distinct from that in \citet{AtheySusan2017EPL} and more tightly depends on variances, necessitating the refinement of existing upper bounds.

Our next step is to tighten both the lower and upper bounds. When $K=2$, we showed that assigning each treatment arm a proportion based on standard deviations is optimal, consistent with existing works such as \citet{Neyman1934OnTT}, \citet{glynn2004large}, and \citet{Kaufman2016complexity}. However, we found that assigning each treatment arm a proportion based on variances is optimal when $K \geq 3$. Other studies on fixed-budget BAI without contextual information, such as \citet{glynn2004large}, \citet{Kaufman2016complexity}, and \citet{kato2023locally}, indicate that strategies with different sampling rules are optimal. Bridging the gap between our study and these existing studies remains an open issue.

\bibliographystyle{icml2024}
\bibliography{output.bbl} 

\begin{thebibliography}{101}
\providecommand{\natexlab}[1]{#1}
\providecommand{\url}[1]{\texttt{#1}}
\expandafter\ifx\csname urlstyle\endcsname\relax
  \providecommand{\doi}[1]{doi: #1}\else
  \providecommand{\doi}{doi: \begingroup \urlstyle{rm}\Url}\fi

\bibitem[Adusumilli(2021)]{Adusumilli2021risk}
Adusumilli, K.
\newblock Risk and optimal policies in bandit experiments, 2021.
\newblock {a}rXiv:2112.06363.

\bibitem[Adusumilli(2022)]{adusumilli2022minimax}
Adusumilli, K.
\newblock Minimax policies for best arm identification with two arms, 2022.
\newblock {a}rXiv:2204.05527.

\bibitem[Ariu et~al.(2021)Ariu, Kato, Komiyama, McAlinn, and Qin]{Ariu2021}
Ariu, K., Kato, M., Komiyama, J., McAlinn, K., and Qin, C.
\newblock Policy choice and best arm identification: Asymptotic analysis of
  exploration sampling, 2021.
\newblock {a}rXiv:2109.08229.

\bibitem[Armstrong(2022)]{Armstrong2022}
Armstrong, T.~B.
\newblock Asymptotic efficiency bounds for a class of experimental designs,
  2022.
\newblock {a}rXiv:2205.02726.

\bibitem[Athey \& Wager(2021)Athey and Wager]{AtheySusan2017EPL}
Athey, S. and Wager, S.
\newblock Policy learning with observational data.
\newblock \emph{Econometrica}, 89\penalty0 (1):\penalty0 133--161, 2021.
\newblock \doi{https://doi.org/10.3982/ECTA15732}.

\bibitem[Atsidakou et~al.(2023)Atsidakou, Katariya, Sanghavi, and
  Kveton]{atsidakou2023bayesian}
Atsidakou, A., Katariya, S., Sanghavi, S., and Kveton, B.
\newblock Bayesian fixed-budget best-arm identification, 2023.
\newblock {a}rXiv:2211.08572.

\bibitem[Audibert et~al.(2010)Audibert, Bubeck, and Munos]{Audibert2010}
Audibert, J.-Y., Bubeck, S., and Munos, R.
\newblock Best arm identification in multi-armed bandits.
\newblock In \emph{Conference on Learning Theory}, pp.\  41--53, 2010.

\bibitem[Auer et~al.(2002)Auer, Cesa-Bianchi, Freund, and Schapire]{Auer2002}
Auer, P., Cesa-Bianchi, N., Freund, Y., and Schapire, R.~E.
\newblock The nonstochastic multiarmed bandit problem.
\newblock \emph{SIAM Journal on Computing}, 32\penalty0 (1):\penalty0 48--77,
  2002.

\bibitem[Bang \& Robins(2005)Bang and Robins]{BangRobins2005}
Bang, H. and Robins, J.~M.
\newblock Doubly robust estimation in missing data and causal inference models.
\newblock \emph{Biometrics}, 61\penalty0 (4):\penalty0 962--973, 2005.

\bibitem[Bartlett \& Mendelson(2003)Bartlett and
  Mendelson]{BartlettMendelson2003}
Bartlett, P.~L. and Mendelson, S.
\newblock Rademacher and gaussian complexities: risk bounds and structural
  results.
\newblock \emph{Journal of Machine Learning Research}, 3:\penalty0 463--a482,
  2003.

\bibitem[Bechhofer et~al.(1968)Bechhofer, Kiefer, and
  Sobel]{bechhofer1968sequential}
Bechhofer, R., Kiefer, J., and Sobel, M.
\newblock \emph{Sequential Identification and Ranking Procedures: With Special
  Reference to Koopman-Darmois Populations}.
\newblock University of Chicago Press, 1968.

\bibitem[Bubeck et~al.(2009)Bubeck, Munos, and Stoltz]{Bubeck2009}
Bubeck, S., Munos, R., and Stoltz, G.
\newblock Pure exploration in multi-armed bandits problems.
\newblock In \emph{Algorithmic Learning Theory (ALT)}, 2009.

\bibitem[Bubeck et~al.(2011)Bubeck, Munos, and Stoltz]{Bubeck2011}
Bubeck, S., Munos, R., and Stoltz, G.
\newblock Pure exploration in finitely-armed and continuous-armed bandits.
\newblock \emph{Theoretical Computer Science}, 2011.

\bibitem[Carpentier \& Locatelli(2016)Carpentier and Locatelli]{Carpentier2016}
Carpentier, A. and Locatelli, A.
\newblock Tight (lower) bounds for the fixed budget best arm identification
  bandit problem.
\newblock In \emph{Conference on Learning Theory (COLT)}, 2016.

\bibitem[Chen et~al.(2000)Chen, Lin, Y\"{u}cesan, and Chick]{chen2000}
Chen, C.-H., Lin, J., Y\"{u}cesan, E., and Chick, S.~E.
\newblock Simulation budget allocation for further enhancing theefficiency of
  ordinal optimization.
\newblock \emph{Discrete Event Dynamic Systems}, 10\penalty0 (3):\penalty0
  251--270, 2000.

\bibitem[Chernozhukov et~al.(2018)Chernozhukov, Chetverikov, Demirer, Duflo,
  Hansen, Newey, and Robins]{ChernozhukovVictor2018Dmlf}
Chernozhukov, V., Chetverikov, D., Demirer, M., Duflo, E., Hansen, C., Newey,
  W., and Robins, J.
\newblock Double/debiased machine learning for treatment and structural
  parameters.
\newblock \emph{The Econometrics Journal}, 2018.

\bibitem[Cook et~al.(2023)Cook, Mishler, and Ramdas]{cook2023semiparametric}
Cook, T., Mishler, A., and Ramdas, A.
\newblock Semiparametric efficient inference in adaptive experiments.
\newblock In \emph{NeurIPS 2023 Workshop on Adaptive Experimental Design and
  Active Learning in the Real World}, 2023.
\newblock URL \url{https://openreview.net/forum?id=xfj5jjpOaL}.
\newblock {a]rXiv:2311.18274}.

\bibitem[Degenne(2023)]{degenne2023existence}
Degenne, R.
\newblock On the existence of a complexity in fixed budget bandit
  identification.
\newblock In \emph{Conference on Learning Theory (COLT)}, 2023.

\bibitem[Deshmukh et~al.(2018)Deshmukh, Sharma, Cutler, Moldwin, and
  Scott]{Deshmukh2018}
Deshmukh, A.~A., Sharma, S., Cutler, J.~W., Moldwin, M., and Scott, C.
\newblock Simple regret minimization for contextual bandits, 2018.
\newblock {a}rXiv:1810.07371.

\bibitem[Dominitz \& Manski(2022)Dominitz and Manski]{Dominitz2022}
Dominitz, J. and Manski, C.~F.
\newblock Minimax-regret sample design in anticipation of missing data, with
  application to panel data.
\newblock \emph{Journal of Econometrics}, 226\penalty0 (1):\penalty0 104--114,
  2022.
\newblock Annals Issue in Honor of Gary Chamberlain.

\bibitem[Dominitz \& Manski(2017)Dominitz and Manski]{Dominitz2017}
Dominitz, J. and Manski, F.~C.
\newblock {More Data or Better Data? A Statistical Decision Problem}.
\newblock \emph{The Review of Economic Studies}, 84\penalty0 (4):\penalty0
  1583--1605, 02 2017.

\bibitem[Dud{\'\i}k et~al.(2011)Dud{\'\i}k, Langford, and Li]{dudik2011doubly}
Dud{\'\i}k, M., Langford, J., and Li, L.
\newblock Doubly {R}obust {P}olicy {E}valuation and {L}earning.
\newblock In \emph{International Conference on Machine Learning (ICML)}, 2011.

\bibitem[Even-Dar et~al.(2006)Even-Dar, Mannor, Mansour, and
  Mahadevan]{EvanDar2006}
Even-Dar, E., Mannor, S., Mansour, Y., and Mahadevan, S.
\newblock Action elimination and stopping conditions for the multi-armed bandit
  and reinforcement learning problems.
\newblock \emph{Journal of machine learning research}, 2006.

\bibitem[Foster et~al.(2023)Foster, Foster, Golowich, and Rakhlin]{Foster2023}
Foster, D., Foster, D.~J., Golowich, N., and Rakhlin, A.
\newblock On the complexity of multi-agent decision making: From learning in
  games to partial monitoring.
\newblock In \emph{Conference on Learning Theory (COLT)}, 2023.

\bibitem[Garivier \& Kaufmann(2016)Garivier and Kaufmann]{Garivier2016}
Garivier, A. and Kaufmann, E.
\newblock Optimal best arm identification with fixed confidence.
\newblock In \emph{Conference on Learning Theory}, 2016.

\bibitem[Glynn \& Juneja(2004)Glynn and Juneja]{glynn2004large}
Glynn, P. and Juneja, S.
\newblock A large deviations perspective on ordinal optimization.
\newblock In \emph{Proceedings of the 2004 Winter Simulation Conference},
  volume~1. IEEE, 2004.

\bibitem[Guan \& Jiang(2018)Guan and Jiang]{GuanJiang2018}
Guan, M. and Jiang, H.
\newblock Nonparametric stochastic contextual bandits.
\newblock \emph{AAAI Conference on Artificial Intelligence}, 2018.

\bibitem[Gupta et~al.(2021)Gupta, Lipton, and Childers]{gupta2021efficient}
Gupta, S., Lipton, Z.~C., and Childers, D.
\newblock Efficient online estimation of causal effects by deciding what to
  observe.
\newblock In \emph{Advances in Neural Information Processing Systems
  (NeurIPS)}, 2021.

\bibitem[Hadad et~al.(2021)Hadad, Hirshberg, Zhan, Wager, and Athey]{hadad2019}
Hadad, V., Hirshberg, D.~A., Zhan, R., Wager, S., and Athey, S.
\newblock Confidence intervals for policy evaluation in adaptive experiments.
\newblock \emph{Proceedings of the National Academy of Sciences}, 118\penalty0
  (15), 2021.

\bibitem[Hahn et~al.(2011)Hahn, Hirano, and Karlan]{Hahn2011}
Hahn, J., Hirano, K., and Karlan, D.
\newblock Adaptive experimental design using the propensity score.
\newblock \emph{Journal of Business and Economic Statistics}, 2011.

\bibitem[Haussler(1995)]{Haussler1995}
Haussler, D.
\newblock Sphere packing numbers for subsets of the boolean n-cube with bounded
  vapnik-chervonenkis dimension.
\newblock \emph{Journal of Combinatorial Theory, Series A}, 69\penalty0
  (2):\penalty0 217--232, 1995.

\bibitem[Hirano \& Porter(2009)Hirano and Porter]{Hirano2009}
Hirano, K. and Porter, J.~R.
\newblock Asymptotics for statistical treatment rules.
\newblock \emph{Econometrica}, 77\penalty0 (5):\penalty0 1683--1701, 2009.

\bibitem[Hirano \& Porter(2023)Hirano and Porter]{hirano2023asymptotic}
Hirano, K. and Porter, J.~R.
\newblock Asymptotic representations for sequential decisions, adaptive
  experiments, and batched bandits, 2023.
\newblock {a}rXiv:2302.03117.

\bibitem[Ito et~al.(2022)Ito, Tsuchiya, and Honda]{Ito2022}
Ito, S., Tsuchiya, T., and Honda, J.
\newblock Adversarially robust multi-armed bandit algorithm with
  variance-dependent regret bounds.
\newblock In \emph{Conference on Learning Theory}, 2022.

\bibitem[Jennison et~al.(1982)Jennison, Johnstone, and Turnbull]{Jenninson1982}
Jennison, C., Johnstone, I.~M., and Turnbull, B.~W.
\newblock Asymptotically optimal procedures for sequential adaptive selection
  of the best of several normal means.
\newblock In \emph{Statistical Decision Theory and Related Topics III}, pp.\
  55--86. Academic Press, 1982.

\bibitem[Jin(2023)]{jin2023upper}
Jin, Y.
\newblock Upper bounds on the natarajan dimensions of some function classes,
  2023.
\newblock {a}rXiv:2209.07015.

\bibitem[Jourdan et~al.(2023)Jourdan, R{\'e}my, and Emilie]{Jourdan2023}
Jourdan, M., R{\'e}my, D., and Emilie, K.
\newblock Dealing with unknown variances in best-arm identification.
\newblock In \emph{Proceedings of The 34th International Conference on
  Algorithmic Learning Theory}, volume 201, pp.\  776--849, 2023.

\bibitem[Karlan \& Wood(2014)Karlan and Wood]{Karlan2014}
Karlan, D. and Wood, D.~H.
\newblock The effect of effectiveness: Donor response to aid effectiveness in a
  direct mail fundraising experiment.
\newblock Working Paper 20047, National Bureau of Economic Research, April
  2014.

\bibitem[Kasy \& Sautmann(2021)Kasy and Sautmann]{Kasy2021}
Kasy, M. and Sautmann, A.
\newblock Adaptive treatment assignment in experiments for policy choice.
\newblock \emph{Econometrica}, 89\penalty0 (1):\penalty0 113--132, 2021.

\bibitem[Kato(2024{\natexlab{a}})]{kato2023locally}
Kato, M.
\newblock Locally optimal fixed-budget best arm identification in two-armed
  gaussian bandits with unknown variances, 2024{\natexlab{a}}.
\newblock {a}rXiv:2312.12741.

\bibitem[Kato(2024{\natexlab{b}})]{kato2023worstcase}
Kato, M.
\newblock Worst-case optimal multi-armed gaussian best arm identification with
  a fixed budget, 2024{\natexlab{b}}.
\newblock {a}rXiv:2310.19788.

\bibitem[Kato(2024{\natexlab{c}})]{kato2024agna}
Kato, M.
\newblock Adaptive generalized neyman allocation: Local asymptotic minimax
  optimal best arm identification, 2024{\natexlab{c}}.

\bibitem[Kato \& Ariu(2021)Kato and Ariu]{Kato2021Role}
Kato, M. and Ariu, K.
\newblock The role of contextual information in best arm identification, 2021.
\newblock {a}rXiv:2106.14077.

\bibitem[Kato et~al.(2020)Kato, Ishihara, Honda, and Narita]{Kato2020adaptive}
Kato, M., Ishihara, T., Honda, J., and Narita, Y.
\newblock Adaptive experimental design for efficient treatment effect
  estimation: Randomized allocation via contextual bandit algorithm, 2020.
\newblock {a}rXiv:2002.05308.

\bibitem[Kato et~al.(2021)Kato, McAlinn, and Yasui]{Kato2021adr}
Kato, M., McAlinn, K., and Yasui, S.
\newblock The adaptive doubly robust estimator and a paradox concerning logging
  policy.
\newblock In \emph{Advances in Neural Information Processing Systems
  (NeurIPS)}, 2021.

\bibitem[Kato et~al.(2023)Kato, Imaizumi, Ishihara, and
  Kitagawa]{Kato2023minimax}
Kato, M., Imaizumi, M., Ishihara, T., and Kitagawa, T.
\newblock Asymptotically optimal fixed-budget best arm identification with
  variance-dependent bounds, 2023.
\newblock {a}rXiv:2302.02988v2.

\bibitem[Kaufmann(2020)]{kaufmann2020hdr}
Kaufmann, E.
\newblock \emph{Contributions to the Optimal Solution of Several Bandits
  Problems}.
\newblock Habilitation \'a Diriger des Recherches, Universit\'e de Lille, 2020.

\bibitem[Kaufmann et~al.(2016)Kaufmann, Capp{{\'e}}, and
  Garivier]{Kaufman2016complexity}
Kaufmann, E., Capp{{\'e}}, O., and Garivier, A.
\newblock On the complexity of best-arm identification in multi-armed bandit
  models.
\newblock \emph{Journal of Machine Learning Research}, 17\penalty0
  (1):\penalty0 1--42, 2016.

\bibitem[Kim et~al.(2021)Kim, Kim, and Paik]{Kim2021}
Kim, W., Kim, G.-S., and Paik, M.~C.
\newblock Doubly robust thompson sampling with linear payoffs.
\newblock In \emph{Advances in Neural Information Processing Systems
  (NeurIPS)}, 2021.

\bibitem[Kitagawa \& Tetenov(2018)Kitagawa and Tetenov]{Kitagawa2018}
Kitagawa, T. and Tetenov, A.
\newblock Who should be treated? empirical welfare maximization methods for
  treatment choice.
\newblock \emph{Econometrica}, 86\penalty0 (2):\penalty0 591--616, 2018.

\bibitem[Kock et~al.(2023)Kock, Preinerstorfer, and Veliyev]{Kock2020}
Kock, A.~B., Preinerstorfer, D., and Veliyev, B.
\newblock Treatment recommendation with distributional targets.
\newblock \emph{Journal of Econometrics}, 234\penalty0 (2):\penalty0 624--646,
  2023.

\bibitem[Komiyama et~al.(2022)Komiyama, Tsuchiya, and Honda]{Komiyama2022}
Komiyama, J., Tsuchiya, T., and Honda, J.
\newblock Minimax optimal algorithms for fixed-budget best arm identification.
\newblock In \emph{Advances in Neural Information Processing Systems
  (NeurIPS)}, 2022.

\bibitem[Komiyama et~al.(2023)Komiyama, Ariu, Kato, and Qin]{Komiyama2021}
Komiyama, J., Ariu, K., Kato, M., and Qin, C.
\newblock Rate-optimal bayesian simple regret in best arm identification.
\newblock \emph{Mathematics of Operations Research}, 2023.

\bibitem[Lai \& Robbins(1985)Lai and Robbins]{Lai1985}
Lai, T. and Robbins, H.
\newblock Asymptotically efficient adaptive allocation rules.
\newblock \emph{Advances in Applied Mathematics}, 1985.

\bibitem[Lalitha et~al.(2023)Lalitha, Kalantari, Ma, Deoras, and
  Kveton]{lalitha2023fixedbudget}
Lalitha, A., Kalantari, K., Ma, Y., Deoras, A., and Kveton, B.
\newblock Fixed-budget best-arm identification with heterogeneous reward
  variances.
\newblock In \emph{Conference on Uncertainty in Artificial Intelligence}, 2023.

\bibitem[Le~Cam(1960)]{lecam1960locally}
Le~Cam, L.
\newblock \emph{Locally Asymptotically Normal Families of Distributions.
  Certain Approximations to Families of Distributions and Their Use in the
  Theory of Estimation and Testing Hypotheses}.
\newblock University of California Publications in Statistics. vol. 3. no. 2.
  Berkeley \& Los Angeles, 1960.

\bibitem[Le~Cam(1972)]{LeCam1972}
Le~Cam, L.
\newblock Limits of experiments.
\newblock In \emph{Theory of Statistics}, pp.\  245--282. University of
  California Press, 1972.

\bibitem[Le~Cam(1986)]{LeCam1986}
Le~Cam, L.
\newblock \emph{Asymptotic Methods in Statistical Decision Theory (Springer
  Series in Statistics)}.
\newblock Springer, 1986.

\bibitem[Lehmann \& Casella(1998)Lehmann and Casella]{LehmCase98}
Lehmann, E.~L. and Casella, G.
\newblock \emph{Theory of Point Estimation}.
\newblock Springer-Verlag, 1998.

\bibitem[Manski(2000)]{Manski2000}
Manski, C.
\newblock Identification problems and decisions under ambiguity: Empirical
  analysis of treatment response and normative analysis of treatment choice.
\newblock \emph{Journal of Econometrics}, 95\penalty0 (2):\penalty0 415--442,
  2000.

\bibitem[Manski(2002)]{Manski2002}
Manski, C.~F.
\newblock Treatment choice under ambiguity induced by inferential problems.
\newblock \emph{Journal of Statistical Planning and Inference}, 105\penalty0
  (1):\penalty0 67--82, 2002.

\bibitem[Manski(2004)]{Manski2004}
Manski, C.~F.
\newblock Statistical treatment rules for heterogeneous populations.
\newblock \emph{Econometrica}, 72\penalty0 (4):\penalty0 1221--1246, 2004.

\bibitem[Manski \& Tetenov(2016)Manski and Tetenov]{ManskiTetenov2016}
Manski, C.~F. and Tetenov, A.
\newblock Sufficient trial size to inform clinical practice.
\newblock \emph{Proceedings of the National Academy of Sciences}, 113\penalty0
  (38):\penalty0 10518--10523, 2016.

\bibitem[Masoudian \& Seldin(2021)Masoudian and Seldin]{Masoudian2021}
Masoudian, S. and Seldin, Y.
\newblock Improved analysis of the tsallis-inf algorithm in stochastically
  constrained adversarial bandits and stochastic bandits with adversarial
  corruptions.
\newblock In \emph{Conference on Learning Theory (COLT)}, 2021.

\bibitem[Natarajan(1989)]{Natarajan1989}
Natarajan, B.~K.
\newblock On learning sets and functions.
\newblock \emph{Machine Learning}, 4\penalty0 (1):\penalty0 67--97, Oct 1989.
\newblock ISSN 1573-0565.
\newblock \doi{10.1007/BF00114804}.

\bibitem[Neyman(1923)]{Neyman1923}
Neyman, J.
\newblock Sur les applications de la theorie des probabilites aux experiences
  agricoles: Essai des principes.
\newblock \emph{Statistical Science}, 5:\penalty0 463--472, 1923.

\bibitem[Neyman(1934)]{Neyman1934OnTT}
Neyman, J.
\newblock On the two different aspects of the representative method: the method
  of stratified sampling and the method of purposive selection.
\newblock \emph{Journal of the Royal Statistical Society}, 97:\penalty0
  123--150, 1934.

\bibitem[Qian \& Yang(2016)Qian and Yang]{qian2016kernel}
Qian, W. and Yang, Y.
\newblock Kernel estimation and model combination in a bandit problem with
  covariates.
\newblock \emph{Journal of Machine Learning Research}, 2016.

\bibitem[Qin(2022)]{Qin2022open}
Qin, C.
\newblock Open problem: Optimal best arm identification with fixed-budget.
\newblock In \emph{Conference on Learning Theory}, 2022.

\bibitem[Qin \& Russo(2022)Qin and Russo]{Qin2022}
Qin, C. and Russo, D.
\newblock Adaptivity and confounding in multi-armed bandit experiments, 2022.
\newblock {a}rXiv:2202.09036.

\bibitem[Qin et~al.(2017)Qin, Klabjan, and Russo]{Qin2017}
Qin, C., Klabjan, D., and Russo, D.
\newblock Improving the expected improvement algorithm.
\newblock In \emph{Advances in Neural Information Processing Systems
  (NeurIPS)}, 2017.

\bibitem[Rakhlin et~al.(2015)Rakhlin, Sridharan, and Tewari]{Rakhlin2015}
Rakhlin, A., Sridharan, K., and Tewari, A.
\newblock Sequential complexities and uniform martingale laws of large numbers.
\newblock \emph{Probability Theory and Related Fields}, 161\penalty0
  (1):\penalty0 111--153, Feb 2015.
\newblock ISSN 1432-2064.
\newblock \doi{10.1007/s00440-013-0545-5}.

\bibitem[Robbins(1952)]{Robbins1952}
Robbins, H.
\newblock {Some aspects of the sequential design of experiments}.
\newblock \emph{Bulletin of the American Mathematical Society}, 1952.

\bibitem[Rubin(1974)]{Rubin1974}
Rubin, D.~B.
\newblock Estimating causal effects of treatments in randomized and
  nonrandomized studies.
\newblock \emph{Journal of Educational Psychology}, 1974.

\bibitem[Russac et~al.(2021)Russac, Katsimerou, Bohle, Capp\'e, Garivier, and
  Koolen]{Russac2021}
Russac, Y., Katsimerou, C., Bohle, D., Capp\'e, O., Garivier, A., and Koolen,
  W.~M.
\newblock A/b/n testing with control in the presence of subpopulations.
\newblock In \emph{NeurIPS}, 2021.

\bibitem[Russo(2016)]{Russo2016}
Russo, D.
\newblock Simple bayesian algorithms for best arm identification, 2016.
\newblock {a}rXiv:1602.08448.

\bibitem[Sauro(2020)]{sauro2020rapidly}
Sauro, L.
\newblock Rapidly finding the best arm using variance.
\newblock In \emph{European Conference on Artificial Intelligence}, 2020.

\bibitem[Schlag(2007)]{Schlag2007eleven}
Schlag, K.~H.
\newblock Eleven\% designing randomized experiments under minimax regret.
\newblock \emph{Unpublished manuscript, European University Institute,
  Florence}, 2007.

\bibitem[Shang et~al.(2020)Shang, de~Heide, Menard, Kaufmann, and
  Valko]{Shang2020}
Shang, X., de~Heide, R., Menard, P., Kaufmann, E., and Valko, M.
\newblock Fixed-confidence guarantees for bayesian best-arm identification.
\newblock In \emph{International Conference on Artificial Intelligence and
  Statistics}, volume 108, pp.\  1823--1832, 2020.

\bibitem[Simchi-Levi et~al.(2024)Simchi-Levi, Wang, and
  Xu]{simchi2024experimentation}
Simchi-Levi, D., Wang, C., and Xu, J.
\newblock On experimentation with heterogeneous subgroups: An asymptotic
  optimal $\delta$-weighted-pac design, 2024.
\newblock SSRN:4721755.

\bibitem[Stoye(2009)]{Stoye2009}
Stoye, J.
\newblock Minimax regret treatment choice with finite samples.
\newblock \emph{Journal of Econometrics}, 151\penalty0 (1):\penalty0 70--81,
  2009.

\bibitem[Stoye(2012)]{Stoye2012}
Stoye, J.
\newblock Minimax regret treatment choice with covariates or with limited
  validity of experiments.
\newblock \emph{Journal of Econometrics}, 166\penalty0 (1):\penalty0 138--156,
  2012.

\bibitem[Swaminathan \& Joachims(2015)Swaminathan and
  Joachims]{Swaminathan2015}
Swaminathan, A. and Joachims, T.
\newblock Counterfactual risk minimization.
\newblock In \emph{Proceedings of the 24th International Conference on World
  Wide Web}, pp.\  939–941. Association for Computing Machinery, 2015.

\bibitem[Tabord-Meehan(2022)]{Meehan2022}
Tabord-Meehan, M.
\newblock {Stratification Trees for Adaptive Randomization in Randomized
  Controlled Trials}.
\newblock \emph{The Review of Economic Studies}, 12 2022.

\bibitem[Tekin \& van~der Schaar(2015)Tekin and van~der Schaar]{Tekin2015}
Tekin, C. and van~der Schaar, M.
\newblock Releaf: An algorithm for learning and exploiting relevance.
\newblock \emph{IEEE Journal of Selected Topics in Signal Processing}, 2015.

\bibitem[Thompson(1933)]{Thompson1933}
Thompson, W.~R.
\newblock On the likelihood that one unknown probability exceeds another in
  view of the evidence of two samples.
\newblock \emph{Biometrika}, 1933.

\bibitem[Tsiatis(2007)]{Tsiatis2007semiparametric}
Tsiatis, A.
\newblock \emph{Semiparametric Theory and Missing Data}.
\newblock Springer Series in Statistics. Springer New York, 2007.
\newblock ISBN 9780387373454.
\newblock URL \url{https://books.google.co.jp/books?id=xqZFi2EMB40C}.

\bibitem[van~der Laan(2008)]{Laan2008TheCA}
van~der Laan, M.~J.
\newblock The construction and analysis of adaptive group sequential designs.
\newblock {https://biostats.bepress.com/ucbbiostat/paper232}, 2008.

\bibitem[van~der Vaart(1991)]{Vaart1991}
van~der Vaart, A.
\newblock An asymptotic representation theorem.
\newblock \emph{International Statistical Review / Revue Internationale de
  Statistique}, 59\penalty0 (1):\penalty0 97--121, 1991.

\bibitem[van~der Vaart(1998)]{Vaart1998}
van~der Vaart, A.
\newblock \emph{Asymptotic Statistics}.
\newblock Cambridge Series in Statistical and Probabilistic Mathematics.
  Cambridge University Press, 1998.

\bibitem[Viviano(2022)]{viviano2022experimental}
Viviano, D.
\newblock Experimental design under network interference, 2022.
\newblock {a}rXiv:2003.08421.

\bibitem[Wager \& Xu(2021)Wager and Xu]{Wager2021diffusion}
Wager, S. and Xu, K.
\newblock Diffusion asymptotics for sequential experiments, 2021.
\newblock {a}rXiv:2101.09855.

\bibitem[Wald(1947)]{Wald1947}
Wald, A.
\newblock Foundations of a general theory of sequential decision functions.
\newblock \emph{Econometrica}, 15\penalty0 (4):\penalty0 279--313, 1947.

\bibitem[Wald(1949)]{Wald1949}
Wald, A.
\newblock {Statistical Decision Functions}.
\newblock \emph{The Annals of Mathematical Statistics}, 20\penalty0
  (2):\penalty0 165 -- 205, 1949.
\newblock \doi{10.1214/aoms/1177730030}.
\newblock URL \url{https://doi.org/10.1214/aoms/1177730030}.

\bibitem[Wang et~al.(2023)Wang, Ariu, and Proutiere]{wang2023uniformly}
Wang, P.-A., Ariu, K., and Proutiere, A.
\newblock On uniformly optimal algorithms for best arm identification in
  two-armed bandits with fixed budget, 2023.
\newblock {a}rXiv:2308.12000.

\bibitem[Yang \& Tan(2022)Yang and Tan]{yang2022minimax}
Yang, J. and Tan, V.
\newblock Minimax optimal fixed-budget best arm identification in linear
  bandits.
\newblock In \emph{Advances in Neural Information Processing Systems
  (NeurIPS)}, 2022.

\bibitem[Yang \& Zhu(2002)Yang and Zhu]{yang2002}
Yang, Y. and Zhu, D.
\newblock Randomized allocation with nonparametric estimation for a multi-armed
  bandit problem with covariates.
\newblock \emph{Annals of Statistics}, 30\penalty0 (1):\penalty0 100--121,
  2002.

\bibitem[Zhan et~al.(2022)Zhan, Ren, Athey, and Zhou]{zhan2022policy}
Zhan, R., Ren, Z., Athey, S., and Zhou, Z.
\newblock Policy learning with adaptively collected data, 2022.
\newblock {a}rXiv:2105.02344.

\bibitem[Zheng \& van~der Laan(2011)Zheng and van~der
  Laan]{ZhengWenjing2011CTME}
Zheng, W. and van~der Laan, M.~J.
\newblock Cross-validated targeted minimum-loss-based estimation.
\newblock In \emph{Targeted Learning: Causal Inference for Observational and
  Experimental Data}, Springer Series in Statistics. Springer-Verlag New York,
  2011.

\bibitem[Zhou et~al.(2023)Zhou, Athey, and Wager]{ZhouZhengyuan2018OMPL}
Zhou, Z., Athey, S., and Wager, S.
\newblock Offline multi-action policy learning: Generalization and
  optimization.
\newblock \emph{Operations Research}, 71\penalty0 (1):\penalty0 148--183, 2023.

\bibitem[Zimmert \& Seldin(2021)Zimmert and Seldin]{Zimmert2019}
Zimmert, J. and Seldin, Y.
\newblock Tsallis-inf: An optimal algorithm for stochastic and adversarial
  bandits.
\newblock \emph{Journal of Machine Learning Research}, 22\penalty0 (1), 2021.

\end{thebibliography}

\clearpage

\appendix

\section{Related Work}
\label{sec:related_work}
The MAB problem has been explored as an instance of the sequential decision-making problem \citep{Thompson1933, Robbins1952, Lai1985}, where BAI is a paradigm within this context \citep{EvanDar2006, Audibert2010, Bubeck2011}. 

\paragraph{BAI and ordinal optimization.}
The study of BAI can be traced back to sequential testing, ranking, and selection problems in the 1940s \citep{Wald1947, bechhofer1968sequential}. Subsequent studies in operations research, particularly in the realm of ordinal optimization, have garnered considerable attention \citep{chen2000, glynn2004large}. These studies focus on devising optimal strategies under the assumption of known distributional parameters. The machine learning community has reframed the problem as the BAI problem, placing a specific emphasis on estimating unknown distributions \citep{Jenninson1982,EvanDar2006,Audibert2010, Bubeck2011}.

\citet{Audibert2010} propose the UCB-E and Successive Rejects (SR) strategies. \citet{Bubeck2011} demonstrates minimax optimal strategies for expected simple regret in a non-asymptotic setting by extending the minimax lower bound of \citet{Auer2002}. \citet{Carpentier2016} further enhances the minimax lower bound, showing the optimality of \citet{Audibert2010}'s methods in terms of leading factors in expected simple regret. Based on their lower bound,\citet{yang2022minimax} proposes minimax optimal linear fixed-budget BAI.

In addition to minimax evaluation, \citet{Komiyama2021} develop an optimal strategy whose upper bound for simple Bayesian regret lower bound aligns with their derived lower bound. \citet{atsidakou2023bayesian} propose a Bayes optimal strategy for minimizing the probability of misidentification, revealing a surprising result that a $1/\sqrt{T}$-factor dominates the evaluation.

\citet{Russo2016}, \citet{Qin2017}, and \citet{Shang2020} propose Bayesian BAI strategies that are optimal in terms of posterior convergence rate. \citet{Kasy2021} and \citet{Ariu2021} discuss that such optimality does not necessarily guarantee asymptotic optimality for the probability of misidentification in fixed-budget BAI.

In contrast to the approaches of \citet{Bubeck2011} and \citet{Carpentier2016}, regarding asymptotic optimality, \citet{Kaufman2016complexity} derives distribution-dependent lower bounds for BAI with fixed confidence and a fixed budget, based on change-of-measure arguments and building upon the work of \citet{Lai1985}. Following their work, \citet{Garivier2016} proposes an optimal strategy for BAI with fixed confidence; however, in the fixed-budget setting, there is currently a lack of strategies whose upper bound matches the lower bound established by \citet{Kaufman2016complexity}. This issue has been discussed by \citet{kaufmann2020hdr}, \citet{Ariu2021}, \citet{Qin2022open}, \citet{degenne2023existence}, \citet{kato2023worstcase}, and \citet{wang2023uniformly}.

\citet{Kock2020} generalizes the results of \citet{Bubeck2011} for the case where the parameter of interest is a functional of the distribution and finds that, in contrast to the results \citet{Bubeck2011}, the target allocation ratio is not uniform.

The problem of BAI with contextual information is still under investigation. For example, \citet{Tekin2015}, \citet{GuanJiang2018}, \citet{Deshmukh2018}, \citet{Kato2021Role}, and \citet{Qin2022} consider this problem, but their analyses and settings differ from those employed in this study.

Our proposed strategy assigns treatment arms with a probability depending on their variances. Variance-dependent BAI has been explored by \citet{chen2000}, \citet{glynn2004large}, \citet{Kaufman2016complexity}, \citet{sauro2020rapidly}, \citet{Jourdan2023}, \citet{Kato2023minimax}, \citet{kato2023worstcase, kato2023locally,kato2024agna}, and \citet{lalitha2023fixedbudget}. Our choice of treatment-assignment probability is also inspired by \citet{Laan2008TheCA} and \citet{Hahn2011} in adaptive experimental design for efficient treatment effect estimation.

\paragraph{Decision theory and treatment choice.}
Beyond BAI, our study is further related to statistical decision theory \citep{Wald1949}. \citet{Manski2000, Manski2002, Manski2004} extend this decision theory and introduce the treatment choice problem from a decision theory perspective, independent of BAI. They focus on recommending the best treatment arm using non-experimental, independently and identically distributed (i.i.d.) observations without adaptive experimental design \citep{Schlag2007eleven, Stoye2009, Stoye2012, ManskiTetenov2016, Dominitz2017, Dominitz2022}. \citet{Hirano2009} employs the limit experiment framework \citep{LeCam1972, LeCam1986, LehmCase98, Vaart1991, Vaart1998} for discussing the problem of treatment choice, where the class of alternative hypotheses comprises local models, with parameters of interest converging to the true parameters at a rate of $1/\sqrt{T}$. \citet{Armstrong2022} and \citet{hirano2023asymptotic} apply this framework to adaptive experimental design. \citet{Adusumilli2021risk,adusumilli2022minimax} present an alternative minimax evaluation of bandit strategies for both regret minimization and BAI, based on a formulation utilizing a diffusion process proposed by \citet{Wager2021diffusion} and the limit experiment framework \citep{lecam1960locally, LeCam1972, LeCam1986, Vaart1991, Vaart1998}.

\paragraph{Policy learning.}
Inspired by supervised learning and statistical decision theory, various off-policy learning methods have been proposed. \citet{Swaminathan2015} proposes counterfactual risk minimization, an extension of empirical risk minimization for policy learning. \citet{Kitagawa2018} extends counterfactual risk minimization by linking it to the viewpoint of treatment choice and proposes welfare maximization. \citet{AtheySusan2017EPL} refines the method proposed by \citet{Kitagawa2018}. \citet{ZhouZhengyuan2018OMPL} shows a tight upper bound for general policy learning methods. \citet{zhan2022policy} develops a policy learning method from adaptively collected observations.

For the expected simple regret, \citet{Bubeck2011} shows that the Uniform-Empirical Best Arm (EBA) strategy is minimax optimal for bandit models with bounded supports. \citet{Kock2020} extends the results to cases where parameters of interest are functionals of the distribution and finds that optimal sampling rules are not uniform. \citet{adusumilli2022minimax, Adusumilli2021risk} consider a different minimax evaluation of bandit strategies for both regret minimization and BAI problems, based on a formulation using a diffusion process, as proposed by \citet{Wager2021diffusion}.

\paragraph{Other related work.}
Efficient estimation of ATE via adaptive experiments constitutes another area of related literature. \citet{Laan2008TheCA} and \citet{Hahn2011} propose experimental design methods to estimate ATE more efficiently by using covariate information in treatment assignments. \citet{Karlan2014} examine donors' responses to new information by applying the method of \citet{Hahn2011}. Subsequently, \citet{Meehan2022} and \citet{Kato2020adaptive} attempt to improve these studies, and more recently,  \citet{gupta2021efficient} proposes the use of instrumental variables in this context. \citet{viviano2022experimental} explores experimental designs for network inference.

We employ the Augmented Inverse Probability Weighting (AIPW) estimator in policy learning.
The AIPW estimator has been extensively used in the fields of causal inference and semiparametric inference \citep{Tsiatis2007semiparametric, BangRobins2005, ChernozhukovVictor2018Dmlf}. More recently, it has also been utilized in other MAB problems, as seen in \citet{Kim2021}, \citet{Ito2022}, \citet{Zimmert2019}, and \citet{Masoudian2021}.

\section{Proof of the Minimax Lower Bounds (Theorems \ref{thm:null_lower_bound} and \ref{thm:null_lower_bound_ref})}
\label{sec:proof_thms}

In this section, we establish the proofs for Theorems \ref{thm:null_lower_bound} and \ref{thm:null_lower_bound_ref}. Initially, we focus on proving Theorem \ref{thm:null_lower_bound}, which presents a lower bound applicable for cases where $K \geq 2$. Following this, Section \ref{sec:proof_ref} introduces a refined lower bound specifically for $K = 2$. While the initial lower bound from Theorem \ref{thm:null_lower_bound} is valid for $K = 2$, Theorem \ref{thm:null_lower_bound_ref} offers a tighter lower bound.

\subsection{Transportation Lemma}
\label{sec:transport}
Let $f^a_{P}(y^a\mid s)$ be a density of $Y^a$ conditional on $X=s$ under $P$. Let $\zeta_P(s)$ be a density of $X$ under $P$.

\citet{Kaufman2016complexity} derives the following result based on a change-of-measure argument, which is the principal tool in our lower bound.
Let us define a density of $(Y^1, Y^2, \dots, Y^K, X)$ under  a bandit model $P\in\mathcal{P}$ as
\begin{align*}
    p(y^1, y^2, \dots, y^K, s) = \prod_{a\in[K]} f^a_{P}(y^a\mid s)\zeta_P(s).
\end{align*}

Between two bandit models $P, Q \in \mathcal{P}$, following the proof of Lemma~1 in \citet{Kaufman2016complexity}, referred to as the \emph{transportation} lemma, we define the log-likelihood ratio of a sequence of observations $(X_t, Y_t, A_t)_{t=1}^{T}$ under a given strategy
as
\begin{align*}
    L_T(P, Q) = \sum^T_{t=1} \sum_{a\in[K]}\mathbbm{1}[ A_t = a] \log \left(\frac{f^a_{P}(Y^a_{t}\mid  X_t)}{f^a_{Q}(Y^a_{t}\mid  X_t)}\right).
\end{align*}

As discussed by \citet{Kaufman2016complexity}, the transportation lemma immediately yields the following lemma.
\begin{lemma}[Lemma~1 and Remark~2 in \citet{Kaufman2016complexity}]
\label{prp:helinger}
Suppose that for any two bandit model $P,Q\in\mathcal{P}$ with $K$ treatment arms and for all $a\in[K]$, the distributions $P^a$ and $Q^a$ are mutually absolutely continuous, where $P^a$ and $Q^a$ are distributions of $(Y^a, X)$ under $P$ and $Q$, respectively. Then, for any $a\in[K]$ and $x\in \mathcal{X}$, any strategy satisfies
\begin{align*}
    \big| \mathbb{P}_P(\widehat{a}(x) = a) - \mathbb{P}_Q(\widehat{a}(x) = a)  \big| \leq \sqrt{\frac{\mathbb{E}_P\left[L_T(P, Q)\right]}{2}}
\end{align*}
\end{lemma}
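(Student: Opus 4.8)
The final statement to prove is Lemma~\ref{prp:helinger}, which is a contextual version of Lemma~1 and Remark~2 in \citet{Kaufman2016complexity}. The plan is to reduce the problem to a standard change-of-measure inequality between the laws of the observed trajectory $(X_t, A_t, Y_t)_{t=1}^T$ under $P$ and under $Q$, and then to identify the relevant Kullback--Leibler divergence with $\mathbb{E}_P[L_T(P,Q)]$.

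First I would fix $a \in [K]$ and $x \in \mathcal{X}$ and note that $\widehat{a}(x)$ is a deterministic (or randomized) function of the trajectory $\mathcal{H}_T \coloneqq (X_1, A_1, Y_1, \dots, X_T, A_T, Y_T)$, so the event $\{\widehat{a}(x) = a\}$ lives on the trajectory space. For any event $\mathcal{E}$ measurable with respect to $\mathcal{H}_T$, the total-variation bound combined with Pinsker's inequality gives
\begin{align*}
\big|\mathbb{P}_P(\mathcal{E}) - \mathbb{P}_Q(\mathcal{E})\big| \leq \mathrm{TV}\big(\mathbb{P}_P^{\mathcal{H}_T}, \mathbb{P}_Q^{\mathcal{H}_T}\big) \leq \sqrt{\tfrac{1}{2}\,\mathrm{KL}\big(\mathbb{P}_P^{\mathcal{H}_T}\,\|\,\mathbb{P}_Q^{\mathcal{H}_T}\big)}.
\end{align*}
Applying this to $\mathcal{E} = \{\widehat{a}(x) = a\}$ yields the right shape; what remains is to compute the KL divergence of the trajectory laws.

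The key computational step is the chain-rule/telescoping decomposition of the likelihood of $\mathcal{H}_T$. Under either model the density of $\mathcal{H}_T$ factorizes as a product over $t$ of: the conditional density of $X_t$ given the past (which equals $\zeta_P(X_t)$ resp.\ $\zeta_Q(X_t)$ — but note $\mathrm{marg}_{\mathcal{X}}(P) = \mathrm{marg}_{\mathcal{X}}(Q) = \zeta$ by Definition~\ref{def:ls_bc}, so these cancel), the probability of the assignment $A_t$ given $\mathcal{G}_t$ (which is determined by the strategy alone, identical under $P$ and $Q$, hence cancels), and the conditional density of $Y_t = Y^{A_t}_t$ given $A_t$ and $X_t$, which is $f^{A_t}_P(Y_t \mid X_t)$ resp.\ $f^{A_t}_Q(Y_t \mid X_t)$. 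Taking the log-ratio, only the outcome terms survive, giving exactly $L_T(P,Q) = \sum_{t=1}^T \sum_{a\in[K]} \mathbbm{1}[A_t = a]\log\big(f^a_P(Y^a_t\mid X_t)/f^a_Q(Y^a_t\mid X_t)\big)$. Taking $\mathbb{E}_P$ of this log-ratio gives $\mathrm{KL}(\mathbb{P}_P^{\mathcal{H}_T}\|\mathbb{P}_Q^{\mathcal{H}_T}) = \mathbb{E}_P[L_T(P,Q)]$, and substituting into the Pinsker bound above completes the proof.

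I do not expect a serious obstacle here since this is essentially a restatement of the transportation lemma of \citet{Kaufman2016complexity} adapted to carry the context $X_t$; the only points requiring a little care are (i) verifying that the $X$-marginal and the strategy-induced assignment kernel genuinely cancel in the likelihood ratio — this is where the assumption $\mathrm{marg}_{\mathcal{X}}(P) = \zeta$ for all $P \in \mathcal{P}_\zeta$ and the $\mathcal{G}_t$-measurability of $A_t$ are used — and (ii) the mutual absolute continuity hypothesis, which guarantees all the densities and log-ratios are well-defined and the KL divergence is finite. If one prefers to avoid Pinsker and follow \citet{Kaufman2016complexity} more literally, the same conclusion with the same constant follows from a direct Cauchy--Schwarz / Hellinger manipulation of $\big|\mathbb{P}_P(\mathcal{E}) - \mathbb{P}_Q(\mathcal{E})\big|$; either route gives the stated bound $\sqrt{\mathbb{E}_P[L_T(P,Q)]/2}$.
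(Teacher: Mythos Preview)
Your proposal is correct and follows exactly the standard route: Pinsker's inequality on the trajectory laws combined with the chain-rule identification $\mathrm{KL}(\mathbb{P}_P^{\mathcal{H}_T}\|\mathbb{P}_Q^{\mathcal{H}_T}) = \mathbb{E}_P[L_T(P,Q)]$, with the covariate marginal and the strategy's assignment kernel cancelling in the likelihood ratio. The paper itself gives no proof of this lemma, deferring entirely to Lemma~1 and Remark~2 of \citet{Kaufman2016complexity}; your argument is precisely the content of that reference adapted to the contextual setting, so there is nothing to add.
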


\subsection{Restricted Bandit Models}
\label{sec:rest_bandit_model}
Fix any policy class $\Pi$ with Natarajan dimension $M$.
We can find a set of $M$ points $\mathcal{S} \coloneqq  \{s_1,s_2,\dots, s_M\}$ that are shattered by $\Pi$.

\paragraph{Gaussian bandit models.}
We choose a specific $\zeta$, a marginal distribution on $X$, and define a specific subclass $\mathcal{P}^\dagger$ of a class of location-shift bandit models to derive a lower bound.
Let $\zeta$ be a distribution on $\mathcal{X}$ such that $\supp \zeta = \mathcal{S}$ and $\Pr_\zeta(X=s) = 1/M$ for any $s \in \mathcal{S}$.
We focus on Gaussian bandit models, where outcomes follow Gaussian distributions conditional on contexts. 
A subclass $\mathcal{P}^\dagger \subseteq \mathcal{P}$ is defined as follows:
\begin{align*}
    &\mathcal{P}^\dagger \coloneqq  \Bigg\{ P \in\mathcal{P} \colon
    \forall a \in [K]\ \ \forall s\in\mathcal{S}\ \left[ Y^a \mid X=s \sim \mathcal{N}\left(\mu^a(s), \left(\sigma^a(s)\right)^2\right),\ 
    \mu^a(s) \in \mathbb{R} \right],
    \ \mathrm{marg}_{\mathcal{X}}(P) = \zeta
    \Bigg\},
\end{align*}
where $\sigma^a \colon \mathcal{X} \to \mathbb{R}_+$ is given (introduced in Definition~\ref{def:ls_bc}).

\paragraph{Alternative hypothesis}
The set of alternative hypotheses $\mathcal{Q}^\dagger \subseteq \mathcal{P}^\dagger$ is defined as follows:
\begin{align*}
    \mathcal{Q}^\dagger \coloneqq  \Bigg\{ P \in\mathcal{P}^\dagger \colon\ \ &
    \forall s \in \mathcal{S},\ \  m^s \in \mathbb{R}, \ d(s) \in [K], \  \Delta^{d(s)} > 0, \\
    &
    \forall s \in\mathcal{S},\ \ \mu^{d(s)}(P)(s) = m^s + \Delta^{d(s)}(s),\\
    &\forall s\in \mathcal{S},\ \ \forall b \in [K]\backslash \{d(s)\}\ \ \mu^b(P)(s) = m^s
    \Bigg\}.
\end{align*}
Note that a distribution in $\mathcal{Q}^\dagger$ is characterized by a parameter $(d, \Delta, m)$, where 
$d = (d(s))_s$, $\Delta = (\Delta^{d(s)})_s$, and $m=(m^s)_s$.
We denote a typical element of $\mathcal{Q}^\dagger$ by $Q_{d, \Delta, m}$.

Next, we will define a distribution $P_{d, \Delta, m}^{\sharp, s} \in \mathcal{P}^\dagger$ by
\begin{align}
    \forall a \in [K], \ &\mu^a\left( P_{d, \Delta, m}^{\sharp, s} \right)(s) = m^s, \\
    \forall s' \in \mathcal{S} \setminus \{s\} \forall a \in [K], \ &\mu^a\left( P_{d, \Delta, m}^{\sharp, s} \right)(s') = \mu^a\left( Q_{d, \Delta, m} \right)(s').
\end{align}
Note that, since $P_{d, \Delta, m}^{\sharp, s} \in \mathcal{P}^\dagger$, it is characterized once we fix the conditional means for each arm-context pair.
We write $P_{d, \Delta, m}^{\sharp, s}$ as $P^{\sharp, s}$ when $(d, \Delta, m)$ is clear from the context.

\paragraph{Change of measure.} 
For any $Q_{d, \Delta, m} \in \mathcal{Q}^\dagger$ and for each $s\in\mathcal{S}$, the following equation holds:
\begin{align*}
    L_T(P^{\sharp, s}, Q_{d, \Delta, m}) & = \sum^T_{t=1}\sum_{a\in[K]}\left\{\mathbbm{1}[ A_t = a] \log \left(\frac{f^a_{P^{\sharp, s}}(Y^a_{t}\mid  X_t)}{f^a_{Q_{d, \Delta, m}}(Y^a_{t}\mid  X_t)}\right)\right\}\\
    & = \sum^T_{t=1}\sum_{a\in[K]}\sum_{s'\in\mathcal{S}}\left\{\mathbbm{1}[ A_t = a] \log \left(\frac{f^a_{P^{\sharp, s}}(Y^a_{t}\mid  s')}{f^a_{Q_{d, \Delta, m}}(Y^a_{t}\mid  s')}\right)\right\}\mathbbm{1}[X_t = s']\\
    & \overset{(i)}{=} \sum^T_{t=1}\sum_{a\in[K]}\left\{\mathbbm{1}[ A_t = a] \log \left(\frac{f^a_{P^{\sharp, s}}(Y^a_{t}\mid  s)}{f^a_{Q_{d, \Delta, m}}(Y^a_{t}\mid  s)}\right)\right\}\mathbbm{1}[X_t = s]\\
    & \overset{(ii)}{=}  \sum^T_{t=1}\left\{\mathbbm{1}[ A_t = d(s)] \log \left(\frac{f^{d(s)}_{P^{\sharp, s}}(Y^{d(s)}_{t}\mid  s)}{f^{d(s)}_{Q_{d, \Delta, m}}(Y^{d(s)}_{t}\mid  s)}\right)\right\}\mathbbm{1}[X_t = s].
\end{align*}
In $\overset{(i)}{=}$, we used $\frac{f^a_{P^{\sharp, s}}(Y^a_{t}\mid  s')}{f^a_{Q_{d, \Delta, m}}(Y^a_{t}\mid  s')} = 1$ for $s' \neq s$ from the definition of $P^{\sharp, s}$. In $\overset{(ii)}{=}$, we used the assumption that for each $b \in [K]\backslash \{d(s)\}$, it holds that $f^{b}_{P^{\sharp, s}}(y\mid  s) = f^{b}_{Q_{d, \Delta, m}}(y\mid  s)$. 

Given a strategy, its \emph{assignment ratio} $w \colon [K] \times \mathcal{X} \to [0,1]$ under $P$ is defined as follows:
\[
w(a \mid x) \coloneqq \mathbb{E}_P\left[ \sum_{t=1}^T \mathbbm{1}\{A_t = a\} \mid X=x \right].
\]
For submodel $\mathcal{Q}^\dagger$, the following lemma holds. 
\begin{lemma}
\label{lem:semipara}
For any $s\in\mathcal{S}$, $P^{\sharp, s}\in\mathcal{P}^\dagger$, $Q_{d, \Delta, m}\in \mathcal{Q}^\dagger$, and $T \in \mathbb{N}$, the following equality holds:
\begin{align}
    \mathbb{E}_{P^{\sharp, s}}\left[L_T(P^{\sharp, s}, Q_{d, \Delta, m})\right]
    =
     \frac{T}{2M}\frac{\left(\Delta^{d(s)}(s)\right)^2}{\frac{\left(\sigma^{d(s)}(s)\right)^2}{w(d(s)\mid s)}}.
\end{align}
\end{lemma}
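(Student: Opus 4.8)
The plan is to reduce the expected log-likelihood ratio to a single Gaussian KL computation, using the simplification of $L_T(P^{\sharp,s}, Q_{d,\Delta,m})$ that was already derived just above the lemma statement. We have shown there that
\[
L_T(P^{\sharp,s}, Q_{d,\Delta,m}) = \sum_{t=1}^T \mathbbm{1}[A_t = d(s)]\,\mathbbm{1}[X_t = s]\,\log\!\left(\frac{f^{d(s)}_{P^{\sharp,s}}(Y^{d(s)}_t\mid s)}{f^{d(s)}_{Q_{d,\Delta,m}}(Y^{d(s)}_t\mid s)}\right),
\]
so only the arm $d(s)$ at context $s$ contributes. First I would take the expectation under $P^{\sharp,s}$ and push it inside the sum. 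For each fixed $t$, I condition on $\mathcal{G}_t = \sigma(X_1,A_1,Y_1,\dots,X_{t-1},A_{t-1},Y_{t-1},X_t)$: the indicator $\mathbbm{1}[A_t = d(s)]\mathbbm{1}[X_t=s]$ is $\mathcal{G}_t$-measurable, while conditionally on $\{X_t = s\}$ the outcome $Y^{d(s)}_t$ is drawn from $\mathcal{N}(m^s,(\sigma^{d(s)}(s))^2)$ under $P^{\sharp,s}$, independently of the past. Hence the conditional expectation of the log-ratio term equals the KL divergence
\[
\mathrm{KL}\!\Big(\mathcal{N}\big(m^s,(\sigma^{d(s)}(s))^2\big)\,\Big\|\,\mathcal{N}\big(m^s+\Delta^{d(s)}(s),(\sigma^{d(s)}(s))^2\big)\Big) = \frac{\left(\Delta^{d(s)}(s)\right)^2}{2\left(\sigma^{d(s)}(s)\right)^2}
\]
(using the standard formula for the KL divergence between two Gaussians with common variance), a deterministic constant. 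Therefore
\[
\mathbb{E}_{P^{\sharp,s}}\!\left[L_T(P^{\sharp,s}, Q_{d,\Delta,m})\right] = \frac{\left(\Delta^{d(s)}(s)\right)^2}{2\left(\sigma^{d(s)}(s)\right)^2}\,\mathbb{E}_{P^{\sharp,s}}\!\left[\sum_{t=1}^T \mathbbm{1}[A_t = d(s)]\,\mathbbm{1}[X_t = s]\right].
\]

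Next I would identify the remaining expectation with the assignment-ratio quantity. Since $\mathrm{marg}_{\mathcal{X}}(P^{\sharp,s}) = \zeta$ and $\Pr_\zeta(X=s) = 1/M$, and since conditionally on $X_t = s$ the event $\{A_t = d(s)\}$ behaves identically across $t$ in the sense captured by the definition of $w(\cdot\mid\cdot)$ — namely $w(d(s)\mid s) = \mathbb{E}_{P^{\sharp,s}}\!\big[\sum_{t=1}^T \mathbbm{1}[A_t = d(s)] \mid X = s\big]$ — I would write
\[
\mathbb{E}_{P^{\sharp,s}}\!\left[\sum_{t=1}^T \mathbbm{1}[A_t = d(s)]\,\mathbbm{1}[X_t = s]\right] = \Pr_\zeta(X = s)\cdot w(d(s)\mid s) = \frac{w(d(s)\mid s)}{M}.
\]
Substituting this back gives
\[
\mathbb{E}_{P^{\sharp,s}}\!\left[L_T(P^{\sharp,s}, Q_{d,\Delta,m})\right] = \frac{\left(\Delta^{d(s)}(s)\right)^2}{2\left(\sigma^{d(s)}(s)\right)^2}\cdot\frac{w(d(s)\mid s)}{M} = \frac{T}{2M}\cdot\frac{\left(\Delta^{d(s)}(s)\right)^2}{\left(\sigma^{d(s)}(s)\right)^2/w(d(s)\mid s)},
\]
which is the claimed identity — here one absorbs the sample size $T$ into the normalization of $w$, i.e. $w(a\mid x)$ is defined (as above) as $\mathbb{E}_P[\sum_{t=1}^T \mathbbm{1}\{A_t=a\}\mid X=x]$ so that the factor $T$ appears once the per-context assignment counts are expressed as $T$ times a proportion; I would make the bookkeeping of this $T$ explicit in the write-up to match the RHS exactly.

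The main obstacle is the careful handling of the conditioning and the martingale structure: one must be sure that, under $P^{\sharp,s}$, the conditional law of $Y^{d(s)}_t$ given $\mathcal{G}_t$ and $\{X_t = s\}$ is exactly $\mathcal{N}(m^s,(\sigma^{d(s)}(s))^2)$ regardless of the (adaptive, $\mathcal{G}_t$-measurable) choice of $A_t$ — this is where the location-shift/Gaussian structure of $\mathcal{P}^\dagger$ and the fact that potential outcomes are generated before assignment are used — and that the density ratio in the $t$-th term depends only on $Y^{d(s)}_t$ and $s$, so that the tower property cleanly factors the KL constant out of the sum. The rest is the standard Gaussian KL formula and the definitional identification of $w(d(s)\mid s)$, both routine.
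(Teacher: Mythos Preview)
Your proposal is correct and follows essentially the same route as the paper: reduce to the single arm--context pair $(d(s),s)$, pull out the Gaussian KL constant $\tfrac{(\Delta^{d(s)}(s))^2}{2(\sigma^{d(s)}(s))^2}$ by conditioning, and then identify the remaining expected pull count with $w(d(s)\mid s)$ and $\Pr_\zeta(X=s)=1/M$. Your conditioning on $\mathcal{G}_t$ is in fact slightly cleaner than the paper's proof (which conditions only on $X_t=s$ and implicitly assumes the indicator and the log-ratio factor), and you correctly flag the bookkeeping ambiguity in how the paper's definition of $w(a\mid x)$ interacts with the factor $T$; the paper's own proof simply treats $w(d(s)\mid s)$ as the per-round conditional probability $\mathbb{E}[\mathbbm{1}\{A_t=d(s)\}\mid X_t=s]$ (constant in $t$), which is what is needed for the RHS to come out as stated.
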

\begin{proof}
We have
\begin{align*}
    &\mathbb{E}_{P^{\sharp, s}}\left[L_T(P^{\sharp, s}, Q_{d, \Delta, m})\right]\\
    &= \mathbb{E}_{P^{\sharp, s}}\left[\sum^T_{t=1}\left\{\mathbbm{1}[ A_t = d(s)] \log \left(\frac{f^{d(s)}_{P^{\sharp, s}}(Y^{d(s)}_{t}\mid  s)}{f^{d(s)}_{Q_{d, \Delta, m}}(Y^{d(s)}_{t}\mid  s)}\right)\right\}\mathbbm{1}[X_t = s]\right]\\
    &= \sum^T_{t=1}\mathbb{E}_{P^{\sharp, s}}\left[\mathbb{E}_{P^{\sharp, s}}\left[\left\{\mathbbm{1}[ A_t = d(s)] \log \left(\frac{f^{d(s)}_{P^{\sharp, s}}(Y^{d(s)}_{t}\mid  s)}{f^{d(s)}_{Q_{d, \Delta, m}}(Y^{d(s)}_{t}\mid  s)}\right)\right\}\mid X_t = s\right]\mathbbm{1}[X_t = s]\right]\\
    &= \sum^T_{t=1}\mathbb{E}_{P^{\sharp, s}}\left[w(d(s)\mid s) \mathbb{E}_{P^{\sharp, s}}\left[\log \left(\frac{f^{d(s)}_{P^{\sharp, s}}(Y^{d(s)}_{t}\mid  s)}{f^{d(s)}_{Q_{d, \Delta, m}}(Y^{d(s)}_{t}\mid  s)}\right)\mid X_t = s\right]\mathbbm{1}[X_t = s]\right]\\
    &= T w(d(s)\mid s)\frac{\left(\mu^{d(s)}(Q_{d, \Delta, m})(s) - \mu^{d(s)}(P^{\sharp, s})(s)\right)^2}{2\left(\sigma^{d(s)}(s)\right)^2} / M \\
    &=
    \frac{T}{2M} \frac{\left(\Delta^{d(s)}(s)\right)^2}{\left(\sigma^{d(s)}(s)\right)^2 / w(d(s)\mid s)}.
\end{align*}
To show the second last equality, we use
\[
\mathbb{E}_{P^{\sharp, s}}\left[\log \left(\frac{f^{d(s)}_{P^{\sharp, s}}(Y^{d(s)}_{t}\mid  s)}{f^{d(s)}_{Q_{d, \Delta, m}}(Y^{d(s)}_{t}\mid  s)}\right)\mid X_t = s\right] = \frac{\left(\mu^{d(s)}(Q_{d, \Delta, m})(s) - \mu^{d(s)}(P^{\sharp, s})(s)\right)^2}{2\left(\sigma^{d(s)}(s)\right)^2},
\]
which corresponds to a KL divergence between two Gaussian distributions.
\end{proof}

\subsection{Proof of Theorem~\ref{thm:null_lower_bound}: General Minimax Lower Bounds}
\label{appdx:gen_minimax}
We derive the lower bound of the expected simple regret. 
\begin{proof}[Proof of Theorem~\ref{thm:null_lower_bound}]
Our proof below is built on the following line of reasoning:
First, suppose that nature selects a true distribution in a two-step process:
Initially, nature determines $(e^a(s))_{s,a} \in \mathbb{R}^{M \times K}_{+}$, ensuring that $\sum_a e^a(s)=1$ for each state $s \in \mathcal{S}$. It then selects the optimal arms $d(s)$ with probability $e^{d(s)}(s)$ for each state $s$. Subsequently, nature chooses some $(d, \Delta, m)$, determining $Q_{d, \Delta, m} \in \mathcal{Q}^\dagger$, which represents the true distribution.\footnote{
Note that the choice of $(m_s)_{s \in \mathcal{S}}$ does not affect any objects in the proof of Theorem~\ref{thm:null_lower_bound}.
}
We then focus on the expected simple regret that the decision-maker could encounter under these strategies of nature, regardless of the strategy employed by the decision-maker. 
By construction, there must exist
at least one distribution $Q_{d, \Delta, m} \in \mathcal{Q}^\dagger \subseteq \mathcal{P}$ that attains this regret. Hence, this value serves as a lower bound for the regret.

Fix any strategy of the decision-maker.
First, observe that, for each $P\in\mathcal{P}^\dagger$, the expected simple regret can be simplified as follows:
\begin{align}
    R(P)(\pi) &= \mathbb{E}_P\left[\sum_{a\in[K]}\left(\mu^{a^*(P)(X)}(P)(X) - \mu^a(P)(X)\right)\mathbb{P}_P\left(\widehat{a}_T(X) = a \right)\right]\\
    &= \sum_{s\in\mathcal{S}}\sum_{a\in[K]}\left(\mu^{a^*(P)(s)}(P)(s) - \mu^a(P)(s)\right)\mathbb{P}_P\left(\widehat{a}_T(s) = a\right)\mathbb{P}_P\left( X = s\right)\\
    &= \sum_{s\in\mathcal{S}}\sum_{a\in[K]}\left(\mu^{a^*(P)(s)}(P)(s) - \mu^a(P)(s)\right)\mathbb{P}_P\left(\widehat{a}_T(s) = a\right) / M.
\end{align}

Next, we decompose the expected simple regret by using the definition of $\mathcal{P}^\dagger$: fix any $(e^a(s))_{s,a}$ and $(d, \Delta, m)$. Then, the expected simple regret the decision-maker experiences is:
\begin{align}
&\sum_{s\in\mathcal{S}}\sum_{d(s)\in[K]}e^{d(s)}(s)\sum_{b\in[K]\backslash \{d(s)\}}\left(\mu^{d(s)}(Q_{d, \Delta, m})(s) - \mu^b(Q_{d, \Delta, m})(s)\right)\mathbb{P}_{Q_{d, \Delta, m}}\left(\widehat{a}_T(s) = b\right) / M \\
    &=   \sum_{s\in\mathcal{S}}\sum_{d(s)\in[K]}e^{d(s)}(s)\left\{\sum_{b\in[K]\backslash \{d(s)\}}\Delta^{d(s)}(s)\mathbb{P}_{Q_{d, \Delta, m}}\left(\widehat{a}_T(s) = b\right)\right\} / M\\
    &=  \sum_{s\in\mathcal{S}}\sum_{d(s)\in[K]}e^{d(s)}(s)\left\{\Delta^{d(s)}(s)\mathbb{P}_{Q_{d, \Delta, m}}\left(\widehat{a}_T(s) \neq d(s)\right)\right\} / M\\
    &=  \sum_{s\in\mathcal{S}}\sum_{d(s)\in[K]}e^{d(s)}(s)\left\{\Delta^{d(s)}(s)\left(1-\mathbb{P}_{Q_{d, \Delta, m}}\left(\widehat{a}_T(s) = d(s)\right)\right)\right\} / M, \label{eq:simple_regret_intermediate}
\end{align}
Note that, as we saw in the beginning of the proof, the regret lower bound is bounded from below by $\eqref{eq:simple_regret_intermediate}$.

From Lemma~\ref{prp:helinger}. and the definition of null consistent strategies, we have
\begin{align}\eqref{eq:simple_regret_intermediate}&=\sum_{s\in\mathcal{S}}\sum_{d(s)\in[K]}e^{d(s)}(s)\Bigg\{\Delta^{d(s)}\left(1-  \mathbb{P}_{P^{\sharp, s}}\left(\widehat{a}_T(s) = d(s)\right) +
    \mathbb{P}_{P^{\sharp, s}}\left(\widehat{a}_T(s) = d(s)\right) -
    \mathbb{P}_{Q_{d, \Delta, m}}\left(\widehat{a}_T(s) = d(s)\right)\right)
    \Bigg\} / M\\ 
    &\geq \sum_{s\in\mathcal{S}}\sum_{d(s)\in[K]}e^{d(s)}(s)\left\{\Delta^{d(s)}(s)\left\{1 - \mathbb{P}_{P^{\sharp, s}}\left(\widehat{a}_T(s) = d(s)\right) - \sqrt{\frac{\mathbb{E}_{P^{\sharp, s}}\left[L^{d(s)}_T(P^{\sharp, s}, Q_{d, \Delta, m})\right]}{2}}\right\}\right\} / M. \label{eq:simple_regret_intermediate_2}
\end{align}

Since we assume that the strategy is null consistent, we have $\mathbb{P}_{P^{\sharp, s}}\left(\widehat{a}_T(s) = d(s)\right) = 1/K + o(1)$.
By Lemma~\ref{lem:semipara}, we obtain
\begin{align}
\eqref{eq:simple_regret_intermediate_2}
    =\sum_{s\in\mathcal{S}}\sum_{d(s)\in[K]}e^{d(s)}(s)\Delta^{d(s)}(s) \left\{ 1 - \frac{1}{K} - \sqrt{\frac{T(\Delta^{d(s)}(s))^2}{4M \frac{\left(\sigma^{d(s)}(s)\right)^2}{w(d(s)\mid s)}}} \right\}/M + o(1).
\end{align}


Let
\[
\mathcal{E} \coloneqq \left\{
(e^d(s))_{d,s} \in \mathbb{R}^{[K] \times \mathcal{S}} \colon
e^d(s) \in [0,1], \ \forall s \in \mathcal{S}, \sum_{d \in [K]} e^d(s) =1
\right\},
\]
and denote its typical element by $\bm{e}$.
In principle, if we take the supremum with respect to $\bm{e} \in \mathcal{E}$ and $(\Delta^{d(s)}(s))_s \in \mathbb{R}_{>0}^{M}$ in \eqref{eq:simple_regret_intermediate_2}, that will be a regret lower bound.
By substituting $\Delta^{d(s)}(s) = \sqrt{M{\frac{\left(\sigma^{d(s)}(s)\right)^2}{w(d(s)\mid s)}}/{2T}}$, taking the infimum with respect to $w$, and taking the supremum with respect to $(e^a(s))_{a,s}$, we obtain the following regret lower bound, which is lower than the lower bound obtained by taking the supremum with respect to $\bm{e}$ and $\Delta$:
\begin{align}
    &\frac{1}{8}
    \frac{1}{\sqrt{MT}}
    \sum_{s\in\mathcal{S}}
    \sup_{\bm{e} \in \mathcal{E}}
    \inf_{\bm{w} \in \mathcal{W}}\sum_{d(s)\in[K]}e^{d(s)}(s)\left\{\sqrt{{\frac{\left(\sigma^{d(s)}(s)\right)^2}{w(d(s)\mid s)}}}\right\} + o(1).
\label{eq:simple_regret_intermediate_3}
\end{align}

Fix any $s \in \mathcal{S}$.
Let $h(\bm{e},\bm{w}) \coloneqq \sum_{d\in[K]}e^d(s)\sqrt{{\frac{\left(\sigma^{d}(s)\right)^2}{w(d\mid s)}}}$ and consider
$
\sup_{\bm{e} \in \mathcal{E}}
\inf_{\bm{w} \in \mathcal{W}} h(\bm{e},\bm{w})$.
Note that $h$ is concave in $\bm{e}$, convex in $\bm{w}$, and continuous in $(\bm{e}, \bm{w})$. Denote the closure of $\mathcal{W}$ by $\overline{\mathcal{W}}$.
First, observe that
\[
\sup_{\bm{e} \in \mathcal{E}}
\inf_{\bm{w} \in \mathcal{W}} h(\bm{e},\bm{w})
\geq
\sup_{\bm{e} \in \mathcal{E}}
\inf_{\bm{w} \in \overline{\mathcal{W}}} h(\bm{e},\bm{w}).
\]
Since $\overline{\mathcal{W}}$ is compact and $h(\bm{e}, \cdot)$ is continuous for each $\bm{e}$, we have
\[
\sup_{\bm{e} \in \mathcal{E}}
\inf_{\bm{w} \in \overline{\mathcal{W}}} h(\bm{e},\bm{w})
=
\sup_{\bm{e} \in \mathcal{E}}
\min_{\bm{w} \in \overline{\mathcal{W}}} h(\bm{e},\bm{w}).
\]
By Berge's Maximum Theorem, $\min_{\bm{w} \in \overline{\mathcal{W}}} h(\bm{e},\bm{w})$ is continuous in $\bm{w}$. Since $\mathcal{E}$ is compact, we have
\[
\sup_{\bm{e} \in \mathcal{E}}
\min_{\bm{w} \in \overline{\mathcal{W}}} h(\bm{e},\bm{w})
=
\max_{\bm{e} \in \mathcal{E}}
\min_{\bm{w} \in \overline{\mathcal{W}}} h(\bm{e},\bm{w}).
\]
Since $\mathcal{E}$ and $\overline{\mathcal{W}}$ are both compact and convex and $h$ is concave-convex, by the minimax theorem, we have
\[
\max_{\bm{e} \in \mathcal{E}}
\min_{\bm{w} \in \overline{\mathcal{W}}} h(\bm{e},\bm{w})
=
\min_{\bm{w} \in \overline{\mathcal{W}}} 
\max_{\bm{e} \in \mathcal{E}}
h(\bm{e},\bm{w}).
\]
Combining these results, we have
\begin{align}
    \sup_{\bm{e} \in \mathcal{E}}
\inf_{\bm{w} \in \mathcal{W}} h(\bm{e},\bm{w})
\geq
\min_{\bm{w} \in \overline{\mathcal{W}}} 
\max_{\bm{e} \in \mathcal{E}}
h(\bm{e},\bm{w}).
\end{align}


Let us consider $
\max_{\bm{e} \in \mathcal{E}}
h(\bm{e},\bm{w})$ for a fixed $w$.
At the optimum, we have
$e^d(s) = 1$ iff
$d \in \argmax_{d \in [K]} \sqrt{{\frac{\left(\sigma^{d}(s)\right)^2}{w(d\mid s)}}}$. Thus, we have
\begin{align}
    \min_{\bm{w} \in \overline{\mathcal{W}}} 
\max_{\bm{e} \in \mathcal{E}}
h(\bm{e},\bm{w})
=
\min_{\bm{w} \in \overline{\mathcal{W}}} 
\max_{d \in [K]}
\sqrt{{\frac{\left(\sigma^{d}(s)\right)^2}{w(d\mid s)}}}.
\label{eq:opt_lbd}
\end{align}

For each $s$, we consider the following constrained optimization:
\begin{align}
\label{eq:opt_prob}
    \min_{R \geq 0,
    w \in \overline{\mathcal{W}}
    }
    &\quad R
    \\
    \mathrm{s.t.} &\quad R \geq \sqrt{\frac{\left(\sigma^{d}(s)\right)^2}{w(d\mid s)}}\quad \forall d\in[K]\nonumber\\
\end{align}
Note that if $(R^*, w^*)$ is the optimal solution to this problem, $w^*$ is also the optimal solution to $\eqref{eq:opt_lbd}$.
After some algebra, we can show that
\[
w^*(d \mid s) = \frac{(\sigma^d(s))^2}{\sum_{a \in [K]} (\sigma^a(s))^2}
\]
for each $d \in [K]$.


Thus, we have
\begin{align}
    \eqref{eq:opt_lbd}
    =
    \sqrt{\sum_{a \in [K]} (\sigma^a(s))^2},
\end{align}
and hence
\begin{align}
\eqref{eq:simple_regret_intermediate_3}
    &\geq
    \frac{1}{8} \frac{1}{\sqrt{TM}} \sum_{s \in \mathcal{S}}
    \sqrt{ \sum_{a \in [K]} (\sigma^a(s))^2} 
    =
    \frac{1}{8} \frac{1}{\sqrt{T}} \sum_{s \in \mathcal{S}}
    \sqrt{M \sum_{a \in [K]} (\sigma^a(s))^2} \frac{1}{M} + o(1) \\
    &=
    \frac{1}{8} \frac{1}{\sqrt{T}} \mathop{\mathbb{E}}_{X \sim \zeta} \left[
    \sqrt{ M \sum_{a \in [K]} (\sigma^a(s))^2}
    \right] + o(1)
    .
    \label{eq:reg_lbd_final}
\end{align}
The last equality follows since we choose $\xi$ so that $\xi$ puts equal probabilities on $\mathcal{S}$.
This implies that, for any strategy of the decision-maker, there exists a distribution in $\mathcal{P}$ under which the simple regret is lower bounded by \eqref{eq:reg_lbd_final}.


\end{proof}

Although this lower bound is applicable to a case with $K = 2$, we can tighten the lower bound by changing the definiton of the parametric submodel.

\subsection{Refined Minimax Lower Bounds for Two-armed Bandits (Proof of Theorem~\ref{thm:null_lower_bound_ref})}
\label{sec:proof_ref}
When $K = 2$, we can derive a tighter lower bound. As in previous sections, we choose $\xi \in \mathcal{M}(\mathcal{X})$ such that $\Pr_\xi(X=s) = 1/M$ for each $s \in \mathcal{S}$.

\paragraph{Change of measure.} 
For any $Q_{d, \Delta, m} \in \mathcal{Q}^\dagger$ and $s \in \mathcal{S}$, the following equation holds:
\begin{align*}
    L_T(P^{\sharp, s}, Q_{d, \Delta, m})
    &= \sum^T_{t=1}\sum_{a\in[K]}\left\{\mathbbm{1}[ A_t = a] \log \left(\frac{f^a_{P^{\sharp, s}}(Y^a_{t}\mid  X_t)}{f^a_{Q_{d, \Delta, m}}(Y^a_{t}\mid  X_t)}\right)\right\}\\
    &= \sum^T_{t=1}\sum_{a\in[K]}\sum_{s'\in\mathcal{S}}\left\{\mathbbm{1}[ A_t = a] \log \left(\frac{f^a_{P^{\sharp, s}}(Y^a_{t}\mid  s')}{f^a_{Q_{d, \Delta, m}}(Y^a_{t}\mid  s')}\right)\right\}\mathbbm{1}[X_t = s']\\
    & \overset{(i)}{=} \sum^T_{t=1}\sum_{a\in[K]}\left\{\mathbbm{1}[ A_t = a] \log \left(\frac{f^a_{P^{\sharp, s}}(Y^a_{t}\mid  s)}{f^a_{Q_{d, \Delta, m}}(Y^a_{t}\mid  s)}\right)\right\}\mathbbm{1}[X_t = s].
\end{align*}
In $\overset{(i)}{=}$, we used $\frac{f^a_{P^{\sharp, s}}(Y^a_{t}\mid  s')}{f^a_{Q_{d, \Delta, m}}(Y^a_{t}\mid  s')} = 1$ for $s' \neq s$ from the definition of $P^{\sharp, s}$.

\begin{proof}[Proof of Theorem~\ref{thm:null_lower_bound_ref}]

By the same argument as in Section~\ref{appdx:gen_minimax}, we have the following lower bound for the simple regret (cf. equation~\eqref{eq:simple_regret_intermediate_2}):
\begin{align}\sum_{s\in\mathcal{S}}\sum_{d(s)\in[K]}e^{d(s)}(s)\left\{\Delta^{d(s)}(s)\left\{1 - \mathbb{P}_{P^{\sharp, s}}\left(\widehat{a}_T(s) = d(s)\right) - \sqrt{\frac{\mathbb{E}_{P^{\sharp, s}}\left[L^{d(s)}_T(P^{\sharp, s}, Q_{d, \Delta, m})\right]}{2}}\right\}\right\} / M.
    \label{eq:lbd_1}
\end{align}

By the same argument as in the proof of Lemma~\ref{lem:semipara}, we have
\begin{align*}
    \mathbb{E}_{P^{\sharp, s}}\left[L_T(P^{\sharp, s}, Q_{d, \Delta, m})\right]
    &= \frac{T}{2M} \Bigg\{w(1\mid s)\frac{\left(\mu^{1}(Q_{d, \Delta, m})(s) - m^s\right)^2}{\left(\sigma^{1}(s)\right)^2}  + w(2\mid s)\frac{\left(\mu^{2}(Q_{d, \Delta, m})(s) - m^s\right)^2}{\left(\sigma^{2}(s)\right)^2} \Bigg\}.
\end{align*}

We consider the following optimization problem with respect to $m^s$:
\[
\min_{m_s \in \mathbb{R}}
\Bigg\{w(1\mid s)\frac{\left(\mu^{1}(Q_{d, \Delta, m})(s) - m^s\right)^2}{\left(\sigma^{1}(s)\right)^2}  + w(2\mid s)\frac{\left(\mu^{2}(Q_{d, \Delta, m})(s) - m^s\right)^2}{\left(\sigma^{2}(s)\right)^2} \Bigg\}.
\]
The solution is
\begin{align}
    m^s = \frac{c^1\mu^{1}(Q_{d, \Delta, m})(s) + c^2\mu^{2}(Q_{d, \Delta, m})(s)}{c^1 + c^2},
\end{align}
where
\begin{align}
    c^a = \frac{w(a\mid s)}{\left(\sigma^{a}(s)\right)^2},
\end{align}
and, the optimal value is
\begin{align}
    \left(\Delta^{d(s)} \right)^2 \left[ \frac{(\sigma^1(s))^2}{w(1 \mid s)} + \frac{(\sigma^2(s))^2}{w(2 \mid s)} \right]^{-1}. \label{eq:lbd_opt_k=2}
\end{align}

Since we assume a null consistent strategy, we have $\mathbb{P}_{P^{\sharp, s}}\left(\widehat{a}_T(s) = d(s)\right) = 1/K + o(1)$.
By \eqref{eq:lbd_1} and \eqref{eq:lbd_opt_k=2}, we have the following regret lower bound:
\begin{align}
\inf_{w\in\mathcal{W}}\sup_{\stackrel{\Delta^{d(s)}\in(0, \infty)}{\mathrm{for}\ s\in\mathcal{S}}}\sum_{s\in\mathcal{S}}\sum_{d(s)\in[K]}e^{d(s)}(s)\Delta^{d(s)} \left\{ 1 - \frac{1}{K} - \sqrt{\frac{T\Delta^2(s)}{4M\Bigg\{\frac{\left(\sigma^{1}(s)\right)^2}{w(1\mid s)} + \frac{\left(\sigma^{2}(s)\right)^2}{w(2\mid s)}\Bigg\}}} \right\}/M
+o(1).
\end{align}

By substituting $\Delta^{d(s)} = \sqrt{M\Bigg\{\frac{\left(\sigma^{1}(s)\right)^2}{w(1\mid s)} + \frac{\left(\sigma^{2}(s)\right)^2}{w(2\mid s)}\Bigg\}/2 T}$, we obtain the following regret lower bound:
\begin{align}
    &\frac{1}{8}\frac{1}{\sqrt{MT}}
    \sum_{s\in\mathcal{S}}\sup_{e \in \mathcal{E}}\inf_{\bm{w} \in \mathcal{W}}\sum_{d(s)\in[K]}e^{d(s)}(s)\left\{\sqrt{\frac{\left(\sigma^{1}(s)\right)^2}{w(1\mid s)} + \frac{\left(\sigma^{2}(s)\right)^2}{w(2\mid s)}}\right\} +o(1) \\
    &=
    \frac{1}{8}\frac{1}{\sqrt{MT}}
    \sum_{s\in\mathcal{S}}\sup_{e \in \mathcal{E}}\inf_{\bm{w} \in \mathcal{W}}
    \left\{\sqrt{\frac{\left(\sigma^{1}(s)\right)^2}{w(1\mid s)} + \frac{\left(\sigma^{2}(s)\right)^2}{w(2\mid s)}}\right\}
    \sum_{d(s)\in[K]}e^{d(s)}(s) +o(1) \\
    &=
    \frac{1}{8}\frac{1}{\sqrt{MT}}
    \sum_{s\in\mathcal{S}}\inf_{\bm{w} \in \mathcal{W}}
    \left\{\sqrt{\frac{\left(\sigma^{1}(s)\right)^2}{w(1\mid s)} + \frac{\left(\sigma^{2}(s)\right)^2}{w(2\mid s)}}\right\} +o(1).
\end{align}

Consider the following optimization problem:
\[
\min_{w \in \mathcal{W}} \sqrt{\frac{\left(\sigma^{1}(s)\right)^2}{w(1\mid s)} + \frac{\left(\sigma^{2}(s)\right)^2}{w(2\mid s)}}.\]
The solution is
\[w(a\mid s) = \frac{\sigma^{a}(s)}{\sigma^{1}(s) + \sigma^{2}(s)},\]
and the optimal value is 
\[\sqrt{\Big(\sigma^{1}(s) + \sigma^{2}(s)\Big)^2}.\]

Therefore, we obtain the following lower bound:
\begin{align}
    &\frac{1}{8} \frac{1}{\sqrt{MT}} \sum_{s \in \mathcal{S}}
    \sqrt{\left(\sigma^1(s) + \sigma^2(s) \right)^2} +o(1)
    = \frac{1}{8}\frac{1}{\sqrt{T}} \mathop{\mathbb{E}}_{X \sim \xi} \sqrt{M \left(\sigma^1(X) + \sigma^2(X) \right)^2 } +o(1).
\end{align}
This completes the proof.

\end{proof}

\section{Proof of Theorem~\ref{thm:regret_upper_bound}}
In the following sections, we prove the upper bound. To show the upper bound, we aim to use the result of \citet{ZhouZhengyuan2018OMPL}, which provides an upper bound for expected simple regret in the problem of policy learning with multiple treatment arms. Here, note that we cannot directly apply the results of \citet{ZhouZhengyuan2018OMPL} because they assume that the observations are i.i.d. in their study, but observations are non-i.i.d. in our study. 

\citet{ZhouZhengyuan2018OMPL} assumes that their outcomes are bounded. Therefore, to apply their result, for analysis, let us define the following quantities:
\begin{align*}
&\mu^a_c(P)(X) \coloneqq \mathbb{E}_P[c_T(Y^a)],\quad Q_c(P)(\pi) \coloneqq \mathop{\mathbb{E}}_{X \sim \zeta}\left[\sum_{a\in[K]}\pi(a\mid X)\mu^{a}_c(P)(X)\right],\\
&\pi^*_c(P) \coloneqq \argmax_{\pi\in\Pi} Q_c(P)(\pi),\quad R_c(P)(\pi) \coloneqq  Q_c(P)(\pi^*_c) - Q_c(P)(\pi). 
\end{align*}
These quantities are used in the conditions in the main theorem and the following proof. For $Q$ and $Q_c$, we state the following lemma. We omit the proof. 
\begin{lemma}
\label{asm:conv_pol}
There exist $0 < \alpha < 1/2$ and $U_T = T^\alpha$ such that and for any $\zeta$, and any $P\in\mathcal{P}_\zeta$, $\sup_{\pi\in\Pi}\Big|\sqrt{T}Q(P)(\pi) - \sqrt{T}Q_c(P)(\pi)\Big| \to 0$ holds as $T\to \infty$. 
\end{lemma}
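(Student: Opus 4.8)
The plan is to reduce the supremum over $\pi\in\Pi$ to a pointwise truncation error and then kill that error using the subgaussian tails of the potential outcomes, with $U_T$ growing polynomially. First I would note that, because $\pi(\cdot\mid x)$ is a probability distribution on $[K]$,
\begin{align*}
\sup_{\pi\in\Pi}\big|\sqrt{T}Q(P)(\pi) - \sqrt{T}Q_c(P)(\pi)\big| \le \sqrt{T}\,\max_{a\in[K]}\sup_{x\in\mathcal{X}}\big|\mu^a(P)(x) - \mu^a_c(P)(x)\big|,
\end{align*}
so the problem is uniform in $\pi$ from the start. Since $c_T(y)=\mathrm{thre}(y,U_T,-U_T)$ obeys $|y-c_T(y)| = (|y|-U_T)\,\mathbbm{1}[|y|>U_T] \le |y|\,\mathbbm{1}[|y|>U_T]$, Jensen's inequality gives $|\mu^a(P)(x)-\mu^a_c(P)(x)| \le \mathbb{E}_P[\,|Y^a|\,\mathbbm{1}[|Y^a|>U_T] \mid X=x\,]$.

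The main work is to bound this truncated-tail expectation uniformly over $P\in\mathcal{P}_\zeta$, $a\in[K]$, and $x\in\mathcal{X}$ by a quantity that decays faster than any polynomial in $U_T$. Writing $Z^a \coloneqq Y^a - \mu^a(P)(x)$, which is zero-mean subgaussian given $x$, and using the model's moment bound (in particular $|\mu^a(P)(x)|\le\sqrt{\overline{C}}$), for $U_T\ge 2\sqrt{\overline{C}}$ the event $\{|Y^a|>U_T\}$ is contained in $\{|Z^a|>U_T/2\}$, so
\begin{align*}
\mathbb{E}_P\big[\,|Y^a|\,\mathbbm{1}[|Y^a|>U_T] \mid X=x\,\big] \le \mathbb{E}_P\big[\,|Z^a|\,\mathbbm{1}[|Z^a|>U_T/2]\mid X=x\,\big] + \sqrt{\overline{C}}\;\mathbb{P}_P\big(|Z^a|>U_T/2 \mid X=x\big).
\end{align*}
A uniform subgaussian tail bound $\mathbb{P}_P(|Z^a|>t\mid X=x)\le 2\exp(-t^2/(2\rho^2))$, together with $\mathbb{E}[|Z^a|\mathbbm{1}[|Z^a|>t]] \le t\,\mathbb{P}(|Z^a|>t)+\int_t^\infty\mathbb{P}(|Z^a|>u)\,du$, then yields a bound of the form $g(U_T)$ with $g(u)=O\!\big(u\,e^{-u^2/(8\rho^2)}\big)$, uniformly in $P,a,x$.

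Finally I would set $U_T=T^\alpha$ for any $\alpha\in(0,1/2)$ and conclude
\begin{align*}
\sup_{\pi\in\Pi}\big|\sqrt{T}Q(P)(\pi) - \sqrt{T}Q_c(P)(\pi)\big| \le \sqrt{T}\,g(T^\alpha) = O\!\big(T^{1/2+\alpha}\,e^{-T^{2\alpha}/(8\rho^2)}\big) \longrightarrow 0,
\end{align*}
since the exponential factor beats any power of $T$; in fact any $\alpha>0$ works for this lemma, and the requirement $\alpha<1/2$ is dictated by the separate concentration bounds for the AIPW estimator used later. The one genuine obstacle is uniformity: one must verify that the subgaussian parameter $\rho$ — hence the tail bound — can be taken the same for every $P\in\mathcal{P}_\zeta$, which is exactly where the location-shift structure and the moment bound $\overline{C}$ enter, and that all the intermediate estimates hold uniformly over $x\in\mathcal{X}$ (not merely pointwise) so that the $\sup_x$ can be passed through.
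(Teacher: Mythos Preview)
The paper omits the proof of this lemma entirely (stating only ``We omit the proof'' and remarking afterward that ``this assumption is satisfied by assuming sub-Gaussianity about $Y(a)$''), so there is no authorial argument to compare against. Your approach is exactly the natural one the authors evidently had in mind: bound the policy supremum by a pointwise truncation error via convexity of $\pi(\cdot\mid x)$, control $\mathbb{E}[|Y^a|\mathbbm{1}\{|Y^a|>U_T\}\mid X=x]$ by centering and invoking subgaussian tails, and choose $U_T=T^\alpha$ so that the exponential decay kills the $\sqrt{T}$ prefactor. Your remark that $\alpha<1/2$ is forced not by this lemma but by the $O(\sqrt{U_T}/T^{3/4})$ remainder in Lemma~\ref{thm:iid_regret} is also correct.

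Two small refinements. First, you do not actually need the $\sup_{x\in\mathcal{X}}$: since $Q$ and $Q_c$ already integrate over $X\sim\zeta$, the bound
\[
\big|Q(P)(\pi)-Q_c(P)(\pi)\big|\le \mathop{\mathbb{E}}_{X\sim\zeta}\Big[\max_{a\in[K]}\big|\mu^a(P)(X)-\mu^a_c(P)(X)\big|\Big]
\]
suffices, and this sidesteps the need for the subgaussian parameter and $\overline{C}$ to be uniform over \emph{all} of $\mathcal{X}$ (dominated convergence in $X$ is enough). Second, the lemma is stated pointwise in $P$, so your flagged concern about a uniform-in-$P$ subgaussian parameter is not needed here; for each fixed $P$ the centered law of $Y^a$ given $X=x$ is fixed, and any $\alpha\in(0,1/2)$ works for every $P$ simultaneously. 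The genuine ambiguity you identify---whether $\overline{C}$ bounds the second moment uniformly and whether the subgaussian parameter is $x$-uniformly bounded---is a looseness in the paper's model specification, not in your argument.
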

Because we set $U_T$ in $c_T(\cdot)$ as $U_T \to \infty$ when $T \to \infty$, this assumptions implies that when a clipping $c_T\left(Y_t\right) = \mathrm{thre}\Big(Y_t, U_T, -U_T\Big)$ asymptotically vanishes as $T\to \infty$, $\pi^*_c$ approaches $\pi^*$, an optimal policy under outcomes without the clipping. For example, this assumption is satisfied by assuming sub-Gaussianity about $Y(a)$. 

Thus, by clipping the outcomes using $c_T(\cdot)$ in $Q_c$, our problem satisfies the boundedness in \citet{ZhouZhengyuan2018OMPL}. To circumvent the issue of non-i.i.d. observations, we show an asymptotic equivalence between our empirical policy value $\widehat{Q}_T(\pi)$ and a hypothetical empirical policy value constructed from hypothetical i.i.d. observations in Section~\ref{sec:asymp_equiv}. Then, using the results of \citet{ZhouZhengyuan2018OMPL}, we upper bound a regret for $ Q_c(P)(\pi^{*}_c) -  Q_c(P)(\widehat{\pi})$ with the hypothetical policy value in Section~\ref{sec:upper_bound}. Finally, in Section~\ref{eq:asymp_opt}, we derive the upper bounds for $R(\pi^*) = Q(P)(\pi^{*}) -  Q(P)(\widehat{\pi})$ from the upper bounds for $ Q_c(P)(\pi^{*}_c) -  Q_c(P)(\widehat{\pi})$. 
\subsection{Asymptotic Equivalence}
\label{sec:asymp_equiv}
Let $w_t(0\mid x)$ be $0$. We write $\widehat{\Gamma}^a_t$ as
\begin{align}
    &\widehat{\Gamma}^a_t \coloneqq  \widehat{\Gamma}^a(Y^a_t, \xi_t, X_t)\\
    & 
 \coloneqq \frac{\mathbbm{1}\left[\sum^{a-1}_{b = 0}\widehat{w}_t(b\mid X_t) \leq \xi_t \leq \sum^{a}_{b = 0}\widehat{w}_t(a\mid X_t)\right]\big(c_T(Y^a_t) - \widehat{\mu}^a_t(X_t)\big)}{\widehat{w}_t(a\mid  X_t)} + \widehat{\mu}^a_t(X_t) - \mu^a_c(X_t).
\end{align}
This expression equals the original definition of $\widehat{\Gamma}^a_t$.
Let $w^*(0\mid x)$ be $0$. Then, we define
\begin{align}
    &\Gamma^{*a}_{t, c} \coloneqq \Gamma^{*a}_c(Y^a_t, \xi_t, X_t)\\
    &\coloneqq  \frac{\mathbbm{1}\left[\sum^{a-1}_{b = 0}{w}^*(b\mid X_t) \leq \xi_t \leq \sum^{a}_{b = 0}{w}^*(a\mid X_t)\right]\big(c_T(Y^a_t) - \mu^a_c(P)(X_t)\big)}{{w}^*(a\mid  X_t)}.
\end{align}
We also define 
\begin{align}
    &\Gamma^{*a}_{t} \coloneqq \Gamma^{*a}(Y^a_t, \xi_t, X_t)\\
    &\coloneqq  \frac{\mathbbm{1}\left[\sum^{a-1}_{b = 0}{w}^*(b\mid X_t) \leq \xi_t \leq \sum^{a}_{b = 0}{w}^*(a\mid X_t)\right]\big(Y^a_t - \mu^a(P)(X_t)\big)}{{w}^*(a\mid  X_t)}.
\end{align}

Here. we show that $\widehat{\Gamma}^a_t$ and $\Gamma^{*a}_t$ are asymptotically equivalent. The proof is shown in Appendix~\ref{appdx:lem:equiv}. 
\begin{lemma}
Suppose that Assumption~\ref{asm:consistent} holds. Then,
\label{lem:equiv}
    \begin{align}
        \sqrt{T} \widehat{Q}_T\left(\widehat{\pi}^{\mathrm{PLAS}}_T\right) = \sqrt{T}\sum^T_{t=1}\sum_{a\in[K]}\pi(a\mid X_t)\Gamma^{*a}_c(Y^a_t, \xi_t, X_t) + o_P(1). 
    \end{align}
    holds as $T\to \infty$. 
\end{lemma}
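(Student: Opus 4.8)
The plan is to write the difference between $\sqrt{T}\,\widehat{Q}_T(\widehat{\pi}^{\mathrm{PLAS}}_T)$ and the oracle quantity on the right-hand side as a normalised sum of martingale increments driven \emph{only} by the nuisance-estimation errors $\widehat{w}_t-w^*$ and $\widehat{\mu}^a_t-\mu^a_c$, and then to kill that sum by a martingale $L^2$-bound together with a maximal inequality over the policy class. Concretely, it suffices to show $\frac{1}{\sqrt{T}}\sum_{t=1}^{T}\sum_{a\in[K]}\widehat{\pi}^{\mathrm{PLAS}}_T(a\mid X_t)\bigl(\widehat{\Gamma}^a_t-\Gamma^{*a}_{t,c}\bigr)=o_P(1)$. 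Writing $Z^a_t(w)\coloneqq\mathbbm{1}\bigl[\sum_{b<a}w(b\mid X_t)\le\xi_t\le\sum_{b\le a}w(b\mid X_t)\bigr]$ (so $A_t=a\iff Z^a_t(\widehat{w}_t)=1$), a direct manipulation gives the double-robust split $\widehat{\Gamma}^a_t-\Gamma^{*a}_{t,c}=T_{1,a,t}+T_{2,a,t}$, where $T_{1,a,t}=\bigl(\tfrac{Z^a_t(\widehat{w}_t)}{\widehat{w}_t(a\mid X_t)}-\tfrac{Z^a_t(w^*)}{w^*(a\mid X_t)}\bigr)\bigl(c_T(Y^a_t)-\mu^a_c(P)(X_t)\bigr)$ collects the propensity/assignment error multiplied by a centred clipped outcome, and $T_{2,a,t}=\bigl(1-\tfrac{Z^a_t(\widehat{w}_t)}{\widehat{w}_t(a\mid X_t)}\bigr)\bigl(\widehat{\mu}^a_t(X_t)-\mu^a_c(P)(X_t)\bigr)$ collects the outcome-regression error; the point of the split is that both $T_{1}$ and $T_{2}$ vanish when the nuisances are correct.

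The next step is the martingale bookkeeping. With the filtration $\mathcal{F}_t\coloneqq\sigma\bigl((X_s,\xi_s,Y^1_s,\dots,Y^K_s)_{s\le t}\bigr)$: since $(Y^1_t,\dots,Y^K_t,X_t)$ is drawn i.i.d.\ across rounds and independently of $\xi_t$, one has $\mathbb{E}[c_T(Y^a_t)\mid\mathcal{F}_{t-1},X_t,\xi_t]=\mu^a_c(P)(X_t)$; since moreover $\widehat{w}_t$ and $\widehat{\mu}^a_t$ are $\sigma(\mathcal{F}_{t-1})$-measurable as functions and $\mathbb{E}[Z^a_t(\widehat{w}_t)\mid\mathcal{F}_{t-1},X_t]=\widehat{w}_t(a\mid X_t)$ by uniformity of $\xi_t$, both $(T_{1,a,t})_t$ and $(T_{2,a,t})_t$ are martingale difference sequences for each $a$. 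Using $\xi_t\perp(Y^a_t)_a\mid X_t$, the fact that $\widehat{w}_t(a\mid x)$ and $w^*(a\mid x)$ are bounded away from $0$ (the former by the truncation, the latter under the model), and the boundedness of the conditional variance $\mathrm{Var}_P(c_T(Y^a_t)\mid X_t)\le\overline{C}$ \emph{uniformly in $T$} (clipping is a contraction), I would obtain conditional second-moment bounds $\mathbb{E}[T_{1,a,t}^2\mid\mathcal{F}_{t-1}]\le C\sup_{x}\sum_{b\in[K]}\lvert\widehat{w}_t(b\mid x)-w^*(b\mid x)\rvert$ (the event $\{Z^a_t(\widehat{w}_t)\neq Z^a_t(w^*)\}$ has conditional probability at most the $\ell_1$-gap of the two assignment vectors) and $\mathbb{E}[T_{2,a,t}^2\mid\mathcal{F}_{t-1}]\le C\sup_{x}\bigl(\widehat{\mu}^a_t(x)-\mu^a_c(P)(x)\bigr)^2$. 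For a \emph{fixed} $\pi\in\Pi$ the conclusion is then immediate: by Assumption~\ref{asm:consistent} the first supremum tends to $0$ almost surely, and combining Assumption~\ref{asm:consistent} with the vanishing of the clipping bias (which follows from the sub-Gaussianity in Definition~\ref{def:ls_bc}; cf.\ Lemma~\ref{asm:conv_pol}) the second does too; since both are (eventually) uniformly bounded, bounded convergence yields $\frac1T\sum_{t=1}^T\mathbb{E}[T_{j,a,t}^2]\to0$, and martingale orthogonality gives $\mathbb{E}\bigl[\bigl(\tfrac1{\sqrt T}\sum_{t=1}^T T_{j,a,t}\bigr)^2\bigr]=\tfrac1T\sum_{t=1}^T\mathbb{E}[T_{j,a,t}^2]\to0$, so $\tfrac1{\sqrt T}\sum_{t=1}^T T_{j,a,t}=o_P(1)$ for each $a$ and $j\in\{1,2\}$.

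The remaining — and main — step is to promote this to hold uniformly over $\Pi$, so that the data-dependent $\widehat{\pi}^{\mathrm{PLAS}}_T$ may be inserted: one needs $\sup_{\pi\in\Pi}\bigl\lvert\tfrac1{\sqrt T}\sum_{t=1}^T\sum_{a\in[K]}\pi(a\mid X_t)T_{j,a,t}\bigr\rvert=o_P(1)$. This cannot be obtained by the crude bound $\sum_t\max_a\lvert T_{j,a,t}\rvert$, which destroys the martingale cancellation on which the $\sqrt{T}$-scaling depends; instead the entropy of $\Pi$ must be used. I would cover $\Pi$ by an $\epsilon$-Hamming net of size $\mathbb{N}_H(\epsilon,\Pi)$ (Assumption~\ref{assump:1}, which is implied by a finite Natarajan dimension), control each net point by the fixed-policy argument strengthened to a Freedman/Azuma tail bound whose increments are $O(U_T)=O(T^{\alpha})$ with $\alpha<1/2$ (so the variance inflation $U_T^2/T\to0$), and bound the oscillation inside a Hamming ball by the at most $\epsilon T$ rounds on which two policies disagree, then balance $\epsilon$ against the entropy integral. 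The genuinely delicate point — and where I expect the main difficulty to lie — is that Assumption~\ref{asm:consistent} delivers only almost-sure convergence of the nuisances \emph{with no rate}: one must first restrict to the high-probability event on which $\sup_x\lVert\widehat{w}_t-w^*\rVert$ and $\sup_x\lvert\widehat{\mu}^a_t-\mu^a_c\rvert$ are below a prescribed threshold $\epsilon_0$ for all $t\ge T_0$ (the finitely many rounds $t<T_0$ contributing $O(T_0/\sqrt T)=o(1)$), run the maximal inequality there, and only afterwards let $\epsilon_0\downarrow0$; reconciling this truncation with the chaining over $\Pi$ is the crux.
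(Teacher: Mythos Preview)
Your fixed-policy argument is essentially the paper's: the paper defines a single increment $B_t = G_t(\cdot;\widehat w_t,\widehat\mu^a_t) - G_t(\cdot;w^*,\mu^a)$ (with the $\pi(a\mid X_t)$ weight absorbed into $G_t$), verifies $\mathbb{E}[B_t\mid X_t,\mathcal F_{t-1}]=0$ by the double-robust cancellation, shows $\mathbb{E}[B_t^2\mid X_t,\mathcal F_{t-1}]\xrightarrow{\mathrm{a.s.}}0$ from Assumption~\ref{asm:consistent}, turns this into $\frac1T\sum_t\mathrm{Var}(B_t)\to0$ by Ces\`aro averaging plus dominated convergence, and finishes with martingale orthogonality and Chebyshev. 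Your split $\widehat\Gamma^a_t-\Gamma^{*a}_{t,c}=T_{1,a,t}+T_{2,a,t}$ is a slightly cleaner bookkeeping of the same mechanism; in particular your bound on $\mathbb{E}[T_{1,a,t}^2\mid\mathcal F_{t-1}]$ via the $\ell_1$-gap of the assignment vectors is exactly the ``small-interval'' estimate the paper carries out in its long display for $\mathbb{E}[B_t^2\mid X_t,\mathcal F_{t-1}]$.

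Where you diverge is the uniformity step, and here you are being \emph{more} careful than the paper. The paper's statement of the lemma writes $\widehat\pi^{\mathrm{PLAS}}_T$ on the left but a generic $\pi$ on the right, and the proof in Appendix~\ref{appdx:lem:equiv} treats $\pi$ as fixed throughout --- no supremum over $\Pi$ is ever controlled. Yet the downstream use of the lemma (the proof of Lemma~\ref{lem:iid_regret2}) invokes it to kill $\sup_{\pi_1,\pi_2\in\Pi}\lvert\widehat\Delta(\pi_1,\pi_2)-\widetilde\Delta(\pi_1,\pi_2)\rvert$, which is precisely the uniform claim. So the chaining/Freedman layer you propose is not an alternative route but a genuine gap-filler: the paper simply omits this step. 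Your plan (restrict to the a.s.\ event where the nuisances are within $\epsilon_0$ for $t\ge T_0$, treat the $T_0$ early rounds trivially, cover $\Pi$ at Hamming scale $\epsilon$, apply a Bernstein-type martingale tail on net points with increments $O(U_T)$ and predictable variance $o(1)$, bound the oscillation over the $\le\epsilon T$ disagreement rounds, then send $\epsilon,\epsilon_0\downarrow0$) is the natural way to close it and should go through under Assumption~\ref{assump:1}.
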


Denote $\sqrt{T}\sum^T_{t=1}\sum_{a\in[K]}\pi(a\mid  X_t)\Gamma^{*a}_t$ by $\widehat{Q}_T\left(\widehat{\pi}^{\mathrm{PLAS}}_T\right)$. Note that $\widehat{Q}_T\left(\widehat{\pi}^{\mathrm{PLAS}}_T\right)$ consists of only i.i.d. observations; therefore, we can directly apply the results of \citet{ZhouZhengyuan2018OMPL} to derive the upper bound for the policy regret.  

This technique of the asymptotic equivalence is inspired by \citet{Hahn2011} and is important because, for dependent observations (even if they are martingales), we cannot apply the tools for upper-bounding regrets or risks, such as the Rademacher complexity. For instance, \citet{zhan2022policy} addresses this issue and establishes an off-policy learning method from adaptively collected observations by utilizing the Rademacher complexity for martingales developed by \citet{Rakhlin2015}. However, our interest lies in developing upper bounds depending on variances because our lower bounds also depend on variances, and such upper bounds are considered to be tight. Although \citet{ZhouZhengyuan2018OMPL} derives such upper bounds for policy learning from i.i.d. observations using the local Rademacher complexity, it is unclear whether we can use the results of \citet{ZhouZhengyuan2018OMPL} with the Rademacher complexity for martingales developed by \citet{Rakhlin2015}. In contrast, in this study, if we restrict the problem to BAI and the evaluation metric to the worst-case expected simple regret, we show that we can apply the results of \citet{ZhouZhengyuan2018OMPL} and avoid the use of the Rademacher complexity for martingales by bypassing a hypothetical policy value that only depends on i.i.d. observations. 

\subsection{Upper Bound under I.I.D. Observations}
\label{sec:upper_bound}
Because $\widehat{Q}_T(\widehat{\pi}^{\mathrm{PLAS}}_T)$ is asymptotically equivalent to a policy value that consists of i.i.d. observations, we can apply the results of policy learning with i.i.d. observations to bound the policy regret. Specifically, we modify a regret upper bound shown by \citet{ZhouZhengyuan2018OMPL}, given as the following lemma.

\begin{lemma}[Modified upper bound]
\label{thm:iid_regret}
Suppose that Assumptions~\ref{asm:consistent} and \ref{assump:1} hold. Then, for any $\zeta$ and any $P \in\mathcal{P}_\zeta$, 
    \begin{align}
       &\mathbb{E}_P\Big[R_c(P)\left(\widehat{\pi}^{\mathrm{PLAS}}_T\right)\Big] = \mathbb{E}_P\Big[Q_c(P)\left(\pi^{*}_c\right) -  Q_c(P)\left(\widehat{\pi}^{\mathrm{PLAS}}_T\right)\Big]\\
       &= \mathbb{E}_P\left[\Big(54.4 \kappa(\Pi) + 435.2\Big)\frac{\Upsilon_{*}}{\sqrt{T}} + O\left(\frac{\sqrt{U_T}}{T^{3/4}}\right)\right]
    \end{align}
holds, where
\begin{align*}
    &\Upsilon_{*} =\mathbb{E}\left[\sqrt{\sup_{\pi_1,\pi_2 \in \Pi}\sum_{t\in[T]}\left\{ \sum_{a\in[K]}\Big(\pi_1(a\mid X_t)- \pi_2(a\mid X_t)\Big)\Gamma^{*a}(Y_t, \xi_t, X_t) \right\}^2}\right].
\end{align*}
\end{lemma}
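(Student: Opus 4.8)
The plan is to import the i.i.d. policy-learning bound of \citet{ZhouZhengyuan2018OMPL} and carefully track its constants, applied to the hypothetical sample $\{(Y_t,\xi_t,X_t)\}_{t\in[T]}$, which are i.i.d. by construction. First I would verify that the hypothetical AIPW scores $\Gamma^{*a}_t = \Gamma^{*a}(Y^a_t,\xi_t,X_t)$ meet the hypotheses of their theorem: they are centered (conditionally on $X_t$, the indicator has probability exactly $w^*(a\mid X_t)$, so $\mathbb{E}[\Gamma^{*a}_{t,c}\mid X_t]=\mu^a_c(P)(X_t)$), they use the propensity $w^*$ which is bounded away from $0$ and $1$ by Definition~\ref{def:est_target_assignment} together with the truncation of the variance estimators into $[1/\overline C,\overline C]$, and the clipped outcome $c_T(Y_t)$ is bounded by $U_T$. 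Then $\widehat Q_T(\pi)$ in \eqref{eq:aipw}, after the asymptotic-equivalence replacement of Lemma~\ref{lem:equiv}, is exactly an empirical average of i.i.d. AIPW scores, so $\widehat\pi^{\mathrm{PLAS}}_T = \argmax_\pi \widehat Q_T(\pi)$ is the empirical welfare maximizer over $\Pi$ on this hypothetical i.i.d. dataset.

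Next I would invoke the regret bound of \citet{ZhouZhengyuan2018OMPL}, whose leading term is of the form $(\text{const}\cdot\kappa(\Pi)+\text{const})\,\Upsilon_*/\sqrt{T}$, where $\Upsilon_*$ is the (expected) worst-case ``envelope'' quantity
\begin{align*}
\Upsilon_* = \mathbb{E}\!\left[\sqrt{\sup_{\pi_1,\pi_2\in\Pi}\sum_{t\in[T]}\Big\{\sum_{a\in[K]}\big(\pi_1(a\mid X_t)-\pi_2(a\mid X_t)\big)\Gamma^{*a}(Y_t,\xi_t,X_t)\Big\}^2}\right].
\end{align*}
Their proof controls $\sup_{\pi\in\Pi}\big(Q_c(\pi^*_c)-Q_c(\pi)\big)$ minus twice the empirical counterpart via a chaining argument over $\Pi$ with respect to the (random) $\ell_2$ metric induced by the scores, giving the entropy integral $\kappa(\Pi)$; the constant $54.4$ and the additive $435.2$ come from bookkeeping of the chaining constants and the symmetrization/contraction steps, halved relative to the $108.8$ and $870.4$ that eventually appear in Theorem~\ref{thm:regret_upper_bound} (the factor of two is absorbed when passing from the clipped problem $Q_c$ to $Q$). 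Assumption~\ref{assump:1} on the $\epsilon$-Hamming covering number is exactly what guarantees that $\kappa(\Pi)=\int_0^1\sqrt{\log\mathbb{N}_H(\epsilon^2,\Pi)}\,d\epsilon$ is finite (the constraint $\omega<1/2$ makes the integral converge), so the chaining bound is well-defined. The remainder term $O(\sqrt{U_T}/T^{3/4})$ is the second-order term in their bound, scaled by the envelope size, which here is $O(U_T/w^*_{\min})=O(U_T)$, and carried through the concentration step it produces the stated rate; since $U_T=T^\alpha$ with $\alpha<1/2$ by Lemma~\ref{asm:conv_pol}, this is $o(1/\sqrt T)$.

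The main obstacle I anticipate is not the structure of the argument but the faithful recomputation of the numerical constants: \citet{ZhouZhengyuan2018OMPL} state their result in a slightly different normalization (per-sample vs.\ summed scores, and a possibly different clipping convention), so I would need to re-derive the chaining inequality step by step — symmetrization, the contraction lemma for the piecewise-linear policy-difference map, Dudley's entropy integral with the Natarajan/Hamming covering numbers, and Talagrand-type concentration for the supremum — keeping every multiplicative factor explicit, in order to land on $54.4\kappa(\Pi)+435.2$ rather than an unspecified universal constant. A secondary subtlety is confirming that the $o_P(1)$ in Lemma~\ref{lem:equiv} can be upgraded to a contribution that is $o(1/\sqrt T)$ \emph{in expectation} (after multiplying by $\sqrt T$), which uses the uniform boundedness of the clipped scores by $\overline C$-dependent constants and Assumption~\ref{asm:consistent}; this lets us replace $\widehat Q_T$ by the i.i.d.\ object inside the expectation without changing the leading term, so that the bound for the hypothetical problem transfers verbatim to $\mathbb{E}_P[R_c(P)(\widehat\pi^{\mathrm{PLAS}}_T)]$.
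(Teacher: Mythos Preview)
Your approach is essentially the paper's: reduce to the i.i.d.\ scores via Lemma~\ref{lem:equiv}, then apply the Rademacher/chaining machinery of \citet{ZhouZhengyuan2018OMPL}, and pass from $o_P(1)$ to $o(1)$ in expectation using boundedness of the clipped scores. The paper makes the reduction explicit through the decomposition $R_c(P)(\widehat\pi^{\mathrm{PLAS}}_T)\le \sup_{\pi_1,\pi_2}\big|\widehat\Delta-\widetilde\Delta\big|+\sup_{\pi_1,\pi_2}\big|\Delta-\widetilde\Delta\big|$, kills the first term with Lemma~\ref{lem:equiv}, bounds the second by $4\,\mathfrak{R}_T(\Pi^D)$ via standard symmetrization, and then simply \emph{cites} inequalities (C.14) and (C.19) of \citet{ZhouZhengyuan2018OMPL} for $\mathfrak{R}_T(\Pi^D)\le 13.6\sqrt{2}\{\kappa(\Pi)+8\}\Upsilon_*/\sqrt{T}+O(\sqrt{U_T}/T^{3/4})$; so $54.4=4\times 13.6$ and $435.2=54.4\times 8$, and you do not need to re-run the chaining from scratch. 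The one point the paper does flag that you do not: the reason to stop at $\Upsilon_*$ with the square root \emph{inside} the expectation is that \citet{ZhouZhengyuan2018OMPL} next apply Jensen to pull it outside, which would loosen the bound and break the match with the variance-dependent lower bound.

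One tangential correction: the factor of two between the $54.4,\,435.2$ here and the $108.8,\,870.4$ in Theorem~\ref{thm:regret_upper_bound} does not come from the clipped-to-unclipped passage; it comes from the Cauchy--Schwarz step used later to bound $\Upsilon_*/\sqrt{T}$, namely $\big(\sum_a(\pi_1-\pi_2)\Gamma^{*a}\big)^2\le\big(\sum_a(\pi_1-\pi_2)^2\big)\big(\sum_a(\Gamma^{*a})^2\big)$ together with $\sup_{\pi_1,\pi_2}\sum_a(\pi_1(a\mid x)-\pi_2(a\mid x))^2\le 4$.
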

The proof is shown in Appendix~\ref{appdx:proof_upperbound}. 

\subsection{Asymptotic Optimality}
\label{eq:asymp_opt}
Finally, we derive the upper bounds of $R(P)\left(\widehat{\pi}^{\mathrm{PLAS}}_T\right) = Q(P)(\pi^{*}) -  Q(P)(\widehat{\pi}^{\mathrm{PLAS}}_T)$ from the upper bounds of $R_c(P)\left(\widehat{\pi}^{\mathrm{PLAS}}_T\right) = Q_c(P)\left(\pi^{*}_c\right) -  Q_c(P)\left(\widehat{\pi}^{\mathrm{PLAS}}_T\right)$. 

By using Lemma~\ref{asm:conv_pol}, we can evaluate the regret $R(\pi^*)$ as follows:
\begin{align}
    &R(P)\left(\widehat{\pi}^{\mathrm{PLAS}}_T\right) = R(P)\left(\widehat{\pi}^{\mathrm{PLAS}}_T\right) - R_c(P)\left(\widehat{\pi}^{\mathrm{PLAS}}_T\right) + R_c(P)\left(\widehat{\pi}^{\mathrm{PLAS}}_T\right)\\
    &= Q(P)(\pi^{*}) -  Q(P)\left(\widehat{\pi}^{\mathrm{PLAS}}_T\right)\\
    &\ \ \ - \Big\{Q_c(P)\left(\pi^{*}_c\right) -  Q_c(P)\left(\widehat{\pi}^{\mathrm{PLAS}}\right)\Big\} + \Big\{Q_c(P)\left(\pi^{*}_c\right) -  Q_c(P)\left(\widehat{\pi}^{\mathrm{PLAS}}_T\right)\Big\}\\
    &= \Big\{Q(P)(\pi^{*}) - Q(P)(\pi^*_c)\Big\}\\
    &\ \ \ + \left(\Big\{Q(P)(\pi^*_c)- Q(P)\left(\widehat{\pi}^{\mathrm{PLAS}}_T\right)\Big\} - \Big\{Q_c(P)\left(\pi^{*}_c\right) -  Q_c(P)\left(\widehat{\pi}^{\mathrm{PLAS}}_T\right)\Big\}\right)\\
    &\ \ \ + \Big\{Q_c(P)\left(\pi^{*}_c\right) -  Q_c(P)\left(\widehat{\pi}^{\mathrm{PLAS}}_T\right)\Big\}\\
    &= \Big\{Q_c(P)\left(\pi^{*}_c\right) -  Q_c(P)\left(\widehat{\pi}^{\mathrm{PLAS}}_T\right)\Big\} + o(1/\sqrt{T}) = R_c(P)\left(\widehat{\pi}^{\mathrm{PLAS}}_T\right) + o(1/\sqrt{T}).
\end{align}

Let $U_T = T^{\alpha}$, where $\alpha$ is a value defined in Lemma~\ref{asm:conv_pol}. 

From Lemma~\ref{thm:iid_regret}, we have
\[\mathbb{E}_P\Big[R_c(P)\left(\widehat{\pi}^{\mathrm{PLAS}}_T\right)\Big]= \mathbb{E}_P\left[\Big(54.4 \kappa(\Pi) + 435.2\Big)\frac{\Upsilon_{*}}{\sqrt{T}} + O\left(\frac{\sqrt{U_T}}{T^{3/4}}\right)\right].\]

From the Cauchy-Schwarz inequality, we have
\begin{align*}
    \Upsilon_{*} &=\mathbb{E}\left[\sqrt{\sup_{\pi_1,\pi_2 \in \Pi}\sum_{t\in[T]}\left\{ \sum_{a\in[K]}\Big(\pi_1(a\mid X_t)- \pi_2(a\mid X_t)\Big)\Gamma^{*a}_c(Y_t, \xi_t, X_t) \right\}^2}\right]\\
    &\leq\mathbb{E}\left[\sqrt{\sup_{\pi_1,\pi_2 \in \Pi}\sum_{t\in[T]}\left\{ \sum_{a\in[K]}\Big(\pi_1(a\mid X_t)- \pi_2(a\mid X_t)\Big)\right\}^2\left\{ \sum_{a\in[K]}\Gamma^{*a}_c(Y_t, \xi_t, X_t) \right\}^2}\right]\\
    &\leq\mathbb{E}\left[\sqrt{\sup_{\pi_1,\pi_2 \in \Pi}\sum_{t\in[T]}4\left\{ \sum_{a\in[K]}\Gamma^{*a}_c(Y_t, \xi_t, X_t) \right\}^2}\right]\\
    &=2\mathbb{E}\left[\sqrt{\sum_{t\in[T]}\left\{ \sum_{a\in[K]}\Gamma^{*a}_c(Y_t, \xi_t, X_t) \right\}^2}\right].
    \end{align*}

From the law of iterated expectations and the Jensen inequality, we have
\begin{align*}
    \Upsilon_{*} &\leq 2\mathbb{E}\left[\sqrt{\sum_{t\in[T]}\left\{ \sum_{a\in[K]}\Gamma^{*a}_c(Y_t, \xi_t, X_t) \right\}^2}\right] \\
    &= 2\mathop{\mathbb{E}}_{X \sim \zeta}\left[\sqrt{\mathbb{E}\left[\sum_{t\in[T]}\left\{ \sum_{a\in[K]}\Gamma^{*a}_c(Y_t, \xi_t, X_t) \right\}^2\right]}\right]\\
    &=  2\mathop{\mathbb{E}}_{X \sim \zeta}\left[\sqrt{T\mathbb{E}\left[\left\{ \sum_{a\in[K]}\Gamma^{*a}_c(Y_t, \xi_t, X_t) \right\}^2\right]}\right].
\end{align*}

In conclusion, we obtain
\begin{align*}
    \Upsilon_{*} / \sqrt{T} &\leq 2\mathop{\mathbb{E}}_{X \sim \zeta}\left[\sqrt{\mathbb{E}\left[\left\{ \sum_{a\in[K]}\Gamma^{*a}_c(Y_t, \xi_t, X_t) \right\}^2\right]}\right]\\
    & =  2\mathop{\mathbb{E}}_{X \sim \zeta}\left[\sqrt{\mathbb{E}\left[\left\{ \sum_{a\in[K]}\Gamma^{*a}(Y_t, \xi_t, X_t) \right\}^2\right]}\right] + o(1)\\
    & = 2\mathop{\mathbb{E}}_{X \sim \zeta}\left[\sqrt{\sum_{a\in[K]}\frac{\left(\sigma^a(X)\right)^2}{w^*(a\mid  X)}}\right] + o(1),
\end{align*}
as $T\to \infty$ ($c_T \to \infty$). 

Similarly, we obtain
\begin{align*}
    \widetilde{\Upsilon}_{*} / \sqrt{T} \leq  2\sqrt{\sum_{a\in[K]}\mathop{\mathbb{E}}_{X \sim \zeta}\left[\frac{\left(\sigma^a(X)\right)^2}{w^*(a\mid  X)}\right]} + o(1).
\end{align*}
By substituting $w^*$ for each case with $K=2$ and $K\geq 3$, we obtain Theorem~\ref{thm:regret_upper_bound}.

\section{Proof of Lemma~\ref{lem:equiv}}
\label{appdx:lem:equiv}
In this section, we show 
    \begin{align}
    \label{eq:main}
        \sqrt{T} \widehat{Q}_T(\pi) = \sqrt{T}\frac{1}{T}\sum^T_{t=1}\sum_{a\in[K]}\pi(a\mid  X_t)\widehat{\Gamma}^a_t = \sqrt{T}\frac{1}{T}\sum^T_{t=1}\sum_{a\in[K]}\pi(a\mid  X_t)\Gamma^{*a}_t(Y^a_t, \xi_t, X_t) + o_P(1).
    \end{align}
Here, recall that
\begin{align*}
    &\frac{1}{\sqrt{T}}\sum^T_{t=1}\pi(a\mid  X_t)\widehat{\Gamma}^a_t(Y^a_t, \xi_t, X_t)\\
    &= \frac{1}{\sqrt{T}}\sum^T_{t=1}\pi(a\mid  X_t)\left\{\frac{\mathbbm{1}\left[\sum^{a-1}_{b = 0}\widehat{w}_t(b\mid X_t) \leq \xi_t \leq \sum^{a}_{b = 0}\widehat{w}_t(b\mid X_t)\right]\big(Y^a_t - \widehat{\mu}^a_t(X_t)\big)}{\widehat{w}_t(a\mid  X_t)} + \widehat{\mu}^a_t(X_t)\right\}.
\end{align*}
Therefore, to show \eqref{eq:main}, we show
\begin{align}
    &\frac{1}{\sqrt{T}}\sum^T_{t=1}\pi(a\mid  X_t)\left\{\frac{\mathbbm{1}\left[\sum^{a-1}_{b = 0}\widehat{w}_t(b\mid X_t) \leq \xi_t \leq \sum^{a}_{b = 0}\widehat{w}_t(b\mid X_t)\right]\big(Y^a_t - \widehat{\mu}^a_t(X_t)\big)}{\widehat{w}_t(a\mid  X_t)} + \widehat{\mu}^a_t(X_t)\right\}\\
    \label{eq:sub}
 &= \frac{1}{\sqrt{T}}\sum^T_{t=1}\pi(a\mid  X_t)\left\{\frac{\mathbbm{1}\left[\sum^{a-1}_{b = 0}{w}^*(b\mid X_t) \leq \xi_t \leq \sum^{a}_{b = 0}{w}^*(b\mid X_t)\right]\big(Y^a_t - \mu^a(P)(X_t)\big)}{{w}^*(a\mid  X_t)} + \mu^a(P)(X_t)\right\} + o_P(1).
\end{align}

\begin{proof}
Let us define
\begin{align}
    &G_t\big(Y^a_t, X_t, \xi_t; \left\{\widehat{w}_t(b\mid  X_t)\right\}_{b\in[K]}, \widehat{\mu}^a_T(X_t)\big)\\
    &\coloneqq  \pi(a\mid  X_t)\left\{\frac{\mathbbm{1}\left[\sum^{a-1}_{b = 0}\widehat{w}_t(b\mid X_t) \leq \xi_t \leq \sum^{a}_{b = 0}\widehat{w}_t(b\mid X_t)\right]\big(Y^a_t - \widehat{\mu}^a_t(X_t)\big)}{\widehat{w}_t(a\mid  X_t)} + \widehat{\mu}^a_t(X_t)\right\}.
\end{align}
Then, we obtain
\begin{align}
    &G_t\big(Y^a_t, X_t, \xi_t; \left\{\widehat{w}_t(b\mid  X_t)\right\}_{b\in[K]}, \widehat{\mu}^a_T(X_t)\big)\\
    &= G_t\big(Y^a_t, X_t, \xi_t; \left\{w^*(b\mid  X_t)\right\}_{b\in[K]}, \mu^a(P)(X_t)\big)\\
    &\ \ \ - G_t\big(Y^a_t, X_t, \xi_t; \left\{w^*(b\mid  X_t)\right\}_{b\in[K]}, \mu^a(P)(X_t)\big) + G_t\big(Y^a_t, X_t, \xi_t; \left\{\widehat{w}_t(b\mid  X_t)\right\}_{b\in[K]}, \widehat{\mu}^a_T(X_t)\big)\\
    &= G_t\big(Y^a_t, X_t, \xi_t; \left\{w^*(b\mid  X_t)\right\}_{b\in[K]}, \mu^a(P)(X_t)\big) + B_t,
\end{align}
where
\begin{align}
    &B_t \coloneqq  G_t\big(Y^a_t, X_t, \xi_t; \left\{\widehat{w}_t(b\mid  X_t)\right\}_{b\in[K]}, \widehat{\mu}^a_T(X_t)\big) - G_t\big(Y^a_t, X_t, \xi_t; \left\{w^*(b\mid  X_t)\right\}_{b\in[K]}, \mu^a(P)(X_t)\big).
\end{align}

To show \eqref{eq:sub}, we consider showing $\frac{1}{\sqrt{T}}\sum^T_{t=1}B_t \to 0$ as $T\to\infty$ in probability. 

We show $\frac{1}{\sqrt{T}}\sum^T_{t=1}B_t \to 0$ as $T\to\infty$ in probability by using the properties of martingales.
First, we have
\begin{align}
    &\mathbb{E}\left[B_t \mid X_t, \mathcal{F}_{t-1}\right]\\
    &= \mathbb{E}\left[\pi(a\mid  X_t)\left\{\frac{\mathbbm{1}\left[\sum^{a-1}_{b = 0}\widehat{w}_t(b\mid X_t) \leq \xi_t \leq \sum^{a}_{b = 0}\widehat{w}_t(b\mid X_t)\right]\big(Y^a_t - \widehat{\mu}^a_t(X_t)\big)}{\widehat{w}_t(a\mid  X_t)} + \widehat{\mu}^a_t(X_t)\right\}\mid X_t, \mathcal{F}_{t-1}\right]\\
    &\ \ \  - \mathbb{E}\left[\pi(a\mid  X_t)\left\{\frac{\mathbbm{1}\left[\sum^{a-1}_{b = 0}w^*(b\mid X_t) \leq \xi_t \leq \sum^{a}_{b = 0}w^*(b\mid X_t)\right]\big(Y^a_t - \mu^a(P)(X_t)\big)}{w^*(a\mid  X_t)} + \mu^a(P)(X_t)\right\}\mid X_t, \mathcal{F}_{t-1}\right]\\
    &= \mathbb{E}\left[\pi(a\mid  X_t)\left\{\frac{\widehat{w}_t(a\mid  X_t)\big(Y^a_t - \widehat{\mu}^a_t(X_t)\big)}{\widehat{w}_t(a\mid  X_t)} + \widehat{\mu}^a_t(X_t)\right\}\mid X_t, \mathcal{F}_{t-1}\right]\\
    &\ \ \ - \mathbb{E}\left[\pi(a\mid  X_t)\left\{\frac{w^*(a\mid  X_t)\big(Y^a_t - \mu^a(P)(X_t)\big)}{w^*(a\mid  X_t)} + \mu^a(P)(X_t)\right\}\mid X_t, \mathcal{F}_{t-1}\right]\\
    &= \pi(a\mid  X_t)\mu^a(P)(X_t) - \pi(a\mid  X_t)\mu^a(P)(X_t)\\
    &= 0.
\end{align}

This result implies that $\{B_t\}^T_{t=1}$ is a martingale difference sequence (MDS) because $\mathbb{E}[B_t\mid X_t, \mathcal{F}_{t-1}] = \mathbb{E}[\mathbb{E}[B_t\mid X_t, \mathcal{F}_{t-1}]] = 0$ holds. 
    
    Besides, from $\widetilde{w}_t(a\mid  X_t) - {w}^*(a\mid  X_t) \xrightarrow{\mathrm{a.s.}} 0$ and $\widehat{\mu}^a_t(X_t) -\mu^a(P)(X_t) \xrightarrow{\mathrm{a.s.}} 0$, $\mathbb{E}\left[B^2_t\mid X_t, \mathcal{F}_{t-1}\right]$ 
    also converges to zero almost surely as
    \begin{align}
        &\mathbb{E}[B^2_t\mid X_t, \mathcal{F}_{t-1}]\\
        &= \mathbb{E}\Bigg[\pi^2(a\mid  X_t)\Bigg(\left\{\frac{\mathbbm{1}\left[\sum^{a-1}_{b = 0}\widehat{w}_t(b\mid X_t) \leq \xi_t \leq \sum^{a}_{b = 0}\widehat{w}_t(b\mid X_t)\right]\big(Y^a_t - \widehat{\mu}^a_t(X_t)\big)}{\widehat{w}_t(a\mid  X_t)} + \widehat{\mu}^a_t(X_t)\right\}\\
    &\ \ \  - \left\{\frac{\mathbbm{1}\left[\sum^{a-1}_{b = 0}w^*(b\mid X_t) \leq \xi_t \leq \sum^{a}_{b = 0}w^*(b\mid X_t)\right]\big(Y^a_t - \mu^a(P)(X_t)\big)}{w^*(a\mid  X_t)} + \mu^a(P)(X_t)\right\}\Bigg)^2\mid X_t, \mathcal{F}_{t-1}\Bigg]\\
    &\xrightarrow{\mathrm{a.s.}} 0.
    \end{align}
    This is because from $\widehat{\mu}^a_t(X_t) -\mu^a(P)(X_t) \xrightarrow{\mathrm{a.s.}} 0$, 
    \begin{align}
        &\mathbb{E}\Bigg[\pi^2(a\mid  X_t)\Bigg(\left\{\frac{\mathbbm{1}\left[\sum^{a-1}_{b = 0}\widehat{w}_t(b\mid X_t) \leq \xi_t \leq \sum^{a}_{b = 0}\widehat{w}_t(b\mid X_t)\right]\big(Y^a_t - \widehat{\mu}^a_t(X_t)\big)}{\widehat{w}_t(a\mid  X_t)} + \widehat{\mu}^a_t(X_t)\right\}\\
        &\ \ \  - \left\{\frac{\mathbbm{1}\left[\sum^{a-1}_{b = 0}\widehat{w}(b\mid X_t) \leq \xi_t \leq \sum^{a}_{b = 0}\widehat{w}(b\mid X_t)\right]\big(Y^a_t - \mu^a(P)(X_t)\big)}{w^*(a\mid  X_t)} + \mu^a(P)(X_t)\right\}\Bigg)^2\mid X_t, \mathcal{F}_{t-1}\Bigg]\\
        &\xrightarrow{\mathrm{a.s.}} 0.
    \end{align}
    holds. Additionally, from $\widetilde{w}_t(a\mid  X_t) - {w}^*(a\mid  X_t) \xrightarrow{\mathrm{a.s.}} 0$, for any $\varepsilon > 0$, there exists $T(\varepsilon) > 0$ such that for any $t > T(\varepsilon)$,  $\left|\widetilde{w}_t(a\mid  x) - {w}^*(a\mid  x) \right| < \varepsilon$ holds for all $a\in[K]$ with probability one; that is, 
    \begin{align}
        &\mathbb{E}\Bigg[\pi^2(a\mid  X_t)\Bigg(\left\{\frac{\mathbbm{1}\left[\sum^{a-1}_{b = 0}\widehat{w}_t(b\mid X_t) \leq \xi_t \leq \sum^{a}_{b = 0}\widehat{w}_t(b\mid X_t)\right]\big(Y^a_t - \mu^a(P)(X_t)\big)}{\widehat{w}_t(a\mid  X_t)} + \mu^a(P)(X_t)\right\}\\
        &\ \ \  - \left\{\frac{\mathbbm{1}\left[\sum^{a-1}_{b = 0}w^*(b\mid X_t)  \leq \xi_t \leq \sum^{a}_{b = 0}w^*(b\mid X_t) \right]\big(Y^a_t - \mu^a(P)(X_t)\big)}{\widehat{w}_t(a\mid  X_t)} + \mu^a(P)(X_t)\right\}\Bigg)^2\mid X_t, \mathcal{F}_{t-1}\Bigg]\\
        &\leq \mathbb{E}\Bigg[\left\{\frac{\mathbbm{1}\left[\min\left\{\sum^{a-1}_{b = 0}\widehat{w}_t(b\mid X_t),\ \sum^{a-1}_{b = 0}w^*(b\mid X_t)\right\} \leq \xi_t \leq \max\left\{\sum^{a-1}_{b = 0}\widehat{w}_t(b\mid X_t),\ \sum^{a-1}_{b = 0}w^*(b\mid X_t)\right\} \right]}{\widehat{w}^2_t(a\mid  X_t)}\right\}\\
        &\ \ \ \ \ \ \ \ \ \ \ \ \ \ \ \ \ \ \ \ \ \ \ \ \ \ \ \ \ \ \ \ \ \ \ \ \ \ \ \ \ \ \ \ \ \ \ \ \ \ \ \ \ \ \ \ \ \ \ \ \ \ \ \ \ \ \ \ \ \ \ \ \ \ \ \ \ \ \ \ \ \ \ \ \ \ \ \ \ \ \ \ \ \ \ \ \times \pi^2(a\mid  X_t)\Big(Y^a_t - \mu^a(P)(X_t)\Big)^2\mid X_t, \mathcal{F}_{t-1}\Bigg]\\
         &+ \mathbb{E}\Bigg[\left\{\frac{\mathbbm{1}\left[\min\left\{\sum^{a}_{b = 0}\widehat{w}_t(b\mid X_t),\ \sum^{a}_{b = 0}w^*(b\mid X_t)\right\} \leq \xi_t \leq \max\left\{\sum^{a}_{b = 0}\widehat{w}_t(b\mid X_t),\ \sum^{a}_{b = 0}w^*(b\mid X_t)\right\} \right]}{\widehat{w}^2_t(a\mid  X_t)}\right\}\\
        &\ \ \ \ \ \ \ \ \ \ \ \ \ \ \ \ \ \ \ \ \ \ \ \ \ \ \ \ \ \ \ \ \ \ \ \ \ \ \ \ \ \ \ \ \ \ \ \ \ \ \ \ \ \ \ \ \ \ \ \ \ \ \ \ \ \ \ \ \ \ \ \ \ \ \ \ \ \ \ \ \ \ \ \ \ \ \ \ \ \ \ \ \ \ \ \ \times \pi^2(a\mid  X_t)\Big(Y^a_t - \mu^a(P)(X_t)\Big)^2\mid X_t, \mathcal{F}_{t-1}\Bigg]\\
        &\leq \mathbb{E}\left[\left\{\frac{\mathbbm{1}\left[\sum^{a-1}_{b = 0} \Big\{w^*(b\mid X_t) 
 - \varepsilon \Big\} \leq \xi_t \leq \sum^{a-1}_{b = 0} \Big\{w^*(b\mid X_t) 
 + \varepsilon \Big\} \right]}{\widehat{w}^2_t(a\mid  X_t)}\right\} \pi^2(a\mid  X_t)\Big(Y^a_t - \mu^a(P)(X_t)\Big)^2\mid X_t, \mathcal{F}_{t-1}\right]\\
         &+ \mathbb{E}\left[\left\{\frac{\mathbbm{1}\left[\sum^{a}_{b = 0} \Big\{w^*(b\mid X_t) 
 - \varepsilon \Big\} \leq \xi_t \leq \sum^{a}_{b = 0} \Big\{w^*(b\mid X_t) 
 + \varepsilon \Big\} \right]}{\widehat{w}^2_t(a\mid  X_t)}\right\} \pi^2(a\mid  X_t)\Big(Y^a_t - \mu^a(P)(X_t)\Big)^2\mid X_t, \mathcal{F}_{t-1}\right]\\
        &\leq \frac{2(a-1)\varepsilon}{\widehat{w}^2_t(a\mid  X_t)} \mathbb{E}\Bigg[\pi^2(a\mid  X_t)\Big(Y^a_t - \mu^a(P)(X_t)\Big)^2\mid \mathcal{F}_{t-1}\Bigg]+ \frac{2a\varepsilon}{\widehat{w}^2_t(a\mid  X_t)} \mathbb{E}\Bigg[\pi^2(a\mid  X_t)\Big(Y^a_t - \mu^a(P)(X_t)\Big)^2\mid X_t, \mathcal{F}_{t-1}\Bigg]\\
        &\leq C\varepsilon
    \end{align}
    holds with probability one, where $C > 0$ is a constant independent from $t$. This implies that for any $\varepsilon' > 0$, there exists $T'(\varepsilon') > 0$ such that for any $t > T'(\varepsilon')$, 
    \begin{align}
        &\Bigg|\mathbb{E}\Bigg[\pi^2(a\mid  X_t)\Bigg(\left\{\frac{\mathbbm{1}\left[\sum^{a-1}_{b = 0}\widehat{w}_t(b\mid X_t) \leq \xi_t \leq \sum^{a}_{b = 0}\widehat{w}_t(b\mid X_t)\right]\big(Y^a_t - \mu^a(P)(X_t)\big)}{\widehat{w}_t(a\mid  X_t)} + \mu^a(P)(X_t)\right\}\\
        &\ \ \ - \left\{\frac{\mathbbm{1}\left[\sum^{a-1}_{b = 0}w^*(b\mid X_t)  \leq \xi_t \leq \sum^{a}_{b = 0}w^*(b\mid X_t) \right]\big(Y^a_t - \mu^a(P)(X_t)\big)}{\widehat{w}_t(a\mid  X_t)} + \mu^a(P)(X_t)\right\}\Bigg)^2\mid X_t, \mathcal{F}_{t-1}\Bigg]\Bigg| < \varepsilon'
    \end{align}
    with probability one. We also have
    \begin{align}
    &\mathbb{E}\Bigg[\pi^2(a\mid  X_t)\Bigg(\left\{\frac{\mathbbm{1}\left[\sum^{a-1}_{b = 0}w^*(b\mid X_t) \leq \xi_t \leq \sum^{a}_{b = 0}w^*(b\mid X_t)\right]\big(Y^a_t - \mu^a(P)(X_t)\big)}{\widehat{w}_t(a\mid  X_t)} + \mu^a(P)(X_t)\right\}\\
        &\ \ \  - \left\{\frac{\mathbbm{1}\left[\sum^{a-1}_{b = 0}w^*(b\mid X_t)  \leq \xi_t \leq \sum^{a}_{b = 0}w^*(b\mid X_t) \right]\big(Y^a_t - \mu^a(P)(X_t)\big)}{w^*(a\mid X_t)} + \mu^a(P)(X_t)\right\}\Bigg)^2\mid X_t, \mathcal{F}_{t-1}\Bigg]\\
        &\xrightarrow{\mathrm{a.s.}} 0.
    \end{align}
    Thus, $\mathbb{E}[B^2_t\mid X_t, \mathcal{F}_{t-1}] \xrightarrow{\mathrm{a.s.}} 0$ holds. 
    
    Based on these results, from Chebyshev's inequality, $\frac{1}{\sqrt{T}}\sum^T_{t=1}B_t$ 
    converges to zero in probability. This is because for any $v > 0$, 
    \begin{align}
    \label{eq:chevyshev}
        &\mathbb{P}_P\left(\left|\frac{1}{\sqrt{T}}\sum^T_{t=1}B_t - \mathbb{E}\left[\frac{1}{\sqrt{T}}\sum^T_{t=1}B_t\right]\right| \geq v\right)= \mathbb{P}_P\left(\left|\frac{1}{\sqrt{T}}\sum^T_{t=1}B_t\right| \geq v\right) \leq \frac{\mathrm{Var}_P\left(\frac{1}{\sqrt{T}}\sum^T_{t=1}B_t\right)}{v^2}
    \end{align}

    Because $B_t$ is an MDS, the covariance between $B_t$ and $B_s$
    for $t\neq s$ is zero; that is, if $s < t$, $\mathrm{Cov}(B_t, B_s) = \mathbb{E}\left[B_tB_s\right] = \mathbb{E}\left[B_s\mathbb{E}\left[B_t\mid \mathcal{F}_{t-1}\right]\right] = 0$. 
    
    Therefore, we can show $\frac{1}{\sqrt{T}}\sum^T_{t=1}B_t \xrightarrow{\mathrm{p}} 0$ by showing 
    \begin{align}
    \label{eq:conv_var}
        \mathrm{Var}_P\left(\frac{1}{\sqrt{T}}\sum^T_{t=1}B_t\right) = \frac{1}{T}\sum^T_{t=1}\mathrm{Var}\left(B_t\right) \to 0,
    \end{align}
    as $T \to \infty$, where we used that the covariance between $B_t$ and $B_s$ for $t\neq s$ is zero. 
    
    To show \eqref{eq:conv_var}, we show
    \begin{align}
        \frac{1}{T}\sum^T_{t=1}\mathrm{Var}_P\left(B_t\mid X_t,  \mathcal{F}_{t-1}\right) =  \frac{1}{T}\sum^T_{t=1}\mathbb{E}_P[B^2_t\mid X_t, \mathcal{F}_{t-1}] \xrightarrow{\mathrm{p}} 0,
    \end{align}
    by using $\mathbb{E}[B^2_t\mid X_t, \mathcal{F}_{t-1}]\xrightarrow{\mathrm{a.s.}} 0$.

Let $u_t$ be $u_t =  \mathbb{E}_{P}\left[B^2_t\mid X_t, \mathcal{F}_{t-1}\right]$. Fix some positive $\epsilon > 0$ and $\delta > 0$. Almost-sure convergence of $u_t$ to zero as $t\to\infty$ implies that we can find a large enough $t(\epsilon)$ such that $|u_t| < \epsilon$ for all $t\geq t(\epsilon)$ with probability at least $1 - \delta$. Let $\mathcal{E}(\epsilon)$ denote the event in which this happens; that is, $\mathcal{E}(\epsilon) = \{ |u_t| < \epsilon\quad \forall\ t \geq t(\epsilon)\}$. Under this event, for $T > t(\epsilon)$, 
\begin{align*}
    \sum^T_{t=1}|u_t| \leq \sum^{t(\epsilon)}_{t=1} C + \sum^{T}_{t=t(\epsilon)n + 1} \epsilon = t(\epsilon) C + T\epsilon.  
\end{align*}
Therefore, we obtain
\begin{align*}
    \mathbb{P}\left( \frac{1}{T}\sum^T_{t=1}|u_t| > 2\epsilon \right)&= \mathbb{P}\left( \left\{\frac{1}{T}\sum^T_{t=1}|u_t| > 2\epsilon \right\}\cap \mathcal{E}(\epsilon)\right) + \mathbb{P}\left( \left\{\frac{1}{T}\sum^T_{t=1}|u_t| > 2\epsilon \right\}\cap \mathcal{E}^c(\epsilon)\right)\\
    &\leq \mathbb{P}\left( \frac{t(\epsilon)}{T} C + \epsilon > 2\epsilon \right) + \mathbb{P}\left(\mathcal{E}^c(\epsilon)\right)= \mathbb{P}\left( \frac{t(\epsilon)}{T} C > \epsilon \right) + \mathbb{P}\left(\mathcal{E}^c(\epsilon)\right).
\end{align*}
Letting $T\to \infty$, for arbitrarily small $\delta > 0$, , $\frac{1}{T}\sum^T_{t=1}\mathbb{E}_P[B^2_t\mid X_t, \mathcal{F}_{t-1}] \xrightarrow{\mathrm{p}} 0$ as $T\to\infty$ holds. 

Then, from the dominated convergence theorem and the boundedness of $B^2_t$, $\frac{1}{T}\sum^T_{t=1}\mathrm{Var}_P\left(B_t\right) \xrightarrow{\mathrm{p}} 0$ as $T\to\infty$ holds\footnote{Our proof for this part refers to the proof of Lemma~10 in \citet{hadad2019}.}. 

Therefore, from \eqref{eq:chevyshev}, $\frac{1}{\sqrt{T}}\sum^T_{t=1}B_t \xrightarrow{\mathrm{p}} 0$ holds, which implies
\begin{align}
    &\frac{1}{\sqrt{T}}\sum^T_{t=1}\pi(a\mid  X_t)\widehat{\Gamma}^a_t(Y^a_t, \xi_t, X_t)\\
 &= \frac{1}{\sqrt{T}}\sum^T_{t=1}\pi(a\mid  X_t)\left\{\frac{\mathbbm{1}\left[\sum^{a-1}_{b = 0}{w}^*(b\mid X_t) \leq \xi_t \leq \sum^{a}_{b = 0}{w}^*(a\mid X_t)\right]\big(Y^a_t - \mu^a(P)(X_t)\big)}{{w}^*(a\mid  X_t)} + \mu^a(P)(X_t)\right\}\\
 &\ \ \ \ \ \ + o_P(1)
\end{align}
as $T\to\infty$. 
\end{proof}

\section{Proof of Lemma~\ref{thm:iid_regret}}
\label{appdx:proof_upperbound}
This section provides the proof of Lemma~\ref{thm:iid_regret}. In Appendix~\ref{sec:prelim}, we introduce an upper bound of the Rademacher complexity shown by \citet{ZhouZhengyuan2018OMPL}. Then, in Appendix~\ref{appdx:proof_iid}, we prove a new lemma (Lemma~\ref{lem:iid_regret2}), which directly yields Lemma~\ref{thm:iid_regret}.

\subsection{Upper bound of the Rademacher complexity.}
\label{sec:prelim}
Let us define a policy class
\[\Pi^D \coloneqq \left\{h:\mathcal{X}\times \prod^K_{a=1}\mathbb{R}\to \mathbb{R} \mid h\big(x, (\Gamma^a)_{a\in[K]}\big) \coloneqq \sum_{a\in[K]}\Big( \pi_1(a\mid x) - \pi_2(a\mid x)\Big)\Gamma^a, \pi_1,\pi_2 \in \Pi\right\}\]
Then, let us define the Rademacher complexity as follows.
\begin{definition}
    Let $\{Z_t\}^T_{t=1}$ be a sequence of i.i.d. Rademacher random variables $Z_t \in \{-1, +1\}$: $\mathbb{P}[Z_t = +1] = \mathbb{P}[Z_t = -1] = \frac{1}{2}$.
    \begin{itemize}
        \item The empirical Rademacher complexity $\mathfrak{R}_T\left(\Pi^D; \big\{X_t, \Gamma^{*a}_t\big\}^T_{t=1}\right)$ of a function class $\Pi^D$
        is defined as
        \begin{align}
        &\mathfrak{R}_T\left(\Pi^D; \big\{X_t, \Gamma^{*a}_t\big\}^T_{t=1}\right)\\
        &= \mathbb{E}\left[\sup_{\pi_1, \pi_2 \in \Pi}\frac{1}{T}\left|\sum^T_{t=1} Z_t \sum_{a\in[K]} \Big(\pi_1(a\mid X_t) - \pi_2(a\mid X_t) \Big)\Gamma^{*a}_t \right| \mid \Big\{X_t, \Gamma^{*a}_t\Big\}^T_{t=1}\right],
    \end{align}
    where the expectation is taken with respect to $Z_1,\dots, Z_t$.
    \item The Rademacher complexity $\mathfrak{R}_T\left(\Pi^D\right)$ of the function class $\Pi^D$ is the expected value taken with respect to the observations $\Big\{X_t, \Gamma^{*a}_t\Big\}^T_{t=1}$ of the empirical Rademacher complexity: 
    \[\mathfrak{R}_T\left(\Pi^D\right) \coloneqq \mathbb{E}\left[\mathfrak{R}_T\left(\Pi^D; \big\{X_t, \Gamma^{*a}_t\big\}^T_{t=1}\right)\right].\] 
 \end{itemize}
\end{definition}

Our proof starts from the following result about the Rademacher complexity shown by \citet{ZhouZhengyuan2018OMPL}.
\begin{lemma}[From the inequality above (C.14) and the inequality in (C.19) of \citet{ZhouZhengyuan2018OMPL}]
\label{lem:iid_regret}
Suppose that Assumptions~\ref{asm:consistent} and \ref{assump:1} hold. Then, for any $\zeta$ and $P \in\mathcal{P}_\zeta$, 
    \begin{align}
       \mathfrak{R}_T\left(\Pi^D\right) \leq  13.6\sqrt{2}\left\{\kappa(\Pi)+8\right\}\frac{\Upsilon_{*}}{\sqrt{T}} +  + O\left(\frac{\sqrt{U_T}}{T^{3/4}}\right).
    \end{align}
holds, where
    \begin{align*}
    \Upsilon_{*} &=\mathbb{E}\left[\sqrt{\sup_{\pi_1,\pi_2 \in \Pi}\sum_{t\in[T]}\left\{ \sum_{a\in[K]}\Big(\pi_1(a\mid Z_i)- \pi_2(a\mid Z_i)\Big)\Gamma^{*a}(Y_t, \xi_t, X_t) \right\}^2}\right]. 
\end{align*}
\end{lemma}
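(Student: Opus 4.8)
The plan is to deduce the bound on $\mathfrak{R}_T(\Pi^D)$ from the Rademacher-complexity estimate of \citet{ZhouZhengyuan2018OMPL} --- their inequality above (C.14) together with inequality (C.19) --- which is already a bound of exactly this shape for a difference class of policies evaluated at bounded, i.i.d.\ scores. Three hypotheses must be checked so that their bound transfers. (i) After Lemma~\ref{lem:equiv}, the empirical object governing $\mathfrak{R}_T(\Pi^D)$ involves only the scores $\Gamma^{*a}_t = \Gamma^{*a}(Y^a_t,\xi_t,X_t)$, which are measurable functions of the i.i.d.\ triples $(X_t,Y_t,\xi_t)$; hence the i.i.d.\ assumption of \citet{ZhouZhengyuan2018OMPL} holds for this reduced problem --- this is precisely what the reduction of Section~\ref{sec:asymp_equiv} was for. (ii) The scores are bounded, $|\Gamma^{*a}_{t,c}| \lesssim U_T/\min_a w^*(a\mid X_t)$, so their boundedness hypothesis holds with an envelope of order $U_T$, and the $O(\sqrt{U_T}/T^{3/4})$ remainder is the price of a growing envelope, coming from balancing the finest chaining scale against it. (iii) Assumption~\ref{assump:1} makes the entropy integral $\kappa(\Pi)$ finite, since $\log\mathbb{N}_H(\epsilon^2,\Pi) \lesssim \epsilon^{-2\omega}$ with $2\omega < 1$ is integrable on $(0,1)$; the numerical constants $13.6\sqrt2$ and the additive $8$ are inherited without change.

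For a self-contained rendering of what underlies (C.14), I would condition on the data $\{(X_t,Y^a_t,\xi_t)\}_{t=1}^T$, write $g_{\pi_1,\pi_2}(t) = \sum_{a\in[K]}\bigl(\pi_1(a\mid X_t) - \pi_2(a\mid X_t)\bigr)\Gamma^{*a}_t$, and note that, conditionally, $(\pi_1,\pi_2) \mapsto \sum_t Z_t g_{\pi_1,\pi_2}(t)$ is sub-Gaussian for the Euclidean metric $\|g_{\pi_1,\pi_2} - g_{\pi_1',\pi_2'}\|_2$ on $\mathbb{R}^T$; Dudley's entropy integral then bounds $\mathbb{E}_Z \sup_{\pi_1,\pi_2}\bigl|\sum_t Z_t g_{\pi_1,\pi_2}(t)\bigr|$ by a universal constant times $\int_0^{2\widehat\Upsilon}\sqrt{\log N(u,\mathcal G,\|\cdot\|_2)}\,du$, where $\mathcal G = \{g_{\pi_1,\pi_2}\}$ and $\widehat\Upsilon = \sqrt{\sup_{\pi_1,\pi_2}\sum_t g_{\pi_1,\pi_2}(t)^2}$ (so $\Upsilon_* = \mathbb{E}[\widehat\Upsilon]$). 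The crux is to convert the Euclidean covering number of $\mathcal G$ into a Hamming covering number of $\Pi$: since the policies are deterministic, changing $\pi_1$ (or $\pi_2$) perturbs $g(t)$ only at rounds where the recommendation changes, and by at most $\gamma_t := \sum_a |\Gamma^{*a}_t|$; thus, with the weighted Hamming pseudometric $\rho(\pi,\pi')^2 = \bigl(\sum_t \gamma_t^2\bigr)^{-1}\sum_{t:\pi(X_t)\ne\pi'(X_t)}\gamma_t^2$, an $\eta$-net of $(\Pi,\rho)$ induces a $2\sqrt{\sum_t\gamma_t^2}\,\eta$-net of $\mathcal G$ (with squared cardinality, $\mathcal G$ being pair-indexed), and $N(\eta,\Pi,\rho) \le \mathbb{N}_H(\eta^2,\Pi)$ by the reweighting argument of \citet{ZhouZhengyuan2018OMPL} (a weighted empirical measure is approximated by a uniform one on a duplicated multiset). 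Substituting $u = 2\sqrt{\sum_t\gamma_t^2}\,\epsilon$ converts the Dudley integral into $2\sqrt2\,\sqrt{\sum_t\gamma_t^2}\,\kappa(\Pi)$ plus an $O\bigl(\sqrt{\sum_t\gamma_t^2}\bigr)$ coarse-scale term; running the chaining instead with the empirical $L^2$-diameter $\widehat\Upsilon$ of $\mathcal G$ (as in \citet{ZhouZhengyuan2018OMPL}) keeps the envelope at $\widehat\Upsilon$, and normalizing and taking expectations over the data then yields a bound of the claimed form, whose leading term is $\kappa(\Pi)$ times $\Upsilon_* = \mathbb{E}[\widehat\Upsilon]$ with the stated numerical constant $13.6\sqrt2$ and the additive $8$ absorbing the coarsest scales.

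The main obstacle is thus not the chaining itself --- which is routine once the covering-number-to-Hamming reduction is in place --- but the bookkeeping needed to push the bounded-score argument of \citet{ZhouZhengyuan2018OMPL} through in our setting: carrying the numerical constants $13.6\sqrt2$ and $8$ faithfully through the chaining, and re-running their finest-scale analysis with an envelope that grows like $U_T$ so that the residual is exactly $O(\sqrt{U_T}/T^{3/4})$, which, with $U_T = T^\alpha$ and $\alpha < 1/2$ as fixed in Lemma~\ref{asm:conv_pol}, is $o(1/\sqrt T)$ and hence negligible against the leading term.
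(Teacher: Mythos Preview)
The paper does not actually prove Lemma~\ref{lem:iid_regret}: it is stated as a quotation of the inequalities above (C.14) and in (C.19) of \citet{ZhouZhengyuan2018OMPL}, with no argument supplied beyond the citation. Your proposal therefore goes well beyond the paper by sketching the underlying Dudley-chaining mechanism (sub-Gaussian increments, covering-number-to-Hamming reduction via a weighted empirical measure, finest-scale truncation yielding the $O(\sqrt{U_T}/T^{3/4})$ remainder). That sketch is a reasonable reconstruction of the \citet{ZhouZhengyuan2018OMPL} argument and is broadly correct.

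One inaccuracy worth flagging: your point (i) overstates the role of Lemma~\ref{lem:equiv}. The Rademacher complexity $\mathfrak{R}_T(\Pi^D)$ is \emph{defined} directly in terms of the oracle scores $\Gamma^{*a}_t = \Gamma^{*a}(Y^a_t,\xi_t,X_t)$ (see the definition immediately preceding Lemma~\ref{lem:iid_regret}), which are already measurable functions of the i.i.d.\ triples $(Y^a_t,\xi_t,X_t)$; no reduction via Lemma~\ref{lem:equiv} is needed for the i.i.d.\ hypothesis to hold here. Lemma~\ref{lem:equiv} enters only later, in the proof of Lemma~\ref{lem:iid_regret2}, to pass from the adaptively estimated $\widehat\Delta$ to the oracle $\widetilde\Delta$. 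So for Lemma~\ref{lem:iid_regret} itself, the bound of \citet{ZhouZhengyuan2018OMPL} applies without any preliminary asymptotic-equivalence step, and the paper treats it as such.
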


Following this upper bound, \citet{ZhouZhengyuan2018OMPL} applies the Jensen inequality to bound $\Upsilon_{*}$ by 
\begin{align*}
    \Upsilon_{*} \leq \sqrt{\mathbb{E}\left[\sup_{\pi_1,\pi_2 \in \Pi}\left\{ \sum_{a\in[K]}\Big(\pi_1(a\mid Z_i)- \pi_2(a\mid Z_i)\Big)\Gamma^{*a}(Y_t, \xi_t, X_t) \right\}^2\right]}.
\end{align*}
Then, they apply the Talagrand inequality to bound the regret as
\[\Upsilon_{*} \leq \sqrt{\sup_{\pi_1,\pi_2 \in \Pi}\mathbb{E}\left[\left\{ \sum_{a\in[K]}\Big(\pi_1(a\mid Z_i)- \pi_2(a\mid Z_i)\Big)\Gamma^{*a}(Y_t, \xi_t, X_t) \right\}^2\right]}.\]

However, the use of the Jensen inequality yields a loose upper bound, which results in a mismatch between the upper bound and our derived lower bound. Therefore, in our proof, we consider bounding $\Upsilon_{*}$ without using Jensen's inequality.

\subsection{Proof of Lemma~\ref{thm:iid_regret}}
\label{appdx:proof_iid}
Based on this proof strategy, we show the following lemma.
\begin{lemma}[From the equation above (C.14) in \citet{ZhouZhengyuan2018OMPL}]
\label{lem:iid_regret2}
Suppose that Assumptions~\ref{asm:consistent} and \ref{assump:1} hold. Then, for each $P \in\mathcal{P}$, 
\begin{align}
    \mathbb{E}_P\left[R_c(P)\left(\widehat{\pi}^{\mathrm{PLAS}}_T\right)\right] \leq  54.4\sqrt{2}\left\{\kappa(\Pi)+8\right\}\frac{\Upsilon_{*}}{\sqrt{T}} +  + O\left(\frac{\sqrt{U_T}}{T^{3/4}}\right).
\end{align}
holds, where $\Upsilon_{*}$ is defined in Lemma~\ref{lem:iid_regret}.
\end{lemma}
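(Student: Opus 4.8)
The plan is to run the standard empirical-risk-minimization oracle inequality of \citet{ZhouZhengyuan2018OMPL}, but routed through the i.i.d.\ surrogate objective supplied by Lemma~\ref{lem:equiv}, so that their i.i.d.\ symmetrization and chaining arguments apply verbatim. Write $\widetilde{Q}_T(\pi)\coloneqq\frac1T\sum_{t=1}^T\sum_{a\in[K]}\pi(a\mid X_t)\Gamma^{*a}_{t,c}$ for the oracle empirical policy value built from the \emph{true} nuisances $w^{*},\mu^a_c$ and the clipped scores; since $(Y^1_t,\dots,Y^K_t,X_t,\xi_t)_{t}$ are i.i.d.\ and $\Gamma^{*a}_{t,c}$ depends on the $t$-th draw only through deterministic maps, the vectors $(\Gamma^{*a}_{t,c})_{a\in[K]}$ are i.i.d.\ across $t$, so $\mathbb{E}[\widetilde{Q}_T(\pi)]=Q_c(P)(\pi)$. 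Because $\widehat{\pi}^{\mathrm{PLAS}}_T$ maximizes $\widehat{Q}_T$, we have $\widehat{Q}_T(\pi^{*}_c)-\widehat{Q}_T(\widehat{\pi}^{\mathrm{PLAS}}_T)\le 0$, whence
\begin{align*}
R_c(P)(\widehat{\pi}^{\mathrm{PLAS}}_T) &\le \big[Q_c(P)(\pi^{*}_c)-\widehat{Q}_T(\pi^{*}_c)\big]-\big[Q_c(P)(\widehat{\pi}^{\mathrm{PLAS}}_T)-\widehat{Q}_T(\widehat{\pi}^{\mathrm{PLAS}}_T)\big].
\end{align*}
Invoking the uniform-in-$\pi$ version of Lemma~\ref{lem:equiv}, i.e.\ $\sup_{\pi\in\Pi}\big|\sqrt T\widehat{Q}_T(\pi)-\sqrt T\widetilde{Q}_T(\pi)\big|=o_P(1)$, the right-hand side becomes $\sup_{\pi_1,\pi_2\in\Pi}\big\{[Q_c(P)(\pi_1)-\widetilde{Q}_T(\pi_1)]-[Q_c(P)(\pi_2)-\widetilde{Q}_T(\pi_2)]\big\}+o_P(1/\sqrt T)$, a one-sided uniform deviation of an i.i.d.\ empirical process over the difference class $\Pi^D$.

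Next I would take $\mathbb{E}_P$ and symmetrize over the i.i.d.\ summands (now legitimate, since $\widetilde{Q}_T$ carries no adaptivity): the uniform deviation is at most $4\,\mathbb{E}_P[\mathfrak{R}_T(\Pi^D)]$, which is exactly the estimate recorded in the equation just above (C.14) of \citet{ZhouZhengyuan2018OMPL}, the constant $4$ coming from the doubling in the oracle inequality together with the symmetrization constant. Hence $\mathbb{E}_P[R_c(P)(\widehat{\pi}^{\mathrm{PLAS}}_T)]\le 4\,\mathbb{E}_P[\mathfrak{R}_T(\Pi^D)]+o(1/\sqrt T)$. Plugging in Lemma~\ref{lem:iid_regret}, which under Assumptions~\ref{asm:consistent} and \ref{assump:1} gives $\mathfrak{R}_T(\Pi^D)\le 13.6\sqrt2\{\kappa(\Pi)+8\}\,\Upsilon_{*}/\sqrt T+O(\sqrt{U_T}/T^{3/4})$, and using $U_T\ge 1$ to swallow the $o(1/\sqrt T)$ into the remainder, yields $\mathbb{E}_P[R_c(P)(\widehat{\pi}^{\mathrm{PLAS}}_T)]\le 54.4\sqrt2\{\kappa(\Pi)+8\}\,\Upsilon_{*}/\sqrt T+O(\sqrt{U_T}/T^{3/4})$ since $54.4=4\cdot 13.6$, which is the claim.

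There is no genuinely hard step \emph{inside} this lemma; the weight is carried by the two lemmas it quotes, and the task is to splice them together correctly, which requires two checks. First, Lemma~\ref{lem:equiv} is stated at $\widehat{\pi}^{\mathrm{PLAS}}_T$, but its proof (the martingale-difference bound on $B_t$) uses only $\pi(a\mid x)\in[0,1]$ and the a.s.\ uniform consistency of $\widehat{w}_T,\widehat{\mu}_T$ from Assumption~\ref{asm:consistent}, and the bound it produces on $\mathbb{E}[B_t^2\mid X_t,\mathcal{F}_{t-1}]$ is uniform in $\pi$; hence the equivalence extends uniformly over $\Pi$ and in particular holds at the deterministic $\pi^{*}_c$, as used above. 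Second, $\widetilde{Q}_T$ genuinely involves the \emph{clipped} scores $\Gamma^{*a}_{t,c}$ (bounded by $O(U_T)$, which is precisely what makes the chaining bound and its $O(\sqrt{U_T}/T^{3/4})$ residual valid), whereas the stated $\Upsilon_{*}$ is written with the unclipped $\Gamma^{*a}_t$; since $c_T(\cdot)\to\mathrm{id}$ as $U_T\to\infty$ the two versions of $\Upsilon$ differ by $o(1)$ — the same interchange performed in Section~\ref{eq:asymp_opt} — and the gap is absorbed into the remainder. I expect this last point, controlling the clipped-versus-unclipped and the $\widehat{Q}_T$-versus-$\widetilde{Q}_T$ remainders simultaneously so that the leading $\Upsilon_{*}/\sqrt T$ term is left clean, to be the only real obstacle; the reason the lemma is isolated at all is to keep Zhou et al.'s chaining bound in the sharp pre-Jensen form $\Upsilon_{*}=\mathbb{E}\big[\sqrt{\sup_{\pi_1,\pi_2}\sum_t(\cdot)^2}\big]$ rather than the Jensen-loosened $\sqrt{\sup_{\pi_1,\pi_2}\mathbb{E}[(\cdot)^2]}$ they eventually pass to.
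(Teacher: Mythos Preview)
Your proposal is correct and follows essentially the same route as the paper: the optimality of $\widehat{\pi}^{\mathrm{PLAS}}_T$ gives the oracle inequality, Lemma~\ref{lem:equiv} replaces $\widehat{Q}_T$ by the i.i.d.\ surrogate $\widetilde{Q}_T$, symmetrization yields the factor $4\,\mathfrak{R}_T(\Pi^D)$, and Lemma~\ref{lem:iid_regret} finishes. You are in fact more careful than the paper on two points the paper glosses over---the need for a \emph{uniform-in-$\pi$} version of Lemma~\ref{lem:equiv} (the paper simply writes ``From Lemma~\ref{lem:equiv}'' to bound $\sup_{\pi_1,\pi_2}|\widehat{\Delta}-\widetilde{\Delta}|$), and the clipped/unclipped distinction in $\Upsilon_{*}$---and your justifications for both are sound.
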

Lemma~\ref{thm:iid_regret} directly follows from Lemma~\ref{lem:iid_regret2} by multiplying both sides by $\sqrt{T}$ and letting $T \to \infty$. 

\begin{proof}[Proof of Lemma~\ref{lem:iid_regret2}]
Recall that
\begin{align}
    \widehat{Q}_T(\pi) \coloneqq  \frac{1}{T}\sum^T_{t=1}\sum_{a\in[K]}\pi(a\mid  X_t)\widehat{\Gamma}^a_t,
\end{align}
Let us define the following quantities:
\begin{align}
    \widetilde{Q}_T\left(\pi\right) &\coloneqq  \frac{1}{T}\sum^T_{t=1}\sum_{a\in[K]}\pi(a\mid  X_t)\Gamma^{*a}_c(Y^a_t, \xi_t, X_t),\\
    \widehat{\Delta}\big(\pi_1, \pi_2\big) &\coloneqq \widehat{Q}_T\left(\pi_1\right) - \widehat{Q}_T\left(\pi_2\right),\\
    \widetilde{\Delta}\big(\pi_1, \pi_2\big) &\coloneqq \widetilde{Q}_T\left(\pi_1\right) - \widetilde{Q}_T\left(\pi_2\right),\\
    \Delta\big(\pi_1, \pi_2\big) &\coloneqq Q_c(P)\left(\pi_1\right) - Q_c(P)\left(\pi_2\right).
\end{align}

Then, we have
\begin{align}
    &R_c(P)\left(\widehat{\pi}^{\mathrm{PLAS}}_T\right)\\
    &= Q_c(P)\left(\pi^{*}_c\right) -  Q_c(P)\left(\widehat{\pi}^{\mathrm{PLAS}}\right)\\
    &= \widehat{Q}_T\left(\pi^{*}_c\right) - \widehat{Q}_T\left(\pi^{\mathrm{PLAS}}\right) + \Delta\left(\pi^{*}_c, \pi^{\mathrm{PLAS}}\right) - \widehat{\Delta}\left(\pi^{*}_c, \pi^{\mathrm{PLAS}}\right)\\
    &\leq \left|\Delta\left(\pi^{*}_c, \pi^{\mathrm{PLAS}}\right) - \widehat{\Delta}\left(\pi^{*}_c, \pi^{\mathrm{PLAS}}\right)\right|\\
    &\leq \sup_{\pi_1, \pi_2 \in \Pi}\left|\Delta\big(\pi_1, \pi_2\big) - \widehat{\Delta}\big(\pi_1, \pi_2\big)\right|\\
    &\leq \sup_{\pi_1, \pi_2 \in \Pi}\left|\widehat{\Delta}\big(\pi_1, \pi_2\big) - \widehat{\Delta}\big(\pi_1, \pi_2\big)\right| + \sup_{\pi_1, \pi_2 \in \Pi}\left|\Delta\big(\pi_1, \pi_2\big) - \widetilde{\Delta}\big(\pi_1, \pi_2\big)\right|.
\end{align}
From Lemma~\ref{lem:equiv}, we have
\begin{align}
    &\sqrt{T}R_c(P)\leq \sup_{\pi_1, \pi_2 \in \Pi}\left|\Delta\big(\pi_1, \pi_2\big) - \widetilde{\Delta}\big(\pi_1, \pi_2\big)\right|+ o_P(1).
\end{align}

Therefore, we obtain 
\begin{align}
    \mathbb{E}_P\left[R_c(P)\left(\widehat{\pi}^{\mathrm{PLAS}}_T\right)\right] \leq \mathbb{E}_P\left[\sup_{\pi_1, \pi_2 \in \Pi}\left|\Delta\big(\pi_1, \pi_2\big) - \widetilde{\Delta}\big(\pi_1, \pi_2\big)\right|\right],
\end{align}
where we used that $\sup_{\pi_1, \pi_2 \in \Pi}\left|\widehat{\Delta}\big(\pi_1, \pi_2\big) - \widehat{\Delta}\big(\pi_1, \pi_2\big)\right|$ is a bounded random variable.

Then, from the property of the Rademacher complexity \citep{BartlettMendelson2003} and Lemma~\ref{lem:iid_regret}, we have
     \begin{align}
       &\mathbb{E}_P\left[\sup_{\pi_1, \pi_2 \in \Pi}\left|\Delta\big(\pi_1, \pi_2\big) - \widetilde{\Delta}\big(\pi_1, \pi_2\big)\right|\right]\\
       &\leq 4\mathfrak{R}_T(\Pi)\\
       &\leq 54.4\sqrt{2}\left\{\kappa(\Pi)+8\right\}\frac{\Upsilon_{*}}{\sqrt{T}} +  + O\left(\frac{\sqrt{U_T}}{T^{3/4}}\right).
    \end{align}
This completes the proof.
\end{proof}

\end{document}